\newcommand{\abs}[1]{\left\vert#1\right\vert}
\newtheorem{theorem}{Theorem}
\newtheorem{lemma}{Lemma}
\newtheorem{proposition}{Proposition}
\newtheorem{cor}{Corollary}
\newtheorem{definition}{Definition}
\newtheorem{remark}{Remark}
\tikzstyle{lemma} = [rectangle, 
\tikzstyle{lemma2} = [rectangle, 
\tikzstyle{arrow} = [thick,->,>=stealth]
\title{First and Second Order Approximations to Stochastic Gradient Descent Methods with Momentum Terms}
\author{Eric Lu \\ \href{mailto:lu.eric96@gmail.com}{lu.eric96@gmail.com}}
\date{April 2025}
\begin{document}

\maketitle
\begin{abstract}
    Stochastic Gradient Descent (SGD) methods see many uses in optimization problems. Modifications to the algorithm, such as momentum-based SGD methods have been known to produce better results in certain cases. Much of this, however, is due to empirical information rather than rigorous proof. While the dynamics of gradient descent methods can be studied through continuous approximations, existing works only cover scenarios with constant learning rates or SGD without momentum terms. We present approximation results under weak assumptions for SGD that allow learning rates and momentum parameters to vary with respect to time. 
\end{abstract}
\section{Introduction}
In this paper, we will describe an approach to approximating stochastic gradient descent methods by continuous processes. Such approximations have been commonly studied, such as in \cite{li2017stochasticmodifiedequationsadaptive}, in order to study the dynamics of discrete processes through stochastic calculus. The most basic form of a gradient descent method can be described as follows. To minimize an objective function of the form
$$
\frac{1}{\abs{\Gamma}} \sum_{i \in \Gamma} f_i ,
$$
with the family of functions $f_{i}:\mathbb{R}^d \rightarrow \mathbb{R}$, we define the iterative process of Stochastic Gradient Descent (SGD) by
\begin{equation}\label{SGDGeneralform}
\chi_{n+1} = \chi_n - \eta \nabla f_{\gamma(n)} (\chi_n) \text{,}
\end{equation}
where $(\gamma(n))_{n \in \mathbb{N}_0}$ is a sequence of i.i.d random variables taking values in $\Gamma$, and $\eta$ is a constant, known as the \textit{learning rate}. A common way to achieve faster convergence is to use so-called momentum-based gradient descent methods. For instance, given a sequence of \textit{momentum parameters} $\zeta_n$, the Heavy Ball Method has iterates
$$
\chi_{n+1} = \chi_n - \eta \nabla f_{\gamma(n)} (\chi_n) + \zeta_n ( \chi_n - \chi_{n-1} ).
$$ 
As proven in literature such as \cite{garrigos2024handbookconvergencetheoremsstochastic}, momentum-based SGD in a convex setting achieves a convergence rate of $\mathcal{O}(1/k^2)$, an improvement over SGD's rate of $\mathcal{O}(1/k)$. While in this paper we allow the momentum parameter to follow a schedule, it is typical to set momentum to a constant, around $0.9$ (see \cite{liu2020improvedanalysisstochasticgradient}).
Another technique for optimizing SGD, as considered in \cite{Bottou1991StochasticGL}, is to use non-constant learning rates by instead defining a sequence $(\eta_n)_{n \in \mathbb{N}_0}$ called a \textit{learning rate schedule}. Diffusion approximations to momentum-based SGD have been explored in \cite{liu2024odemethodstochasticapproximation} \nocite{liu2024odemethodstochasticapproximation} with constant small learning rates. Furthermore, weak approximation results to classic SGD with decreasing learning rates are proven in \cite{ankirchner2021stochasticapproximation}.     Approximation to SGD is covered in Theorem \ref{sgdodeapproxmain}, the result of which we extend the to learning rate schedules where entries $\eta_n$ take the form of diagonal matrices
$$ \eta_n = \begin{pmatrix}
\eta^1_n & & & \\
& \eta^2_n & & \\
& & \ddots &  \\
 & &  & \eta^d_n 
\end{pmatrix} .
$$
By studying the discrete process $\mathring{\chi} := (\chi_{n+1}, \chi_{n})$, we develop a diffusion approximation to momentum-based SGD. This technique builds upon previous results to establish sufficient growth and smoothness conditions under which we can approximate momentum-based SGD with decreasing learning rates by continuous processes. In addition to proving weak convergence, we also quantify the error expansion of this approximation relative to the maximal learning rate.\\~\\
Section \ref{sectionprelim} will cover basic results from stochastic calculus. This paper's main result will be stated for both Ordinary Differential Equation and Stochastic Differential Equation (SDE) approximations in Section \ref{sectionmaintheorem}, with the two cases proven separately in the subsequent Sections \ref{sectionodeproof} and \ref{sectionsdeproof}.
\section{Preliminaries}\label{sectionprelim}
This section contains notation and the forms of basic lemmas that will be necessary for the proof of the main theorem, as well as key assumptions under which we will operate. 
\subsection{Spaces}
A function $f \in C^\ell ( [0,T] \times \mathbb{R}^d)$ is said to be in $\text{Lip}^\ell$ if there exists a constant $L$ such that for any $x_1,x_2 \in \mathbb{R}^d$ and any multi index $\alpha = \{ \alpha_1,...,\alpha_d \}$ of nonnegative integers with $\abs{\alpha} := \alpha_1 + ... + \alpha_d \leq \ell$,
$$
\abs{\partial_{\alpha} f(x_1) - \partial_{\alpha} f(x_2)} \leq L \abs{x_1 - x_2} .
$$
For $\kappa \in \mathbb{N}$, and $D \subset \mathbb{R}^d$, we let $G_\kappa(D)$ be the space of functions $g: \mathbb{R}^d \rightarrow \mathbb{R}$ for which there exists a constant $C > 0$ such that
$$
\abs{g(x)} \leq C ( 1 + \|x \|^\kappa ), \quad \forall x \in D \text{.}
$$
We define the norm $\| \cdot \|_{G_\kappa}$ as
$$
\| g \|_{G_\kappa} := \inf_{C} \{ \abs{g(x)} \leq C(1 + \|x\|^\kappa); \quad \forall x \in D\}.
$$
Furthermore, let the space $G^\ell(D)$ contain all functions $g$ such that $g \in C^\ell (D)$ and every partial derivative of $g$, up to order $\ell$, is in $G_\kappa(D)$ for some $\kappa > 1$. Now let $(\Omega,\mathcal{F}_\Omega,\mathbb{P})$ with $\Omega \subset \mathbb{R}^d$ be a complete probability space. The vector space $L_p (\Omega)$ consists random variables with finite $p$-th absolute moments. That is, for $\xi \in L_p(\Omega)$ we have
$$
\| \xi \|_p := (\mathbb{E}\abs{\xi}^p)^{1/p} < \infty.
$$
We define a stochastic process $X_t$ as a map from $[0,T] \rightarrow L_p(\Omega)$ $t \in [0,T]$, $X_t \in L_p  (\Omega)$. Let $I$ be a set and $X = (X_t^i)_{i \in I,t \geq 0}$ be an $I$-indexed family of continuous-time stochastic processes. Given $p \in [1,\infty)$, we define the norms
\begin{align*}
    \| X \|_{p,t} &= \sup_{i \in I} \biggl( \mathbb{E} \int_0^t \abs{X_s^i}^p \, ds \biggr)^{1/p}, \\
    \| X^* \|_{p,t} &= \sup_{i \in I} \biggl( \mathbb{E} \sup_{s \in [0,t]} \abs{X_s^i}^p \biggr)^{1/p},
\end{align*}
and similarly define for  a discrete-time  stochastic process $X$ the norm
\begin{align*}
    \| X^* \|_{p,n} &= \sup_{i \in I} \biggl( \mathbb{E} \sup_{j \in [0,n]} \abs{X_j^i}^p \biggr)^{1/p}.
\end{align*}
\subsubsection{Feynman-Kac and Itô's Lemma}
Regarding proofs for this subsection, see \cite{karatzas1991brownian}.
\begin{theorem}[Feynman-Kac]\label{feynmankac}
Let $f: \mathbb{R}^d \rightarrow  \mathbb{R}$, $k:\mathbb{R}^d \rightarrow [0,\infty]$, and $g:  [0,T] \times \mathbb{R}^d \rightarrow \mathbb{R}$ be continuous functions satisfying, for a constant $\lambda \geq 1$,
$$
f(x) \in G_\lambda(\mathbb{R}^d) \quad \textnormal{or} \quad f(x) \geq 0; \quad \forall x \in \mathbb{R}^d
$$
and
$$
g(x) \in G_\lambda(\mathbb{R}^d) ; \quad 0 \leq t \leq T  \quad \textnormal{or} \quad g(t,x) \geq 0; \quad \forall x \in \mathbb{R}^d \,, 0 \leq t \leq T \text{.}
$$

 Suppose a continuous function $y: [0,T] \times \mathbb{R}^d \rightarrow \mathbb{R}$ is the expectation of a stochastic process
\begin{align}\label{feynmankacstochastic}
y(x,t) &= \mathbb{E}\biggl[ f(W_{T-t}) \, \exp\biggl\{ 
 -\int_0^{T-t} k(W_s) \, ds \biggr\} \notag \\  
 &\hspace{5mm} + g(t+\theta,W_\theta) \exp\biggl\{ -
 \int_0^{\theta}  k(W_s) \, ds \biggr\} \biggr] 
\end{align}
Then $u$ satisfies the partial differential equation
\begin{align}\label{feynmankacpde}
    - \frac{\partial y}{\partial t} + ky  = \frac{1}{2} \Delta y  + g &&\text{on }[0,T]  \times  \mathbb{R}^d  \\
    y(T,x) = f(x)  &&\text{for }x \in \mathbb{R^d}\text{.}
\end{align}
\end{theorem}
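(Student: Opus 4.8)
The plan is to prove the stated correspondence by combining the smoothing afforded by Brownian motion with Itô's lemma. Throughout, write $W$ for a $d$-dimensional Brownian motion started at $x$ and abbreviate the discount factor by $D_s := \exp\{-\int_0^s k(W_u)\,du\}$, which takes values in $[0,1]$ because $k \geq 0$. The terminal condition $y(T,x)=f(x)$ is immediate: setting $t=T$ in \eqref{feynmankacstochastic} collapses the horizon $T-t$ to $0$, so that $W_{T-t}=W_0=x$, the discount equals $1$, and the source contribution vanishes, leaving $y(T,x)=\mathbb{E}[f(x)]=f(x)$. The substance of the theorem is therefore the parabolic equation on $[0,T)\times\mathbb{R}^d$.

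First I would establish that $y\in C^{1,2}([0,T)\times\mathbb{R}^d)$, which is the prerequisite for applying Itô's lemma and which I expect to be the main obstacle. The idea is to condition on the Brownian path and write the leading term of \eqref{feynmankacstochastic} against the Gaussian transition density of $W_{T-t}$; since this heat kernel is smooth and decays faster than any polynomial, the hypotheses $f\in G_\lambda$ and $g\in G_\lambda$ permit differentiation under the expectation by dominated convergence, the polynomial growth being controlled by the Gaussian tails uniformly on time intervals bounded away from $T$. The path-dependent discount and the source term are treated the same way after conditioning, or alternatively one appeals to parabolic regularity theory for the operator $\tfrac{1}{2}\Delta - k$. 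This simultaneously yields the smoothness and quantitative control of the derivatives in the norms $\|\cdot\|_{G_\kappa}$.

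Next I would exhibit the martingale that encodes the representation. For $s\in[0,T-t]$ set $M_s := D_s\, y(t+s, W_s) + \int_0^s D_r\, g(t+r, W_r)\,dr$. Applying the Markov property of $W$ to the conditional form of \eqref{feynmankacstochastic} gives $\mathbb{E}[M_{T-t}\mid \mathcal{F}_s]=M_s$, so $M$ is a martingale; here $M_0=y(t,x)$ while $M_{T-t}=D_{T-t}f(W_{T-t})+\int_0^{T-t}D_r g\,dr$ recovers precisely the integrand of the representation.

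Finally I would apply Itô's lemma to the smooth map $(s,w)\mapsto D_s\, y(t+s,w)$, using $dD_s=-k(W_s)D_s\,ds$ together with the chain rule in the time variable, to obtain
\begin{equation*}
dM_s = D_s\Bigl(\tfrac{\partial y}{\partial t}+\tfrac12\Delta y - k\,y + g\Bigr)(t+s,W_s)\,ds + D_s\,\nabla y(t+s,W_s)\cdot dW_s.
\end{equation*}
The stochastic integral is a (local) martingale, which the growth bounds from the regularity step upgrade to a true martingale; since $M$ itself is a martingale, its continuous finite-variation part must vanish, forcing $\int_0^s D_r(\tfrac{\partial y}{\partial t}+\tfrac12\Delta y - k y + g)(t+r,W_r)\,dr\equiv 0$. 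Differentiating at $s=0$ and using that the starting point $(t,x)$ is arbitrary gives $\tfrac{\partial y}{\partial t}+\tfrac12\Delta y - k y + g=0$ pointwise, which is exactly \eqref{feynmankacpde} after rearrangement. The crux of the whole argument is the regularity step, since every later manipulation presupposes that $y$ is smooth enough for Itô's lemma; once differentiation under the expectation is justified, the martingale identification and the annihilation of the drift are routine.
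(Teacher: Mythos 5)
The paper does not actually prove this theorem; it defers entirely to Karatzas and Shreve, so your attempt is necessarily a different route. Your skeleton --- identify $M_s = D_s\,y(t+s,W_s) + \int_0^s D_r\,g(t+r,W_r)\,dr$ as a martingale via the Markov property, apply It\^o's lemma to the product, and conclude that the finite-variation part of a continuous martingale must vanish --- is exactly the standard argument for this direction of Feynman--Kac, and your handling of the terminal condition, the martingale identification, and the final drift-annihilation step is correct. You also correctly read the source term as integrated over $\theta\in[0,T-t]$, which the paper's statement leaves garbled.

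The genuine gap is the regularity step, and it is not merely a technical loose end. First, the mechanism you propose does not apply as described: the factor $\exp\{-\int_0^{T-t}k(W_s)\,ds\}$ is a functional of the entire path, so the leading term of the representation is not a convolution of $f$ against the Gaussian transition density of $W_{T-t}$, and ``conditioning on the Brownian path'' does not reduce it to one; differentiating under the expectation therefore cannot be justified by Gaussian tail bounds alone. Second, and more seriously, under the stated hypotheses ($k$ and $g$ merely continuous) the conclusion $y\in C^{1,2}$ can genuinely fail: already for $k\equiv 0$, the heat equation with a continuous but non-Dini-continuous source $g$ admits no solution that is $C^2$ in space, so no argument can extract the smoothness you need from these assumptions. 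The standard treatment (Friedman; Karatzas--Shreve, the discussion surrounding their Theorem 4.4.2) either assumes $k$ and $g$ locally H\"older continuous and obtains $y\in C^{1,2}$ from parabolic Schauder estimates, or proves the reverse implication, namely that a given $C^{1,2}$ solution of the Cauchy problem with polynomial growth admits the stochastic representation. If you strengthen the hypotheses to local H\"older continuity and replace the heat-kernel convolution argument with an appeal to parabolic regularity for $\partial_t+\tfrac{1}{2}\Delta-k$, the remainder of your proof goes through as written.
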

\begin{definition}[Quadratic Forms]\label{quadform}
For a square-integrable martingale $X$, we define the \textit{quadratic form} $\langle X \rangle$ to be the increasing process $A$ such that for a right-continuous martingale $M_t$,
$$
X^2_t = M_t + A_t \text{.}
$$
Given two square-integrable martingales $X$ and $Y$, we define the bilinear form $\langle \cdot , \cdot \rangle$ as
$$
\langle X,Y \rangle := \frac{1}{4} [ \langle X+Y \rangle_t - \langle X - Y \rangle_t ].
$$
\end{definition}
\begin{lemma}[Itô's Lemma]\label{itolemma}
Let $f:\mathbb{R}^d \rightarrow \mathbb{R}^d$ be a twice differentiable function and $X$ be of the form
$$
X_t = X_0 + M_t + B_t 
$$
where $M_t$ is a vector of martingales and $B_t$ is a vector of bounded variation processes. Then
\begin{align*}
f(t,X_t) &= f(0,X_0) +\int_0^t \frac{\partial}{\partial t} f(s,X_s)  ds + \sum_{i=1}^d \int_0^t \frac{\partial}{\partial x_i} f(s,X_s) dB_s^{(i)} \\
&\hspace{5mm} + \sum_{i=1}^d \int_0^t \frac{\partial}{\partial x_i} f(s,X_s) dM_s^{(i)} \\
&\hspace{5mm} + \frac{1}{2} \sum_{i=1}^d \sum_{j=1}^d \int_0^t \frac{\partial^2}{\partial x_i\partial x_j} f(s,X_s) d \langle M^{(i)},M^{(j)} \rangle_s .
\end{align*}
\end{lemma}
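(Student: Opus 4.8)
The plan is to follow the classical route: expand $f$ to second order along a vanishing sequence of partitions and pass to the limit, after first localizing so that every object in sight is bounded. I note that, since the stated formula carries no jump corrections, $M$ and $B$ are implicitly continuous, which I will assume. First I would fix a partition $0 = t_0 < t_1 < \cdots < t_N = t$ of $[0,t]$ and write the left-hand side as the telescoping sum
$$
f(t,X_t) - f(0,X_0) = \sum_{k=0}^{N-1} \bigl[ f(t_{k+1}, X_{t_{k+1}}) - f(t_k, X_{t_k}) \bigr].
$$
Before taking limits I would reduce to a tractable setting by introducing stopping times that simultaneously bound $\|X\|$, the quadratic variations $\langle M^{(i)} \rangle$, and the total variation of $B$; I prove the identity for the stopped processes and then remove the localization at the end. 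On the stopped interval $X$ stays in a compact set, so the continuous second derivatives of $f$ are uniformly continuous and bounded there, which is exactly what the remainder estimates below require.

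For each summand I would Taylor expand to second order in space and first order in time, writing $\Delta t_k = t_{k+1}-t_k$ and $\Delta X^{(i)}_k = X^{(i)}_{t_{k+1}} - X^{(i)}_{t_k}$:
\begin{align*}
f(t_{k+1}, X_{t_{k+1}}) - f(t_k, X_{t_k})
&= \frac{\partial f}{\partial t}(t_k, X_{t_k}) \, \Delta t_k
+ \sum_{i=1}^d \frac{\partial f}{\partial x_i}(t_k, X_{t_k}) \, \Delta X^{(i)}_k \\
&\quad + \frac{1}{2} \sum_{i,j=1}^d \frac{\partial^2 f}{\partial x_i \partial x_j}(t_k, X_{t_k}) \, \Delta X^{(i)}_k \Delta X^{(j)}_k + R_k,
\end{align*}
where $R_k$ collects the Taylor remainder. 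Summing over $k$, the first-order contributions converge, as the mesh tends to zero, to $\int_0^t \partial_t f \, ds$ and to $\sum_i \int_0^t \partial_{x_i} f \, dX^{(i)}_s$; splitting $dX^{(i)} = dM^{(i)} + dB^{(i)}$ through $X = X_0 + M + B$ then yields the two first-order integrals in the statement, the one against $M$ being an It\^o integral and the one against $B$ a pathwise Lebesgue--Stieltjes integral.

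The crux is the limit of the quadratic terms $\Delta X^{(i)}_k \Delta X^{(j)}_k$. Expanding $\Delta X^{(i)}_k = \Delta M^{(i)}_k + \Delta B^{(i)}_k$ produces three families of products. The terms $\Delta B^{(i)}_k \Delta B^{(j)}_k$ and the cross terms $\Delta M^{(i)}_k \Delta B^{(j)}_k$ are controlled by the total variation of $B$: each such sum is bounded by $\bigl(\sup_k|\Delta B_k| + \sup_k|\Delta M_k|\bigr)$ times the (bounded) total variation, hence vanishes as the mesh shrinks, by continuity. The surviving products $\sum_k \Delta M^{(i)}_k \Delta M^{(j)}_k$ converge in probability to $\langle M^{(i)}, M^{(j)} \rangle_t$, the covariation obtained by polarization of $\langle M^{(i)} \pm M^{(j)} \rangle$ as in Definition \ref{quadform}. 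Running this at the level of partial sums up to each intermediate time identifies the limit with $\frac{1}{2}\sum_{i,j}\int_0^t \partial_{x_i x_j} f \, d\langle M^{(i)}, M^{(j)}\rangle_s$.

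I expect the main obstacle to be the joint control of the quadratic-variation limit and the remainder $\sum_k R_k$: one must show $\sum_k R_k \to 0$ in probability, which relies on the uniform continuity of the second derivatives on the compact set furnished by localization, so that $|R_k|$ is dominated by $\omega(\mathrm{mesh}) \cdot |\Delta X^{(i)}_k \Delta X^{(j)}_k|$ for a modulus of continuity $\omega$, and the surviving sum of $|\Delta X^{(i)}_k \Delta X^{(j)}_k|$ stays bounded by the converging quadratic variation. Once the bounded (localized) case is established, I would remove the stopping times by letting the localization parameter tend to infinity, using that the stopped identities agree on growing time intervals and invoking dominated convergence, which extends the formula to the general continuous $M$ and $B$.
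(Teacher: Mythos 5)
Your outline is the standard localization-plus-Taylor-expansion proof of It\^o's formula for continuous semimartingales, and it is correct: the paper itself offers no proof of Lemma \ref{itolemma}, deferring instead to Karatzas--Shreve, where essentially this exact argument (telescoping over partitions, stopping times to bound $\|X\|$, $\langle M^{(i)}\rangle$, and the variation of $B$, vanishing of the $\Delta B\,\Delta B$ and $\Delta M\,\Delta B$ sums, convergence of $\sum_k \Delta M^{(i)}_k \Delta M^{(j)}_k$ to the covariation, and control of the remainder by a modulus of continuity) is carried out. Your added remark that the absence of jump terms forces continuity of $M$ and $B$, and the implicit correction that $f$ should be $C^{1,2}$ on $[0,T]\times\mathbb{R}^d$ rather than merely ``twice differentiable'' on $\mathbb{R}^d$, are both appropriate readings of the (somewhat loosely stated) lemma.
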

\begin{cor}\label{itoibp}[Integration by parts]
Let $X_t$, $Y_t$ be martingales of the form
$$
X_t = X_0 + M_t + B_t
$$
and 
$$
Y_t = Y_0 + N_t + C_t,
$$
where $M_t,N_t$ are local martingales and $B_t,C_t$ are bounded variation processes.
$$
X_tY_t  = X_0Y_0 + \int_0^t X_s dY_s + \int_0^t Y_s dX_s + \langle M,N \rangle_s 
$$
\end{cor}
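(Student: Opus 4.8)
The plan is to derive this product rule as a direct consequence of Itô's Lemma (Lemma \ref{itolemma}) applied to the smooth scalar function $\phi(z_1,z_2) = z_1 z_2$ evaluated along the two-dimensional semimartingale $Z_t = (X_t,Y_t)$. First I would record the relevant partial derivatives of $\phi$: one has $\partial_{z_1}\phi = z_2$ and $\partial_{z_2}\phi = z_1$, the mixed second derivative $\partial_{z_1}\partial_{z_2}\phi = 1$, while all remaining second derivatives together with $\partial_t\phi$ vanish. Since $\phi$ is polynomial it is certainly twice differentiable, so Lemma \ref{itolemma} applies with $d=2$, martingale part $(M_t,N_t)$, and bounded-variation part $(B_t,C_t)$.

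Substituting into the conclusion of Lemma \ref{itolemma}, the first-order terms become $\int_0^t Y_s\,dB_s + \int_0^t X_s\,dC_s$ from the bounded-variation integrals together with $\int_0^t Y_s\,dM_s + \int_0^t X_s\,dN_s$ from the martingale integrals, since $\partial_{z_1}\phi$ contributes the factor $Y_s$ and $\partial_{z_2}\phi$ the factor $X_s$. The entire second-order contribution collapses to a single cross term: because the pure second derivatives $\partial_{z_1}^2\phi$ and $\partial_{z_2}^2\phi$ are zero, only the $(1,2)$ and $(2,1)$ indices survive, giving $\tfrac{1}{2}\bigl(\partial_{z_1}\partial_{z_2}\phi + \partial_{z_2}\partial_{z_1}\phi\bigr)\int_0^t d\langle M,N\rangle_s = \langle M,N\rangle_t$. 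The key algebraic step is then to recombine the two families of integrals: since $dX_s = dM_s + dB_s$ and $dY_s = dN_s + dC_s$ and the constants $X_0,Y_0$ contribute nothing to the integrals, the grouped terms are exactly $\int_0^t Y_s\,dX_s$ and $\int_0^t X_s\,dY_s$, which yields the stated identity.

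I expect the only genuine technical point to be that Lemma \ref{itolemma} is phrased for true martingales, whereas here $M_t$ and $N_t$ are merely \emph{local} martingales, so a standard localization argument is needed. Concretely, one introduces a localizing sequence of stopping times $(\tau_k)$ with $\tau_k \uparrow \infty$ under which the stopped processes $M^{\tau_k}$ and $N^{\tau_k}$ are square-integrable martingales, applies the computation above on each stopped interval, and then passes to the limit $k \to \infty$ using continuity of both sides and the convergence of the stochastic integrals and cross-variations against the stopped processes to those against $M$ and $N$. This localization, rather than the product-rule computation itself, is where the care lies; the differentiation of $\phi$ and the identification of the cross-variation term are entirely routine.
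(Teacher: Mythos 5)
Your proposal is correct and is exactly the intended derivation: the paper gives no proof of this corollary at all (the whole subsection defers to Karatzas--Shreve), but the statement is presented precisely as a corollary of Lemma \ref{itolemma}, and applying that lemma to $\phi(z_1,z_2)=z_1z_2$ along the $2$-dimensional semimartingale $(X_t,Y_t)$, then regrouping $dM_s+dB_s=dX_s$ and $dN_s+dC_s=dY_s$ and collapsing the second-order sum to $\langle M,N\rangle_t$, is the standard route. Your closing remark about localization is well taken, since the paper's Itô statement is phrased for martingales while the corollary's hypotheses only assume $M,N$ are local martingales; note also that the paper's displayed conclusion has the minor typo $\langle M,N\rangle_s$ where $\langle M,N\rangle_t$ is meant, which your computation implicitly corrects.
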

\begin{definition}[Weak solutions]
In this paper, when we refer to a solution of an SDE, we consider weak solutions. Suppose $X_t$ satisfies a stochastic differential equation
$$
dX_t  = b(t,X_t) dt + \sigma(t,X_t) dW_t
$$
A \textit{weak solution} consists of a probability space $(\Omega,\mathcal{F},\mathbb{P})$, as well as a filtration $\{\mathcal{F}$, and processes $(X,W)$ where
\begin{enumerate}
    \item  $X = \{ X_t , \mathcal{F}_t; 0 \leq t<\infty \}$ is a continuous adapted $\mathbb{R}^d$-valued process,
    \item $W = \{ W_t,\mathcal{F}_t; 0 \leq t < \infty\}$ is an $r$-dimensional Brownian motion,
    \item $\mathbb{P}[ \int_0^t \{ b_i (s,X_s) + \sigma_{ij}^2 (s,X_s)\} ds < \infty ] = 1]$ for every $1 \leq i < d$, $1 \leq j \leq r$, and $0 \leq t < \infty$, and
    \item $X_t = X_0 + \inf_0^t b_i(s,X_s) d + \int_0^t \sigma(s,X_s) dW_s ; \quad 0 \leq t < \infty$.
\end{enumerate}
\end{definition}
\begin{lemma}\label{feynmankacv2}
Let $g \in G^\infty(\mathbb{R}^d)$ and suppose $u$ satisfies the partial differential equation
$$
\begin{cases}
             \frac{\partial}{\partial t} u(t,x) + \mathcal{L} u(t,x) = 0  & (t,x) \in  [0,T) \times \mathbb{R}^d, \\
             u(T,x) = g(x) & x \in \mathbb{R}^d .
       \end{cases}
$$
where the differential operator
$$
\mathcal{L} := \sum_{i=1}^d \mu_i \partial_i +  \sum_{1 \leq i,j \leq d} ( \sigma \sigma^T )_{ij} \partial^2_{ij}
$$
is the generator of the SDE $(\ref{generalsde})$.
Then 
$$
u = \mathbb{E} g (X_T (x)) ,
$$
where $X_t(x)$ is a process defined as the solution of an SDE of the form
\begin{equation}\label{generalsde}
dX_t  = \mu(X_t) dt + \sigma(X_t) dW_t
\end{equation}
with initial condition $X_0 = x$, where $\mu$ and $\sigma$ are Lipschitz.
\end{lemma}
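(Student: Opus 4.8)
The plan is to apply Itô's lemma to the process $s \mapsto u(s, X_s)$ and use the PDE to annihilate its drift. Fix $(t,x) \in [0,T) \times \mathbb{R}^d$ and let $(X_s)_{s \geq t}$ solve (\ref{generalsde}) with $X_t = x$; since $\mu$ and $\sigma$ are Lipschitz, this solution exists and is pathwise unique, and by standard Gronwall-type estimates together with the Burkholder--Davis--Gundy inequality it obeys the moment bounds $\mathbb{E}\sup_{t \leq s \leq T}\|X_s\|^p \leq C(1 + \|x\|^p) < \infty$ for every $p \geq 1$. Writing $X_s = x + \int_t^s \mu(X_r)\,dr + \int_t^s \sigma(X_r)\,dW_r$, the finite-variation part is $B_s = \int_t^s \mu(X_r)\,dr$ and the martingale part is $M_s = \int_t^s \sigma(X_r)\,dW_r$, whose cross-variation is $d\langle M^{(i)}, M^{(j)}\rangle_s = (\sigma\sigma^T)_{ij}(X_s)\,ds$.

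Applying Lemma \ref{itolemma} to $u(s, X_s)$, the first- and second-order spatial terms it produces assemble precisely into $\mathcal{L}u$ (the generator of (\ref{generalsde})), so that
\begin{align*}
u(s, X_s) = u(t, x) &+ \int_t^s \Bigl( \tfrac{\partial}{\partial t} u + \mathcal{L}u \Bigr)(r, X_r)\,dr \\
&+ \sum_{i=1}^d \int_t^s \frac{\partial}{\partial x_i} u(r, X_r)\,\bigl(\sigma(X_r)\,dW_r\bigr)_i .
\end{align*}
By the PDE hypothesis the integrand $\frac{\partial}{\partial t} u + \mathcal{L}u$ vanishes identically on $[0,T) \times \mathbb{R}^d$, so the finite-variation term disappears and $u(s, X_s)$ collapses to a stochastic integral; in particular it is a local martingale on $[t,T]$.

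The next step is to upgrade this local martingale to a genuine martingale, so that taking expectations kills the stochastic integral. This is where the growth hypotheses enter: since $g \in G^\infty(\mathbb{R}^d)$ and the coefficients are Lipschitz, $u$ and its spatial derivatives have polynomial growth, so $\frac{\partial}{\partial x_i}u(r,\cdot) \in G_\kappa(\mathbb{R}^d)$ for some $\kappa$, while $\sigma$ grows at most linearly. Combined with the moment bounds on $X$, this yields $\mathbb{E}\int_t^T \bigl\| \nabla u(r, X_r)^T\sigma(X_r) \bigr\|^2\,dr < \infty$, which is exactly the square-integrability condition making the stochastic integral a true martingale of zero mean. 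Evaluating at $s = T$, using the terminal condition $u(T, X_T) = g(X_T)$, and taking expectations then gives $u(t,x) = \mathbb{E}\,g(X_T)$ with $X_t = x$; because the SDE is autonomous, the law of $X_T$ started from $x$ at time $t$ is that of the process run for time $T-t$ from $x$, which is the form recorded in the statement as $\mathbb{E}\,g(X_T(x))$.

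I expect the martingale upgrade in the third step to be the main obstacle. Verifying the $L^2$ bound on the integrand is routine \emph{once} one knows $\nabla u$ grows polynomially, but establishing that growth is delicate, since the lemma presents $u$ merely as \emph{a} solution of the PDE and supplies no a priori bound on it. I would resolve this by the usual two-pronged argument: first observe that the same Itô computation applied to the difference of two polynomially-bounded solutions forces uniqueness within that class, and then verify directly that the candidate $(t,x) \mapsto \mathbb{E}\,g(X_T^{t,x})$ is itself a solution of the required growth (its regularity and polynomial bounds following from $g \in G^\infty$ and the moment estimates for $X$), so that $u$ must coincide with the probabilistic representation.
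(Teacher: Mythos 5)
Your proposal follows essentially the same route as the paper's proof: apply It\^o's lemma to $s \mapsto u(s,X_s)$, use the PDE to annihilate the drift so only the stochastic integral remains, and take expectations to obtain $u(t,x) = \mathbb{E}\,g(X_T)$. The paper's version simply takes the expectation without justifying that the stochastic integral is a true (rather than local) martingale or that $u$ and $\nabla u$ have the polynomial growth needed for that, so the extra care you devote to those two points fills in details the paper omits rather than constituting a different method.
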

\begin{proof}
    By applying Lemma \ref{itolemma} from time $t$ to $T$, we have
\begin{align*}
u(t,X_t) &= u(T,X_T) - \int_t^T \nabla u(s,X_s) \sigma(X_s) dW_s - \int_t^T \frac{\partial}{\partial t}u(s,X_s) ds - \int_t^T \mathcal{L} u(s,X_s) ds \\
&= u(0,X_0) - \int_t^T \nabla u(s,X_s) \sigma(X_s) dW_s .
\end{align*}
Then by taking expectation, we conclude
\begin{align*}
    u(t,X_t) &= \mathbb{E}[u(T,X_T)] = \mathbb{E}g(X_T(t,x)).
\end{align*}
\end{proof}
\subsection{Assumptions}
Here, we will in addition work under several assumed conditions:
\begin{enumerate}
    \item [(A1)] For $h \in (0,1)$, we consider learning rates $\eta^h_n$ that have the form
    $$
    \eta_n^h = \begin{pmatrix}
hu^1_{nh} & & & \\
& hu^2_{nh} & & \\
& & \ddots &  \\
 & &  & hu^d_{nh}
 \end{pmatrix} ,
    $$
    where $u^i_m$ is a family of functions in $C^\infty$ such that each $u$ is constant or strictly decreasing, and $\abs{u_t} \in (0,1] \quad \forall t \in [0,\infty)$. We can now write the generalized stochastic gradient descent equation as
\begin{equation}\label{modifiedlrsgd}
    \chi^h_{n+1} = \chi^h_n + \eta^h_n H_{\gamma (n)} (\chi_n^h) ,\quad \chi_0 = x \text{,}
\end{equation} and momentum-based SGD as 
\begin{equation}\label{modifiedlrmomentumsgd}
    \chi^h_{n+1} = \chi^h_n + \eta^h_n H_{\gamma (n)} (\chi_n^h) + \zeta_n ( \chi_n - \chi_{n-1} ), \quad (\chi_1, \chi_0) = x_1,x_0 \text{,}
\end{equation}
where for functions $v \in C^\infty$, 
$$
\zeta_n^h = \begin{pmatrix}
v^1_{nh} & & & \\
& v^2_{nh} & & \\
& & \ddots &  \\
 & &  & v^d_{nh}
 \end{pmatrix}
$$
    \item [(A2)] For any $r \in \Gamma$, $H_r (x) \in G_1(\mathbb{R}^d)$
    \item [(A3)] Define 
    $$
    \bar{H} = \mathbb{E}_{\gamma}H = \sum_{i \in \Gamma} \mathbb{P}[\gamma = i] H_i ,
    $$ 
    and $\Sigma = \mathbb{E}_\gamma(H_{\gamma(0)} - \bar{H})(H_{\gamma(0)} - \bar{H})^T$.\\
    $\bar{H}$ and $\sqrt{\Sigma}$ are Lipschitz continuous and all partial derivatives up to order $2$ are bounded.
\end{enumerate}
\section{Main Theorem}\label{sectionmaintheorem}
Let $\chi_n$ be the process described by (\ref{modifiedlrsgd}) for $n \geq 2$. 
We assume that learning rates are taken from the set
$$
\mathcal{H} := \{ h \in (0,1): T/h \in \mathcal{N} \}
$$
for a given time horizon $T >0$. The statement of the main result is as follows: 
\begin{theorem}\label{sgdmomentumapproxmain}
Assume $(A2),(A3)$, and that $\mathcal{\chi}$ has constant learning  rate and momentum parameter $\mathring{\eta}$ and $\mathring{\zeta}$, respectively. Define $j : \mathbb{N} \times \mathbb{R}^{2d} \rightarrow \mathbb{R}$,
$$
j^{(n)}(x) = f(x_1,...,x_d) + \sum_{i=1}^d \biggl[ (1+\mathring{\zeta}_n^i)\frac{x_i^2}{2 \mathring{\eta}_n^i} + \mathring{\zeta}_n^i \frac{x_{d+i}^2}{2\mathring{\eta}_n^i} - \frac{\mathring{\zeta}_n^i x_i}{\mathring{\eta}_n^i}  x_{d+i}\biggr]
$$
Then let $J^{(n)}_{\gamma(n)}(\chi_n) := \nabla j^{(n)}_{\gamma(n)} (\chi_n)$ and $\bar{J}_t$ is a continuous process such that for $t \in \mathbb{N}$, we have $\bar{J}_t = \mathbb{E}[J^{(t)}_{\gamma(t)}]$. Let $X$ be the solution of the ODE
$$
dX_t = U_t \bar{J}(X_t) \, dt
$$
with initial condition $X_0 = (x_1,x_0)$ for $x_1,x_0 \in \mathbb{R}^d$, where
$$U_t = \begin{pmatrix}
h\mathring{u}^1_n & & & & & \\
& \ddots & & & &\\
& & h\mathring{u}^d_n & & & \\
 & & & -h\mathring{u}^1_n/\mathring{v}_n ^1& &  \\
 & & & & \ddots & \\
 & & & & & -h\mathring{u}^d_n/\mathring{v}^d_n
\end{pmatrix} .$$
Then for all $g \in C^\infty (\mathbb{R}^d)$,
$$
\mathbb{E} g  (\chi_{T/h}^h) - g(X_T) = h \int_0^T  \phi(t) dt + \mathcal{O}(h^2),
$$
where $y_t(x) = g(X_T^t (x))$, and
$$
\phi_t (x) = \frac{1}{2} \textnormal{tr}[ \nabla^2 y_t (x) ((U_t \bar{J})(U_t \bar{J})^T)(x) ] + \partial_t \nabla y_t (x)^T(U_t \bar{J})(x) + \frac{1}{2} \partial_t^2 y_t(x) \text{.}
$$
Now suppose instead that $X$ is the solution of the SDE
$$
dX_t^h = u_t \bar{H}(X_t^h) dt + u_t \sqrt{h \Sigma (X_t^h) } dW_t
$$
with initial condition $X_0^h = (x_1,x_0)$ for $x_1,x_0 \in \mathbb{R}^d$, where
$$U_t = \begin{pmatrix}
h\mathring{u}^1_n & & & & & \\
& \ddots & & & &\\
& & h\mathring{u}^d_n & & & \\
 & & & -h\mathring{u}^1_n/\mathring{v}_n ^1& &  \\
 & & & & \ddots & \\
 & & & & & -h\mathring{u}^d_n/\mathring{v}^d_n
\end{pmatrix}  .$$
Then for all $g \in C^\infty (\mathbb{R}^d)$,
$$
\mathbb{E} g  (\chi_{T/h}^h) - g(X_T) = h \int_0^T  \varphi_t^g dt + \mathcal{O}(h^2),
$$
where $y_t(x) := \mathbb{E}[g(X_T^{h,t} (x))]$, and
$$
\varphi_t^h (x) = \frac{1}{2} \textnormal{tr}[ \nabla^2 y_t^h (x) ((U_t \bar{J})(U_t \bar{J})^T)(x) ] + \partial_t \nabla y_t^h (x)^T(U_t \bar{J})(x) + \frac{1}{2} \partial_t^2 y_t^h(x) \text{.}
$$ 
\end{theorem}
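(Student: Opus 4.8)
The plan is to realize the momentum recursion as an ordinary (matrix–learning-rate) SGD recursion on the doubled state space $\mathbb{R}^{2d}$ and then to run a backward-Kolmogorov/telescoping argument in the spirit of the base approximation result, Theorem \ref{sgdodeapproxmain}. First I would make the reduction precise. Passing to the doubled process $\mathring{\chi}:=(\chi_{n+1},\chi_n)\in\mathbb{R}^{2d}$ turns the second-order recursion (\ref{modifiedlrmomentumsgd}) into a first-order one, and a direct differentiation of $j^{(n)}$ shows that $\nabla j^{(n)}$ carries exactly the momentum dynamics: the quadratic and cross terms produce the affine combinations $(1+\mathring{\zeta}^i)x_i-\mathring{\zeta}^i x_{d+i}$ in the first block and $\tfrac{\mathring{\zeta}^i}{\mathring{\eta}^i}(x_{d+i}-x_i)$ in the second, so that left-multiplication by $U_t$ reproduces $\chi_{n+1}=\chi_n+\eta_n H_{\gamma(n)}+\zeta_n(\chi_n-\chi_{n-1})$ together with the bookkeeping identity for the shifted coordinate. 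The \emph{crucial} point here is the bookkeeping of powers of $h$: the coefficients of $j^{(n)}$ scale like $1/\mathring{\eta}^i=1/(h\mathring{u}^i)$, while the entries of $U_t$ scale like $h$, and the two cancel, leaving an effective drift $U_t\bar J$ with increments of the correct order $\mathcal{O}(h)$. I would record this cancellation as a lemma, since every later remainder estimate depends on the effective coefficients being controlled uniformly as $h\to 0$.

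Second, I would set up the continuous comparison object. Let $y$ be the value function of the limiting flow: for the ODE, $y_t(x)=g(X_T^t(x))$ solves the first-order transport equation $\partial_t y+(U_t\bar J)^{T}\nabla y=0$ with $y_T=g$; for the SDE, $y_t^h(x)=\mathbb{E}[g(X_T^{h,t}(x))]$ solves the backward Kolmogorov equation $\partial_t y^h+\mathcal{L}_t y^h=0$, $y_T^h=g$, which is precisely the Feynman–Kac representation furnished by Lemma \ref{feynmankacv2} with generator $\mathcal{L}_t$ built from the drift $U_t\bar J$ and the $h$-scaled diffusion $h\,u_t^2\Sigma$. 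Using the Lipschitz and bounded-derivative hypotheses $(A3)$ transported to the augmented system — where one must check that $\bar J$ and $\sqrt{\Sigma}$ remain Lipschitz with bounded derivatives after augmentation, the only new contributions being affine — I would establish the growth and smoothness bounds $y_t,\,\partial_t y_t,\,\nabla^k y_t\in G_\kappa$ needed to legitimize the Taylor expansions below.

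Third comes the analytic engine. Since $y_T=g$ and $y_0(\mathring{\chi}_0)=\mathbb{E}[g(X_T)]$, the global weak error telescopes,
$$
\mathbb{E}g(\chi_{T/h}^h)-\mathbb{E}g(X_T)=\sum_{n=0}^{T/h-1}\Big(\mathbb{E}\,y_{(n+1)h}(\mathring{\chi}_{n+1})-\mathbb{E}\,y_{nh}(\mathring{\chi}_n)\Big),
$$
and each summand is analyzed by a joint Taylor expansion in time (from $nh$ to $nh+h$) and in space (from $\mathring{\chi}_n$ to $\mathring{\chi}_{n+1}$), after conditioning on $\mathring{\chi}_n$ and inserting the one-step moment structure of the augmented SGD: a conditional mean matching $U_t\bar J$ and a conditional covariance of order $h^2 u_t^2\Sigma$. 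The terms of order $h$ cancel by construction — this is exactly the statement that the transport operator (resp. $\mathcal{L}_t$) annihilates $y$ — so that each summand equals $h^2\phi_{nh}(\mathring{\chi}_n)+\mathcal{O}(h^3)$ with $\phi$ as stated; summing the $T/h$ terms turns the leading part into the Riemann sum $h\sum_n\phi_{nh}$, which converges to $h\int_0^T\phi_t\,dt$, while the accumulated $\mathcal{O}(h^3)$ remainders give $\mathcal{O}(h^2)$. The ODE and SDE cases differ only in which terms survive the order-$h$ cancellation: in the SDE case the generator already carries the $\tfrac12 h\,u_t^2\Sigma:\nabla^2$ term matching the discrete covariance, so the surviving correction $\varphi_t^h$ is expressed through $y^h$, whereas in the ODE case that term is absent but the leading correction has the same algebraic form in $(U_t\bar J)(U_t\bar J)^{T}$.

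The step I expect to be the main obstacle is making the remainder control \emph{uniform} in $h$. Two estimates must close together: (i) uniform-in-$h$ moment bounds $\|\mathring{\chi}^{*}\|_{p,T/h}<\infty$ for the augmented discrete process, which I would obtain from the linear-growth assumption $(A2)$ by a discrete Gr\"onwall argument, taking care that the number of steps $T/h$ itself grows as $h\to 0$; and (ii) the assertion that the $G_\kappa$-bounds on the derivatives of $y$, paired against these moments, keep each one-step $\mathcal{O}(h^3)$ remainder summable to $\mathcal{O}(h^2)$. Both hinge on the cancellation of powers of $h$ between $j^{(n)}$ and $U_t$ recorded in the first step: if the effective drift $U_t\bar J$ or the diffusion failed to stay of the correct order with derivatives bounded uniformly in $h$, neither the Taylor remainders nor the Gr\"onwall constants would remain controlled, and the expansion would fail to close.
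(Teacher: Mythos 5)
Your proposal follows essentially the same route as the paper: rewrite the momentum recursion as a first-order recursion on the doubled state space $\mathbb{R}^{2d}$ with $J^{(n)}=\nabla j^{(n)}$ and the matrix learning rate $U_t$, check that the augmented $\bar J$ and covariance inherit the linear-growth and Lipschitz properties from $(A2)$--$(A3)$ (the paper's Lemma \ref{varyingobjectivefunction}), and then invoke the telescoping Taylor/Feynman--Kac machinery of Theorems \ref{sgdodeapproxmain} and \ref{momentumsderesult}. Your explicit flagging of the cancellation between the $1/\mathring{\eta}$ scaling of $j^{(n)}$ and the $h$ scaling of $U_t$ is a worthwhile addition, but it does not change the argument's structure.
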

\begin{remark}
While the result above depends on a stochastic process with two inputs $(x_1,x_0)$, in practice, given $x_0$, we will obtain $x_1$ from a single step of standard SGD. Furthermore, in tackling Theorem \ref{sgdmomentumapproxmain}, we treat iterates as $2d$-dimensional to retain the values of two steps at a time. As an example, with one dimensional inputs, the process
$$
\chi_{n+1} = \chi_n - \eta_n f' (\chi_n) + \zeta_n ( \chi_n - \chi_{n-1} )
$$
becomes
$$
\chi_{n+1} = \begin{pmatrix}
    x_{n+1} \\
    x_n
\end{pmatrix} = \begin{pmatrix}
    x_n \\
    x_{n-1}
\end{pmatrix} + \begin{pmatrix}
     - \eta \nabla f'(x_n) + \zeta (x_n - x_{n-1}) \\
    x_n - x_{n-1}
\end{pmatrix} .
$$
If we wished to express this in the format given in (A1), we have
$$
x_{n+1}^h = \chi^h + \begin{pmatrix}
     \eta \\
    -\eta/\zeta
\end{pmatrix} J(\chi_n^h),
$$
where
$$
J(\chi_n^h) = \begin{pmatrix}
     - H(x_n) + \frac{\zeta}{\eta} (x_n - x_{n-1}) \\
    \frac{\zeta}{\eta}(x_n - x_{n-1})
\end{pmatrix},
$$
is the gradient of the objective function
$$
j(x,y) = f(x) + \zeta \frac{x^2}{2\eta} +\zeta \frac{y^2}{2\eta} - \frac{\zeta}{\eta} xy.
$$
\end{remark}
\section{ODE Approximation}\label{sectionodeproof}
The proof of the main theorem will be separated into two sections. Here we will prove the ODE case of Theorem \ref{sgdmomentumapproxmain}, which requires that we establish several growth properties. Firstly, we examine the standard case of SGD without a momentum parameter. This section follows the general approach of \cite{ankirchner2021stochasticapproximation}, with the key difference of multi-dimensionality of the learning rates $\eta$. Furthermore, we provide more precise estimates. The proof of the SGD case will be structured as illustrated in Figure \ref{figode}.

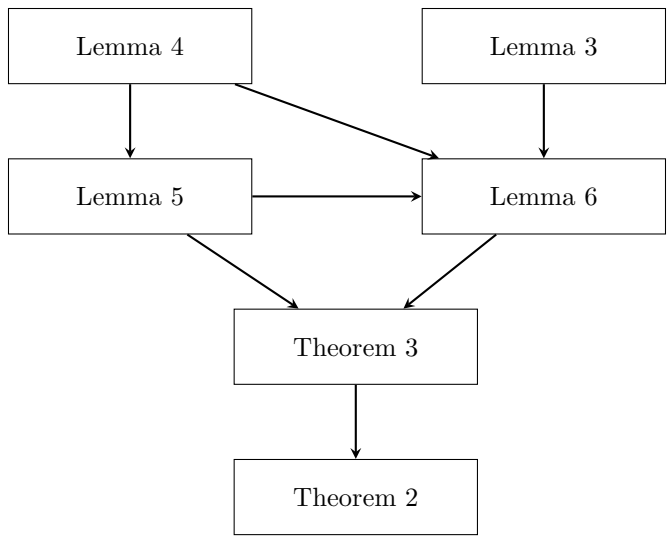
\begin{figure}[H]
\centering
\begin{tikzpicture}[node distance=2cm]

\node (lem3) [lemma, xshift=-3.5cm] {Lemma \ref{supbound}};
\node (lemy) [lemma, right of=lem3, xshift=3.5cm] {Lemma \ref{yfunctionspace}};
\node (lem4) [lemma, below of=lem3] {Lemma \ref{taylorapplication}};
\node (lem5) [lemma, below of=lemy] {Lemma \ref{differencebound}};
\node (thm3) [lemma, below of=lem4, xshift=3cm] {Theorem \ref{sgdodeapproxmain}};
\node (thm2) [lemma, below of=thm3] {Theorem \ref{sgdmomentumapproxmain}};

\draw [arrow] (lem3) -- (lem4);
\draw [arrow] (lem3) -- (lem5);
\draw [arrow] (lem4) -- (lem5);
\draw [arrow] (lemy) -- (lem5);
\draw [arrow] (lem4) -- (thm3);
\draw [arrow] (lem5) -- (thm3);
\draw [arrow] (thm3) -- (thm2);
\end{tikzpicture}

\caption{Flow of the proof to the Ordinary Differential Equation case}
\label{figode}
\end{figure}
The result for SGD is stated in Theorem \ref{sgdodeapproxmain}, which we will extend to derive the main result, Theorem \ref{sgdmomentumapproxmain}. Lemma \ref{supbound} is an important bound on the growth of the discrete process, as well as a key component in the proofs of Lemmas \ref{taylorapplication} and \ref{differencebound}.
In this section as well, we assume that learning rates are taken from the set
$$
\mathcal{H} := \{ h \in (0,1): T/h \in \mathcal{N} \}
$$
for a given time horizon $T >0$. For an ODE approximation of SGD, the main theorem takes the form below.
\begin{theorem}\label{sgdodeapproxmain}
Let the stochastic process $\chi_n$ 
be as defined in (\ref{modifiedlrsgd}). Assume $(A1),(A2),$ and $(A3)$, and let $X$ be the solution of
$$
dX_t =U_t \bar{H}(X_t) \, dt, \, X_0 = x ,
$$
where
$$U_t = \begin{pmatrix}
u^1_t & & & \\
& u^2_t & & \\
& & \ddots &  \\
 & &  & u^d_t 
\end{pmatrix}  .$$
Then for all $g \in C^\infty (\mathbb{R}^d)$,
$$
\mathbb{E} g  (\chi_{T/h}^h) - g(X_T) = h \int_0^T  \phi^g_t (X_t) + \frac{1}{2} \textnormal{tr} [ \nabla^2 y_t^g (X_t)U_t U_t \Sigma (X_t) ] \, dt + \mathcal{O}(h^2),
$$
where $y_t(x) = g(X_T^t (x))$, and
$$
\phi_t (x) = \frac{1}{2} \textnormal{tr}[ \nabla^2 y_t (x) ((U_t \bar{H})(U_t \bar{H})^T)(x) ] + \partial_t \nabla y_t (x)^T(U_t \bar{H})(x) + \frac{1}{2} \partial_t^2 y_t(x) \text{.}
$$
Note that $y$ is then the solution of the equation
\begin{equation}\label{feynmankaceqny}
\partial_t y_t (x) + \nabla y_t(x)^TU_t \bar{H} (x) = 0 , \quad y_t(x) = g(x) .
\end{equation}
\end{theorem}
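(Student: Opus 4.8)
The plan is to reduce everything to a single smooth ``backward'' function and a telescoping sum, so that the advertised first-order coefficient falls out of a one-step Taylor expansion whose leading term is annihilated by the transport equation (\ref{feynmankaceqny}). Set $N:=T/h\in\mathcal{N}$ and recall $y_t(x)=g(X_T^t(x))$, the terminal value of the ODE flow started from $x$ at time $t$; by Lemma \ref{feynmankacv2} specialized to the degenerate case $\sigma\equiv 0$, this $y$ solves (\ref{feynmankaceqny}) with $y_T=g$, and $y_0(\chi_0^h)=g(X_T)$ while $y_T(\chi_N^h)=g(\chi_N^h)$. I would therefore write the whole error as
\[
\mathbb{E}g(\chi_{T/h}^h)-g(X_T)=\sum_{n=0}^{N-1}\mathbb{E}\big[y_{(n+1)h}(\chi_{n+1}^h)-y_{nh}(\chi_n^h)\big].
\]

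For each summand, condition on $\chi_n^h$ and Taylor-expand $y_{(n+1)h}(\chi_n^h+\Delta_n)$ jointly in the time increment $h$ and the space increment $\Delta_n:=\chi_{n+1}^h-\chi_n^h=h\,U_{nh}H_{\gamma(n)}(\chi_n^h)=\mathcal{O}(h)$. Since $\gamma(n)$ is independent of $\chi_n^h$, assumptions (A2)--(A3) give $\mathbb{E}[\Delta_n\mid\chi_n^h]=h\,U_{nh}\bar{H}(\chi_n^h)$ and $\mathbb{E}[\Delta_n\Delta_n^T\mid\chi_n^h]=h^2\,U_{nh}(\Sigma+\bar{H}\bar{H}^T)(\chi_n^h)U_{nh}$. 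Grouping by powers of $h$, the order-$h$ contribution is $h\,(\partial_t y_{nh}+\nabla y_{nh}^T U_{nh}\bar{H})(\chi_n^h)$, which vanishes identically by (\ref{feynmankaceqny}); this cancellation is the structural heart of the argument. Splitting the surviving $\Sigma+\bar{H}\bar{H}^T$ and merging the $\bar{H}\bar{H}^T$-trace with the mixed time--space term and $\tfrac12\partial_t^2 y$, the leftover order-$h^2$ term is exactly $h^2\big[\phi_{nh}(\chi_n^h)+\tfrac12\mathrm{tr}(\nabla^2 y_{nh}\,U_{nh}\Sigma U_{nh})(\chi_n^h)\big]$. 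Lemma \ref{taylorapplication}, fed by the polynomial-growth control of $y_t$ and its derivatives (Lemma \ref{yfunctionspace}) and the uniform moment bound (Lemma \ref{supbound}), would show the Taylor remainder is $\mathcal{O}(h^3)$ in expectation, uniformly in $n$.

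Summing and taking total expectations, the remainders contribute $N\cdot\mathcal{O}(h^3)=\mathcal{O}(h^2)$, leaving $h^2\sum_{n=0}^{N-1}\mathbb{E}[\psi(nh,\chi_n^h)]$ with $\psi:=\phi+\tfrac12\mathrm{tr}(\nabla^2 y\,U\Sigma U)$. It then suffices to prove $h\sum_n\mathbb{E}[\psi(nh,\chi_n^h)]=\int_0^T\psi(t,X_t)\,dt+\mathcal{O}(h)$, which after the outer factor $h$ yields the claimed $h\int_0^T(\cdots)\,dt+\mathcal{O}(h^2)$. This splits into (i) replacing the random $\chi_n^h$ by the deterministic $X_{nh}$, supplied by the difference bound of Lemma \ref{differencebound}; and (ii) the Riemann-sum error $\big|h\sum_n\psi(nh,X_{nh})-\int_0^T\psi(t,X_t)\,dt\big|=\mathcal{O}(h)$, which follows once $t\mapsto\psi(t,X_t)$ is Lipschitz — a consequence of the smoothness of $U_t$ in (A1), of the flow, and of the $t$-regularity of $y_t$ from Lemma \ref{yfunctionspace}.

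The main obstacle, I expect, is step (i) together with the uniform-in-$h$ accumulation of errors. A naive $L_1$ strong estimate $\mathbb{E}|\chi_n^h-X_{nh}|$ is only of order $\sqrt{h}$, so a crude Lipschitz bound on $\psi$ would leave a replacement error of order $h^{3/2}$ and destroy the $\mathcal{O}(h^2)$ claim. The fix, which Lemma \ref{differencebound} must encode, is a \emph{weak} difference bound: one needs separately that the bias satisfies $\mathbb{E}[\chi_n^h-X_{nh}]=\mathcal{O}(h)$ and that the fluctuation satisfies $\mathbb{E}|\chi_n^h-X_{nh}|^2=\mathcal{O}(h)$, and then a second-order Taylor expansion of $\psi$ gives $|\mathbb{E}\psi(nh,\chi_n^h)-\psi(nh,X_{nh})|=\mathcal{O}(h)$ per step, hence $\mathcal{O}(h^2)$ after summation. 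Because $N=T/h\to\infty$, all these estimates — the $\mathcal{O}(h^3)$ one-step remainder, the moment bounds of Lemma \ref{supbound}, and the growth of the $t$-derivatives of $y_t$ that are absent in the constant-rate case — must be genuinely uniform in both $n$ and $h$ and must tolerate the time dependence of the diagonal, non-constant $U_t$; securing this regularity under the weak hypotheses (A1)--(A3) is where the real difficulty lies.
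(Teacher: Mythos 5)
Your proposal is correct and its skeleton is the same as the paper's: telescope $\mathbb{E}y_T(\chi_{T/h}^h)-y_0(\chi_0)$, Taylor-expand each increment jointly in $t$ and $x$, kill the order-$h$ term with the transport equation (\ref{feynmankaceqny}), identify the order-$h^2$ coefficient with $\Phi_t=\phi_t+\tfrac12\mathrm{tr}(\nabla^2y_t\,U_t\Sigma U_t)$, control the remainder via the moment and growth bounds of Lemmas \ref{supbound} and \ref{yfunctionspace} (this is exactly Lemma \ref{taylorapplication}), and then pass from $h\sum_n\mathbb{E}\Phi_{nh}(\chi_n^h)$ to $\int_0^T\Phi_t(X_t)\,dt$ via a replacement step plus a Riemann-sum estimate. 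The one place you genuinely diverge is the replacement step, which you correctly identify as the crux: you propose proving a bias bound $\mathbb{E}[\chi_n^h-X_{nh}]=\mathcal{O}(h)$ together with a fluctuation bound $\mathbb{E}|\chi_n^h-X_{nh}|^2=\mathcal{O}(h)$ and then Taylor-expanding $\Phi$ to second order around $X_{nh}$, whereas the paper's Lemma \ref{differencebound} never estimates $\chi_n^h-X_{nh}$ at all: it re-runs the same weak-error telescoping machinery with $\Phi_{nh}$ itself in the role of the test function $g$, so that $\mathbb{E}\Phi_{nh}(\chi_n^h)-\Phi_{nh}(X_{nh})$ is expressed as $h^2\sum_{k<n}(\text{bounded terms})=\mathcal{O}(h)$ per $n$, hence $\mathcal{O}(1)$ after summation. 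Your route is workable and arguably more transparent, but it requires two additional coupled moment estimates (each needing a discrete Gronwall argument to propagate through the nonlinear dynamics) that the paper's self-referential use of the Taylor expansion sidesteps; the paper's route buys uniformity for free from the already-established Lemmas \ref{supbound} and \ref{yfunctionspace}, at the cost of needing $\Phi$ itself to lie in $G^\ell$ so that it is an admissible test function. Your diagnosis that a naive Lipschitz-plus-$L_1$ argument only yields $\mathcal{O}(h^{3/2})$ is exactly right and is precisely why neither your version nor the paper's can avoid a genuinely weak (second-order) estimate at this step.
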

\begin{theorem}\label{sdeexpindexswap}
Let $\ell \in \mathbb{N}$, $p \geq 1$, and $b,\sigma \in G_1 (\mathbb{R}^d) \cap \textnormal{Lip}^\ell$, $b$ is $\mathbb{R}^d$-valued, and $\sigma$ is $\mathbb{R}^d \times \mathbb{R}^d$-valued. Let $x \in  \mathbb{R}^d$, $s \in [0,T]$, and $X$ be the unique solution to the SDE
$$
dX_t = b_t(X_t) dt + \sigma_t(X_t) dW_t
$$
with initial condition $X_s = x$. Then $X$ is $\ell$-times continuously differentiable with respect to $x$ at any time $t \in [s,T]$, and for any multi-index $\alpha$ with $0 < \abs{\alpha} \leq \ell$, we have
$$
\partial_\alpha X_t = \psi_\alpha + \int_s^t \nabla b_u (X_u) \partial_\alpha X_u du + \int_s^t \nabla \sigma_u (X_u) \partial_\alpha X_u dW_u ,
$$
for some $\kappa \in \mathbb{N}$, $\| \psi_\alpha^*\|_p \in G_\kappa (\mathbb{R}^d)$ for all $p \geq 2$. Furthermore, for all $t \geq 0$,
$$
\partial_\alpha \mathbb{E}[X_t] = \mathbb{E}[ \partial_\alpha X_t].
$$
\end{theorem}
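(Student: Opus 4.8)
The plan is to prove the three assertions---existence of the spatial derivatives, the variational SDE they satisfy, and the interchange of $\partial_\alpha$ with $\mathbb{E}$---simultaneously by induction on the order $|\alpha|$. First I would extract the two consequences of the hypothesis $b,\sigma \in G_1(\mathbb{R}^d) \cap \textnormal{Lip}^\ell$ that drive every estimate: $b$ and $\sigma$ have at most linear growth, while every partial derivative $\partial_\gamma b, \partial_\gamma \sigma$ of order $1 \leq |\gamma| \leq \ell$ is \emph{bounded}, since $\partial_\delta b$ is Lipschitz for $|\delta| \leq \ell - 1$ and hence its gradient is bounded. Together with the standard moment estimate $(\mathbb{E}\sup_{u \in [s,t]} |X_u|^p)^{1/p} \leq C(1 + \|x\|)$ (Gronwall plus the Burkholder--Davis--Gundy inequality applied to the linear-growth coefficients), this lets me treat the coefficients of all the variational SDEs below as bounded adapted processes.

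For the base case $|\alpha| = 1$ I would introduce the first-variation process $Y^{(i)}$ as the unique solution of the linear SDE with $\psi_{e_i} = e_i$,
$$
Y^{(i)}_t = e_i + \int_s^t \nabla b_u(X_u) Y^{(i)}_u \, du + \int_s^t \nabla \sigma_u(X_u) Y^{(i)}_u \, dW_u,
$$
which is well posed with all moments finite because $\nabla b_u(X_u)$ and $\nabla \sigma_u(X_u)$ are bounded. I would then show that the difference quotient $h^{-1}(X_t(x + h e_i) - X_t(x))$ converges to $Y^{(i)}_t$ in $L^p$, uniformly in $t \in [s,T]$: writing the SDE satisfied by the difference quotient via the fundamental theorem of calculus (expressing $b_u(X_u(x+he_i)) - b_u(X_u(x))$ as an averaged Jacobian times the increment, and similarly for $\sigma$), subtracting the equation for $Y^{(i)}$, and closing the estimate with Gronwall's inequality and BDG. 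This simultaneously yields differentiability, the identification $\partial_i X_t = Y^{(i)}_t$, and---via a second Gronwall estimate comparing two initial points---continuity of $x \mapsto \partial_i X_t$.

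The inductive step is the heart of the argument. Assuming the claim for all orders strictly below $m$, for $|\alpha| = m$ I would write $\alpha = \beta + e_i$ and formally differentiate the variational SDE for $\partial_\beta X$ in the direction $e_i$. A stochastic Fa\`a di Bruno / Leibniz expansion produces exactly
$$
\partial_\alpha X_t = \psi_\alpha + \int_s^t \nabla b_u(X_u)\, \partial_\alpha X_u \, du + \int_s^t \nabla \sigma_u(X_u)\, \partial_\alpha X_u \, dW_u,
$$
where $\psi_\alpha$ is a finite sum of products of higher derivatives $\partial_\gamma b, \partial_\gamma \sigma$ with $2 \leq |\gamma| \leq m$ (hence bounded) evaluated at $X_u$, times monomials in the strictly lower-order derivatives $\{\partial_\delta X_u : |\delta| < m\}$. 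Since each factor $\partial_\delta X$ has, by the induction hypothesis, polynomially growing moments $\|(\partial_\delta X)^*\|_p \in G_{\kappa_\delta}$ for every $p$, H\"older's inequality shows the products again have polynomially growing moments, giving $\|\psi_\alpha^*\|_p \in G_\kappa$ for a suitable $\kappa$; the convergence of the order-$m$ difference quotient to this process is once more a Gronwall/BDG argument carrying the inhomogeneity $\psi_\alpha$ along.

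Finally, the interchange $\partial_\alpha \mathbb{E}[X_t] = \mathbb{E}[\partial_\alpha X_t]$ follows from the $L^1$ (indeed $L^p$) convergence already established: the uniform $L^p$ bounds on the difference quotients supply the uniform integrability needed to pass $\partial_\alpha$ through the expectation one coordinate at a time. I expect the main obstacle to be the inductive step---specifically, organizing the Fa\`a di Bruno expansion so that $\psi_\alpha$ is correctly isolated from the linear part and verifying that the polynomial moment growth $\|\psi_\alpha^*\|_p \in G_\kappa$ propagates cleanly, since the number and degree of the lower-order product terms increase with $m$ and must be controlled uniformly in both $x$ and $t$.
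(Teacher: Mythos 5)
The paper does not actually prove this theorem: its entire ``proof'' is a citation to Kunita (2004). Your proposal reconstructs, in outline, exactly the standard argument that underlies that reference --- first-variation process for $|\alpha|=1$ via difference quotients, Gronwall plus Burkholder--Davis--Gundy, then induction on $|\alpha|$ with a Fa\`a di Bruno expansion producing the inhomogeneity $\psi_\alpha$, and uniform $L^p$ bounds to justify $\partial_\alpha \mathbb{E}[X_t]=\mathbb{E}[\partial_\alpha X_t]$. So there is nothing in the paper to compare against line by line; your sketch is the right proof strategy and I see no substantive gap in it. Your reading of the hypotheses is also correct: $\mathrm{Lip}^\ell$ in this paper's sense gives Lipschitz derivatives up to order $\ell$, hence bounded derivatives of orders $1$ through $\ell$, which is what makes all the variational SDEs linear with bounded coefficients along the flow.

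One small point worth tightening if you write this out in full: for $|\alpha|\geq 2$ the inhomogeneity $\psi_\alpha$ is not a single random variable sitting outside the integrals but an adapted process --- the accumulated drift and stochastic integrals of the products of $\partial_\gamma b,\partial_\gamma\sigma$ ($2\leq|\gamma|\leq|\alpha|$) with monomials in the lower-order derivatives. The theorem statement writes it as an additive term, and the norm $\|\psi_\alpha^*\|_p$ (a running supremum) only makes sense with that reading; your H\"older-plus-induction bound on the moments of the product terms, combined with BDG for the stochastic-integral part of $\psi_\alpha$, is what delivers $\|\psi_\alpha^*\|_p\in G_\kappa(\mathbb{R}^d)$. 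That is exactly the bookkeeping you flag as the main obstacle, and it does go through uniformly in $t\in[s,T]$ because the number of Fa\`a di Bruno terms depends only on $\ell$ and $d$, not on $x$ or $t$.
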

\begin{proof}
\cite{Kunita2004}
\end{proof}
\begin{lemma}\label{yfunctionspace}
Let $B,S \in \textnormal{Lip}^\ell \cap G_1(\mathbb{R}^d)$ and assume $(A1)$. Let $X$ be the solution of an SDE
$$
d X_t = u_t B(X_t)  dt + u_t S(X_t) W_t .
$$
Given $g: I \times \mathbb{R}^d \rightarrow \mathbb{R} \in G^\ell (\mathbb{R}^d) $, define $y _t^{i,h} := \mathbb{E}g^i(X_T^{h,t})$. Then $y \in G^\ell([0,T] \times \mathbb{R})$.
\end{lemma}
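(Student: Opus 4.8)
The plan is to reduce everything to two ingredients: the differentiability of the stochastic flow $x \mapsto X_T^{h,t}(x)$ supplied by Theorem \ref{sdeexpindexswap}, and the polynomial-growth moment bounds that follow from $B,S \in G_1(\mathbb{R}^d)$. Since $y_t^{i,h}(x) = \mathbb{E}[g^i(X_T^{h,t}(x))]$, the goal is to show that each mixed partial derivative of $y$ in $(t,x)$ of total order up to $\ell$ exists, is jointly continuous, and has polynomial growth in $x$ uniformly over $t \in [0,T]$; these are precisely the defining properties of $G^\ell([0,T]\times\mathbb{R}^d)$.

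First I would treat the spatial derivatives. For a multi-index $\beta$ in the $x$-variables with $|\beta|\le\ell$, the Fa\`a di Bruno chain rule expresses $\partial_x^\beta[g^i(X_T^{h,t}(x))]$ as a finite sum of terms $\partial_\gamma g^i(X_T^{h,t})\prod_r \partial_{\delta_r}(X_T^{h,t})$, where $|\gamma|\le|\beta|$ and the $\delta_r$ are the spatial multi-indices governed by Theorem \ref{sdeexpindexswap}. I would then pass $\partial_x^\beta$ through the expectation: this is justified because Theorem \ref{sdeexpindexswap} supplies both the a.s.\ differentiability of the flow and the interchange identity $\partial_\alpha \mathbb{E}[X_t]=\mathbb{E}[\partial_\alpha X_t]$, and because each summand is dominated, uniformly for $x$ in compacts, by an integrable random variable. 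Concretely, combining $\partial_\gamma g^i \in G_\kappa$ with $\|(\partial_{\delta_r}X_T^{h,t})^*\|_p \in G_\kappa(\mathbb{R}^d)$ and the standard estimate $\mathbb{E}\sup_t\|X_T^{h,t}(x)\|^p \le C_p(1+\|x\|^p)$, an application of H\"older's inequality to each summand yields $|\partial_x^\beta y_t^{i,h}(x)| \le C(1+\|x\|^{\kappa'})$ with $C,\kappa'$ independent of $t$, which is exactly the $G_\kappa$ bound required.

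Next I would handle the time derivatives through the backward Kolmogorov equation. By Lemma \ref{feynmankacv2} in its time-dependent form, $y$ solves $\partial_t y_t + \mathcal{L}_t y_t = 0$, where $\mathcal{L}_t = \sum_i u_t^i B_i \partial_i + \sum_{i,j}(u_t S S^T u_t)_{ij}\partial^2_{ij}$ is the generator of the SDE. Hence $\partial_t y_t = -\mathcal{L}_t y_t$ converts every time derivative into spatial ones; differentiating this relation repeatedly and using that $u \in C^\infty$ with derivatives bounded on the compact interval $[0,T]$ by (A1), together with $B,S \in G_1$ and the spatial bounds already obtained, shows that the mixed derivatives $\partial_t^k \partial_x^\beta y$ again lie in $G_\kappa$ uniformly in $t$. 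Joint continuity of all these derivatives then follows from continuity of the flow in $(t,x)$ and dominated convergence with the same polynomial dominators, giving $y \in C^\ell([0,T]\times\mathbb{R}^d)$ with all derivatives of polynomial growth, i.e.\ $y \in G^\ell([0,T]\times\mathbb{R}^d)$.

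The main obstacle is the bookkeeping that keeps the interchange of differentiation and expectation valid simultaneously with the growth control: each application of the chain rule introduces several flow-derivative factors, so one must propagate both the polynomial-growth exponent $\kappa$ and the integrability exponent $p$ through H\"older at every order. This is most delicate for the time derivatives, since each $\partial_t$ traded through the backward equation costs two spatial derivatives; ensuring that the required order of spatial regularity stays within the budget afforded by $B,S \in \textnormal{Lip}^\ell$ and $g \in G^\ell$ is the crux of the argument.
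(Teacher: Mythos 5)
Your treatment of the spatial derivatives is essentially the paper's own argument: invoke Theorem \ref{sdeexpindexswap} to justify interchanging $\partial_\alpha$ with the expectation, expand $\partial_\alpha\bigl[g^i(X_T^{h,t}(x))\bigr]$ by the Fa\`a di Bruno formula, and control each summand with H\"older's inequality using $\partial_\gamma g^i \in G_\kappa$ together with the moment bounds $\|(\partial_\delta X)^*\|_p \in G_\kappa(\mathbb{R}^d)$ from that theorem. Where you genuinely diverge is the time variable: the paper's proof stops at spatial multi-indices and simply asserts membership in $G^\ell$, whereas you additionally derive the $\partial_t^k\partial_x^\beta y$ bounds by trading time derivatives for spatial ones through the backward Kolmogorov equation; this is an addition rather than a disagreement, and it addresses the part of the claim ($y \in G^\ell([0,T]\times\mathbb{R}^d)$ as a function of $(t,x)$) that the paper leaves implicit. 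The one caveat you correctly flag but do not resolve --- that each $\partial_t$ costs two spatial derivatives, so the budget $B,S \in \textnormal{Lip}^\ell$, $g \in G^\ell$ may not cover all mixed derivatives of total order $\ell$ --- is a real issue, but it is a defect of the lemma as stated rather than of your argument, since the paper's own proof does not confront it at all.
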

\begin{proof}
    Let $\alpha$ be a multi-index. Then by Theorem \ref{sdeexpindexswap}, it can be shown by induction that $\mathbb{E}\partial_\alpha y(X) = \partial_\alpha \mathbb{E} y(X)$. Then by the chain rule,
$$
\abs{\partial_\alpha v_t^{i,s}} = \mathbb{E}\abs{\partial_\alpha g^i (X_t^{i,s}} \leq \sum_{j=1}^{\abs{\alpha}} \| \nabla^j g^i (X)^* \|_2 \sum_{\mathcal{B} \in \mathcal{S}_j^\alpha} N(\alpha,\mathcal{B}) \prod_{\beta \in \mathcal{B}}\| \partial_\beta X^*\|_{2 \# \mathcal{B}},
$$
where $\mathcal{S}_j^\alpha$ is the set of partitions of $\alpha$ into $i$ multi-set multi-indices, $N(\alpha,\mathcal{B}) \in \mathbb{N}$, $\# \mathcal{B}$ is the size of the partition $\mathcal{B}$, and $\prod_{\beta \in \mathcal{B}}$ respects multiplicities of $\beta \in \mathcal{B}$. Then from $g \in G^\ell (\mathbb{R}^d)$ and \ref{sdeexpindexswap} we conclude that $y \in G^\ell (\mathbb{R}^d)$.
\end{proof}
\begin{lemma}\label{supbound}
Assume $(A1),(A2),$ and $(A3)$. The following bounds hold true:
    \begin{enumerate}
        \item For every $T > 0$ and $p \geq 1$ there exists a constant $C > 0$ such that
        $$
        \sup_{h \in (0,1)} \| \chi^h (x)^* \|_{p,\lfloor\frac{T}{h}\rfloor} \leq C(1 + \abs{x})
        $$
        for $x \in \mathbb{R}^d$.
        \item  For clearnesss of notation, we define $\tilde{y}_n := y_{nh}$
$$
\Delta \chi_n^{h,k} (x) := \tilde{y}^{h,k}_{n+1} (x) - \tilde{y}_n^{h,k} (x) .
$$
Then there exists a constant $C > 0$ such that
        $$
        \| \Delta \chi_n^{h,n} (x) \|_p \leq hC(1 + \abs{x})
        $$
        for all $h \in (0,1)$, $n \in \mathbb{N}$, and $x \in \mathbb{R}^d$. 
    \end{enumerate}
\end{lemma}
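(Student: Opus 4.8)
The plan is to treat the two assertions in sequence, deriving the second as essentially a one-step consequence of the first. Throughout, the only structural inputs are the linear growth of the increment functions from $(A2)$ and the fact that $(A1)$ forces the step size to be of order $h$: since $\abs{u^i} \in (0,1]$, each learning-rate matrix satisfies $\norm{\eta^h_n} \le h$ in operator norm. Because the objective is an average over the finite index set $\Gamma$, I may fix a single constant $C$ with $\abs{H_r(x)} \le C(1+\abs{x})$ for every $r \in \Gamma$ and all $x$; this uniformity over $\Gamma$ is what lets the growth bound apply to the random increments $H_{\gamma(n)}$.

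For the first bound I would unfold the recursion (\ref{modifiedlrsgd}) into $\chi^h_j = x + \sum_{k=0}^{j-1}\eta^h_k H_{\gamma(k)}(\chi^h_k)$ and take norms pathwise, obtaining $\abs{\chi^h_j} \le \abs{x} + hC\sum_{k=0}^{j-1}(1+\abs{\chi^h_k})$. Writing $M_n := \sup_{j \le n}\abs{\chi^h_j}$ and using that the right-hand side is nondecreasing in $j$, this gives $M_n \le \abs{x} + hCn + hC\sum_{k=0}^{n-1}M_k$ along every path. Applying Minkowski's inequality in $L_p(\Omega)$ term by term turns this into the scalar recursion $\norm{M_n}_p \le (\abs{x}+hCn) + hC\sum_{k=0}^{n-1}\norm{M_k}_p$, to which the discrete Gronwall inequality applies and yields $\norm{M_n}_p \le (\abs{x}+hCn)(1+hC)^n$. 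Restricting to $n \le \lfloor T/h\rfloor$ so that $hn \le T$, the factor $(1+hC)^n \le e^{CT}$ and the prefactor are both bounded independently of $h$, whence $\norm{M_{\lfloor T/h\rfloor}}_p \le C'(1+\abs{x})$ with $C'=C'(T,p)$; this is exactly the first claim, since each coordinate is dominated by the Euclidean norm $\abs{\chi^h_j}$, so the family supremum in $\norm{\cdot^*}_{p,n}$ costs nothing.

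The second bound is then immediate for the one-step increment $\Delta\chi^h_n = \chi^h_{n+1}-\chi^h_n = \eta^h_n H_{\gamma(n)}(\chi^h_n)$: since $\norm{\eta^h_n}\le h$ and $\abs{H_{\gamma(n)}(\chi^h_n)} \le C(1+\abs{\chi^h_n})$, Minkowski together with the first bound gives $\norm{\Delta\chi^h_n}_p \le hC(1+\norm{\chi^h_n}_p) \le hC(1+C'(1+\abs{x})) \le hC''(1+\abs{x})$. If instead the stated quantity is read as the increment $\tilde y_{n+1}-\tilde y_n = y_{(n+1)h}-y_{nh}$ of the value function at a frozen spatial point, I would instead integrate the Feynman--Kac equation (\ref{feynmankaceqny}), writing $y_{(n+1)h}(x)-y_{nh}(x) = -\int_{nh}^{(n+1)h}\nabla y_t(x)^\top U_t\bar H(x)\,dt$, an integral over an interval of length $h$, and bound the integrand using $\bar H \in G_1$ from $(A3)$, $\norm{U_t}\le 1$, and the polynomial growth of $\nabla y_t$ supplied by Lemma \ref{yfunctionspace}.

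The genuinely delicate point, and the one I would verify most carefully, is the uniformity of the constant in $h$: the number of steps $\lfloor T/h\rfloor$ blows up as $h\to 0$, so the naive Gronwall factor $(1+hC)^{\lfloor T/h\rfloor}$ must be tamed by the elementary estimate $(1+hC)^{T/h}\le e^{CT}$, which is precisely where the $O(h)$ scaling of the learning rate in $(A1)$ is essential. A secondary caveat is that the linear-growth constant must be uniform over $\Gamma$ for the random increments; this is where finiteness of $\Gamma$ (or an $L_p$ moment bound on the family of growth constants) enters, and I would state that assumption explicitly.
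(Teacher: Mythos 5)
Your argument is correct and follows essentially the same route as the paper: a linear-growth bound on the increments from (A2), a discrete Gronwall recursion for the $L_p$ norm of the running supremum, the observation that $(1+Ch)^{T/h}\leq e^{CT}$ tames the blow-up in the number of steps, and a direct one-step estimate $\|\eta_n^h H(x)\|_p\leq h\|H\|_{G_1}(1+\abs{x})$ for the second claim. The only cosmetic difference is that you unfold the recursion and apply Minkowski to $\|M_n\|_p$ directly, whereas the paper expands $\abs{\chi_{n+1}^h}^p$ binomially and recurses on $\|(\chi^h)^*\|_{p,n}^p$; your reading of the (evidently mistyped) definition of $\Delta\chi_n^{h,k}$ as the one-step increment of the discrete process is the one the paper's own proof uses.
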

\begin{proof}
\textit{1.} Let $p \in \mathbb{N}$. For every $h \in (0,1)$, $n \in \mathbb{N}_0$, and $j \in \{ 1,2,...,d \}$, we expand
\begin{align*}
\abs{(\chi_{n+1}^h)_j}^p &= \abs{(\chi_n^h)_j + (hu_{nh}^j)H_{\gamma(n)}((\chi_{n+1}^h)_j)}^p \\
&\overset{\Delta}{\leq} \abs{(\chi_n^h)_j} + \sum_{i=1}^p {p \choose i} \abs{(\chi_n^h)_j}^{p-i} ( h u_{nh}^j )^i \abs{H_{\gamma(n)}((\chi_{n+1}^h)_j)}^p .
\end{align*}
Then with $\chi_{-1} = 0$,
\begin{align*}
\abs{\chi_{n+1}^h}^p &= \abs{(\chi_n^h) + (hU_{nh})H_{\gamma(n)}(\chi_{n+1}^h)}^p \\
&\leq \abs{(\chi_n^h)_j}^p + \sum_{i=1}^p {p \choose i} \abs{(\chi_n^h)_j}^{p-i}\abs{( (U_{nh})^i H_{\gamma(n)})(\chi_{n+1}^h)}^p  \\
&\leq \abs{(\chi_n^h)_j}^p + \sum_{i=1}^p {p \choose i} \abs{(\chi_n^h)_j}^{p-i}  \max_{j \in \{1,2,...,d\}} (h u_{nh}^j )^i \abs{H_{\gamma(n)}(\chi_{n+1}^h)}^p .
\end{align*}
Now, for $i \in \{ 1,... p \}$, $h \in (0,1)$, and $n \in \mathbb{N}_0$,
\begin{align*}
\| ( \abs{\chi^h}^{p-i} \abs{H_{\gamma (0)}(\chi^h)^i} )^* \|_{1,n} &= \| ( \abs{\chi^h}^{p-i} \| H \|^i_{G_1} (1 + \abs{\chi^h})^i )^* \|_{1,n} &&\text{from (A2)} \\
&\leq \| ( \abs{\chi^h}^{p-i} \| H \|^i_{G_1} 2^{i-1} (1 + \abs{\chi^h}^i) )^* \|_{1,n} \\
&\leq \frac{1}{2} c^i \| ( \abs{\chi^h}^{p-i} + \abs{\chi^h}^p )^* \|_{1,n} \\
&\leq \frac{1}{2} c^i \| ( 2 + \abs{\chi^h}^p + \abs{\chi^h}^p )^* \|_{1,n} \\
&\leq c^i ( 1 + \|(\chi^h)^*\|^p_{p,n} ) ,
\end{align*}
where $c := 2 \| H \|^i_{G_1}$. Thus, 
\begin{align*}
    \| (\chi^h)^* \|^p_{p,n+1} &\leq \mathbb{E}  \max_{n' \in \{ -1,...,n \}} \abs{ \chi^h_{n'} }^p \\
    &\hspace{5mm} + \mathbb{E} \max_{n' \in \{ -1,...,n \}} \sum_{i=1}^p {p \choose i} \abs{(\chi_{n'}^h)_j}^{p-i} \max_{j \in \{1,2,...,d\}} (h u_{n'h}^j )^i \abs{H_{\gamma(n')}(\chi_{n'+1}^h)}^p \\
    &\leq \| (\chi^h)^* \|^p_{p,n'} + Ch (1 + \|(\chi^h)^* \|^p_{p,n'} ) \\
    &= (1 + Ch) \| (\chi^h)^* \|^p_{p,n} + Ch, 
\end{align*}
where $C := \sum_{i=1}^p {p \choose i} c^i$. Therefore we can obtain, by induction over $n$, the following bound. For all $h  \in (0,1)$ and $n \in \mathbb{N}$, 
$$
\| (\chi^h)^* \|^p_{p,n} \leq (1 + Ch)^n \| (\chi^h)^* \|^p_{p,0} + Ch \sum_{i=0}^{n-1} ( 1 + Ch )^i  .
$$
Applying this to the first bound in the lemma, we have, for $x \in \mathbb{R} $ and $h \in (0,T)$
\begin{align*}
    \| \chi^h (x)^* \|_{p,\lfloor \frac{T}{h} \rfloor} &\leq (1 + Ch)^{ \lfloor \frac{T}{h} \rfloor } \abs{x} ^p + Ch \sum_{i=0}^{\lfloor \frac{T}{h} \rfloor-1} ( 1 + Ch )^i   \\
    &\leq (1 + Ch)^{  \frac{T}{h}  } \abs{x} ^p + Ch \frac{T}{h} ( 1 + Ch )^\frac{T}{h} \\
    &= (CT + \abs{x}^p)(1 + Ch)^{\frac{T}{h}} .
\end{align*}
For any $p \geq 1$, we have $\| \chi \|_p \leq \| \chi \|_{ \lceil p \rceil }$, thus the result follows.\\~\\
\textit{2.} For all $x \in \mathbb{R}^d$ and $h \in (0,1)$, we have
$$
\| \Delta \chi_n^{h,n}(x) \|_p = \| \eta_n^h H(x) \|_p \leq h \| H \|_{G_1}(1 + \abs{x}).
$$
\end{proof}
\begin{lemma}\label{taylorapplication}
There exists a bounded function $\xi : \mathcal{H} \rightarrow \mathbb{R}$ such that $\forall h \in \mathcal{H}$,
$$
\mathbb{E} g (\chi^h_{T/h}) - g(X_T) - h^2 \sum_{k=0}^{\frac{T}{h} - 1} \mathbb{E} \Phi_{kh}(\chi^h_k ) = h^2 \xi(h),
$$ 
where
$$
\Phi_t(x) := \phi_t (x)+ \frac{1}{2}\textnormal{tr}[\nabla^2 y_t (x) \Sigma (x)] .
$$
\end{lemma}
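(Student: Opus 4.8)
The plan is to use the Feynman--Kac function $y$ as a bridge between the discrete iteration and the continuous flow, turning the global error into a telescoping sum of one-step errors. Since $y$ solves (\ref{feynmankaceqny}) with final datum $y_T = g$, and since $y_0(x) = g(X_T^0(x)) = g(X_T)$ by definition of the flow, writing $t_k = kh$ and $\tilde y_n := y_{nh}$ gives
\begin{equation*}
\mathbb{E} g(\chi_{T/h}^h) - g(X_T) = \sum_{k=0}^{T/h - 1} \mathbb{E}\bigl[ \tilde y_{k+1}(\chi_{k+1}^h) - \tilde y_k(\chi_k^h) \bigr].
\end{equation*}
The whole statement then reduces to estimating a single summand and summing the resulting bounds.

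Next I would Taylor expand each summand jointly in time and space about $(t_k, \chi_k^h)$. With $\Delta\chi_k := \chi_{k+1}^h - \chi_k^h = h\,U_{t_k} H_{\gamma(k)}(\chi_k^h)$ from $(A1)$ and (\ref{modifiedlrsgd}), a second-order expansion reads, up to a third-order remainder $R_k$,
\begin{align*}
\tilde y_{k+1}(\chi_{k+1}^h) - \tilde y_k(\chi_k^h) &= h\,\partial_t y + \nabla y^T \Delta\chi_k + \tfrac12 h^2 \partial_t^2 y \\
&\quad + h\, \partial_t \nabla y^T \Delta\chi_k + \tfrac12 \Delta\chi_k^T \nabla^2 y\, \Delta\chi_k + R_k,
\end{align*}
all derivatives evaluated at $(t_k, \chi_k^h)$. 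Because $\gamma(k)$ is independent of $\chi_k^h$, taking conditional expectation and invoking $(A3)$ through $\mathbb{E} H_{\gamma(k)} = \bar H$ and $\mathbb{E}[H_{\gamma(k)} H_{\gamma(k)}^T] = \Sigma + \bar H \bar H^T$ yields $\mathbb{E}[\Delta\chi_k \mid \chi_k^h] = h\,U_{t_k}\bar H$ and $\mathbb{E}[\Delta\chi_k \Delta\chi_k^T \mid \chi_k^h] = h^2\,U_{t_k}(\Sigma + \bar H \bar H^T) U_{t_k}$. The order-$h$ contribution is $h(\partial_t y + \nabla y^T U_{t_k}\bar H)$, which vanishes identically by the Feynman--Kac equation (\ref{feynmankaceqny}); this cancellation is what drives the approximation to second order. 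Collecting the surviving order-$h^2$ terms reproduces exactly $h^2\bigl(\phi_{t_k} + \tfrac12\textnormal{tr}[\nabla^2 y\, U_{t_k}\Sigma U_{t_k}]\bigr)(\chi_k^h) = h^2 \Phi_{t_k}(\chi_k^h)$.

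The main obstacle is controlling the remainder uniformly over the $\sim T/h$ steps, since each per-step error must be $\mathcal O(h^3)$ with a constant independent of $k$ and $h$ in order for the total to be $\mathcal O(h^2)$. Here I would combine the two structural inputs already available: by Lemma \ref{yfunctionspace} the function $y$ lies in $G^\ell$, so all of its partial derivatives up to third order are dominated by $C(1 + \|\cdot\|^\kappa)$ for some $\kappa$, while by $(A2)$ the increment satisfies $\abs{\Delta\chi_k} \le h\norm{H}_{G_1}(1 + \abs{\chi_k^h})$. Evaluating the third-order Taylor remainder at an intermediate point $\chi_k^h + \theta\Delta\chi_k$, whose norm is itself bounded by $C(1+\abs{\chi_k^h})$, gives $\mathbb{E}\abs{R_k} \le C h^3\,\mathbb{E}(1 + \abs{\chi_k^h}^{\kappa+3})$. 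Lemma \ref{supbound}(1) then bounds these moments uniformly by $C(1+\abs{x})^{\kappa+3}$, whence $\sum_k \mathbb{E}\abs{R_k} \le (T/h)\cdot C h^3 (1+\abs{x})^{\kappa+3} = h^2\cdot CT(1+\abs{x})^{\kappa+3}$.

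Defining $\xi(h)$ to be $h^{-2}$ times this total remainder then gives the claimed identity, and the estimate above shows $\xi$ is bounded on $\mathcal H$. The step I expect to demand the most care is verifying that the third-order remainder—including the mixed time--space contributions and the evaluation at the intermediate point—is genuinely $\mathcal O(h^3)$ in $L^1$ with a $k$-uniform constant; this is exactly where the polynomial-growth estimates of Lemma \ref{yfunctionspace} must be married to the uniform moment bounds of Lemma \ref{supbound}.
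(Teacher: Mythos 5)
Your proposal follows essentially the same route as the paper's proof: telescoping the difference through $y$, Taylor expanding each increment about $(kh,\chi_k^h)$, cancelling the order-$h$ term via the Feynman--Kac equation (\ref{feynmankaceqny}), identifying the order-$h^2$ terms with $\Phi$ using $\mathbb{E}H_{\gamma(k)}=\bar H$ and the covariance $\Sigma$, and controlling the third-order remainder by combining the polynomial growth of the derivatives of $y$ with the uniform moment bounds of Lemma \ref{supbound}. The argument is correct and, if anything, states the conditional-expectation step and the placement of $U_{t_k}$ in the second-moment term slightly more carefully than the paper does.
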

\begin{proof}
    Since $y \in C^\infty$, we have via Taylor expansion
\begin{align*}
    y_{t+h}(x+\delta) - y_t(x) &= h \partial_ty_t (x) + \nabla y_t (x)^T \delta  \\
    &\hspace{5mm} + \frac{1}{2}\textnormal{tr}[ \nabla^2 y_t(x) \delta \delta^T] + h \partial_t \nabla y_t(x)^T \delta + \frac{h^2}{2} \partial^2 _t y_t(x) \\
    &\hspace{5mm} + R^h(\delta), 
\end{align*}
where 
$$
R^h(\delta):= \sum_{k=0}^3 \sum_{\# \beta = 3 - k} \frac{1}{\beta ! k !} \partial_t^k \partial_\beta y_{t + \theta h} (x + \theta \delta) h^k \delta^\beta
$$
for some $\theta \in (0,1)$, all $h \in (0,1)$, and $\delta \in \mathbb{R}^d$.
Then taking $t = kh$, $\delta = \Delta \chi_k^h$ and applying expectation yields
$$
\mathbb{E}y_{(k+1)h}(\chi^h_{k+1}) - \mathbb{E} y_{kh}(\chi_k^h) = h Z_1^h + h^2( Z_{xx}^h + Z_{tx}^h + Z_{tt}^h ) + \mathbb{E} R^h(\Delta \chi_k^h) ,
$$
where
\begin{align*}
    Z^h_1 &:= \mathbb{E}[ \partial_t y_{kh}(\chi_k^h) + h^{-1} \nabla y_{kh} ( \chi_k^h )^T \Delta \chi_k^h ] , \\
    Z^h_{xx} &:= \frac{1}{2} u^2_{kh} \mathbb{E} \textnormal{tr} [ \nabla^2 y_{kh} ( \chi^h_k ) \cdot \\
    &\hspace{5mm} ((\bar{H}( \chi^h_k ) + ( H_{\gamma(0)} - \bar{H} )( \chi^h_k ))((\bar{H}( \chi^h_k ) + ( H_{\gamma(0)} - \bar{H} )( \chi^h_k ))^T) ] ,\\
    Z^h_{tx} &:= u_{kh} \mathbb{E} \textnormal{tr} [ \nabla y_{kh} (\chi^h_k)^T \bar{H}(\chi_k^h) ], \\
    Z_{tt}^h &:= \frac{1}{2} \mathbb{E}[ \partial_t^2 y_{kh} ( \chi_k^h ) ] .
\end{align*}
Note that
$$
\mathbb{E} \Delta \chi_n^{h,n} = \eta_n^h \bar{H},
$$
so we may simplify $Z_1$ to
\begin{align*}
Z^h_1 &= \mathbb{E}[\partial_t y_{kh}(\chi_k^h) + h^{-1} \nabla y_{kh} ( \chi_k^h )^T \eta_n^h \bar{H}(\chi_k^h)] \\
&= \mathbb{E}[\mathbb{E}[\partial_t y_{kh}(\chi_k^h) + \nabla y_{kh} ( \chi_k^h )^T u_n^h \bar{H}(\chi_k^h)]| \chi_k^h ] \\
&= 0 &&\text{by (\ref{feynmankaceqny})}
\end{align*}
Furthermore,
\begin{align*}
Z_{xx}^h &= \frac{1}{2} u^2_{kh} \mathbb{E} \textnormal{tr} [ \nabla^2 y_{kh} ( \chi^h_k ) ((\bar{H}( \chi^h_k )\bar{H}( \chi^h_k ) + 2\bar{H}( \chi^h_k )( H_{\gamma(0)} - \bar{H} ) \\
&\hspace{5mm}+  ( H_{\gamma(0)} - \bar{H} )( \chi^h_k )( H_{\gamma(0)} - \bar{H} )( \chi^h_k ))) ] \\
&= \frac{1}{2} u^2_{kh} \mathbb{E} \textnormal{tr} [ \nabla^2 y_{kh} ( \chi^h_k ) ((\bar{H}( \chi^h_k )\bar{H}( \chi^h_k ) +  ( H_{\gamma(0)} - \bar{H} )( \chi^h_k )( H_{\gamma(0)} - \bar{H} )( \chi^h_k ))) ] \\
&= \frac{1}{2} u_{kh}^2 \mathbb{E} \textnormal{tr} [ \nabla^2 y_{kh} (\chi_k^h) (\bar{H}\bar{H}^T + \Sigma)(\chi_k^h) ] \text{.} 
\end{align*}
The partial derivatives of $y$ are in $G_\kappa$, thus by $(A3)$ we have that $Z_{xx}^h$ has at most polynomial growth. The same is true of $Z_{tt}^h$ and furthermore by $(A2)$, of $Z_{tx}^h$. Therefore, $h^2 ( Z_{xx}^h + Z_{tx}^h + Z_{tt}^h ) = \mathcal{O}(h^2)$.\\~\\
In addition, for $k \in \{ 0,...,3\}$ and $\#\beta = 3 - k$,
\begin{align*}
    \mathbb{E} ( \abs{\bar{H} (\chi_n^h)^\beta }  )^{1 / \# \beta} &\leq \sup_{h \in (0,1)} \| \bar{H}( \chi^h )^* \|_{ \#  \beta , \lfloor \frac{T}{h} \rfloor } \\
    &\leq \| \bar{H} \|_{G_1} \biggl( 1 + \sup_{h \in (0,1)} \| ( \chi^h )^* \|_{\# \beta , \lfloor \frac{T}{h} \rfloor } \biggr) \\
    &\leq c( 1 + \abs{\chi_0} ). &&\text{by Lemma \ref{supbound}}
\end{align*}
Then
\begin{align*}
\mathbb{E} h^k ( \Delta\chi_n^h )^\beta  &= h^k h^{3-k} ( u_kh)^{3-k} \mathbb{E}\bar{H}( \chi_n^h )^\beta  \\
&= \mathcal{O}(h^3). &&\text{as } \abs{u_h} \leq1
\end{align*}
Since $\partial_t \partial_\alpha^{2 - k} y \in G ( [0,T] \times \mathbb{R}^d  )$ for all $k \in \{ 0,1,2 \}$, we have $\mathbb{E} R^h ( \Delta \chi^h_n ) = \mathcal{O}(h^3)$. Thus,
\begin{align*}
    \mathbb{E} g(\chi_{T/h}^h) - g(X_T) &= \mathbb{E}y_T ( \chi^h_{T/h} ) - \mathbb{E} y_0(\chi_0) \\
    &= \sum_{k=0}^{\frac{T}{h} - 1} \mathbb{E}y_{(k+1)h}(\chi_{(k+1)}^h) - \mathbb{E}y_{kh}( \chi_k^h ) \\
    &= h^2 \sum_{k=0}^{\frac{T}{h} - 1} \mathbb{E}\Phi_{kh}( \chi_k^h ) + \mathcal{O}(h^2)
\end{align*}
for all $h \in \mathcal{H}$.
\end{proof}
\begin{lemma}\label{differencebound}
There exists a constant $C > 0$ such that
$$
\sum_{n=0}^{\frac{T}{h} - 1} \abs{\mathbb{E}\Phi_{nh}(\chi_n^h) - \Phi_{nh}(X_nh)} \leq C
$$
for all  $h \in \mathcal{H}$.
\begin{proof}
    Recall the function $\xi$ from Lemma \ref{taylorapplication}. Then
\begin{align*}
    \sum_{n=0}^{\frac{T}{h} - 1} \abs{\mathbb{E}\Phi_{nh}(\chi_n^h) - \Phi_{nh}(X_nh)} &\leq h^2 \sum_{n=0}^{\frac{T}{h} - 1} \sum_{k=0}^{n-1} \mathbb{E}\biggl| \frac{1}{2}u_kh^2 \textnormal{khr}[\nabla^2 \Phi_{nh}(X^{kh}_{nh}) (\bar{H}\bar{H}^T + \Sigma )(\chi_k^h) ] \\  
    &\hspace{5mm} + u_kh \partial_kh \nabla \Phi_{nh}(X^{kh}_{nh}) (\chi_k^h) \bar{H}(\chi_k^h) + \frac{1}{2}\partial^2_kh \Phi_{nh}(X^{kh}_{nh}) (\chi_k^h) \biggr| + h\xi_n(h) \\
    &\leq C(1 + \max_{n,k} \mathbb{E}\biggl| \frac{1}{2}u_kh^2 \textnormal{khr}[\nabla^2 \Phi_{nh}(X^{kh}_{nh}) (\bar{H}\bar{H}^T + \Sigma )(\chi_k^h) ] \\  
    &\hspace{5mm} + u_kh \partial_kh \nabla \Phi_{nh}(X^{kh}_{nh}) (\chi_k^h) \bar{H}(\chi_k^h) + \frac{1}{2}\partial^2_kh \Phi_{nh}(X^{kh}_{nh}) (\chi_k^h) \biggr| ) 
\end{align*}
for some $C > 0$ and all $h \in (0,1)$. Then, with
\begin{align*}
M &:= \max_{n,k} \mathbb{E}\biggl| \frac{1}{2}u_kh^2 \textnormal{khr}[\nabla^2 \Phi_{nh}(X^{kh}_{nh}) (\bar{H}\bar{H}^T + \Sigma )(\chi_k^h) ] \\  
    &\hspace{5mm} + u_kh \partial_kh \nabla \Phi_{nh}(X^{kh}_{nh}) (\chi_k^h) \bar{H}(\chi_k^h) + \frac{1}{2}\partial^2_kh \Phi_{nh}(X^{kh}_{nh}) (\chi_k^h) \biggr| ),
\end{align*}
we have
\begin{align*}
    M &\leq \| \Phi \|_{G_\kappa} \biggl( 1 + \sup_{h \in (0,1)} \| (\chi^h)^* \|_1^\kappa \biggr) &&\text{from } \Phi \in G([0,T]\times \mathbb{R}^d) \\
    &\leq  \bigl[ C'(1 + \abs{\chi_0}) \bigr] ^\kappa &&\text{by Lemma \ref{supbound}}\\
    &\leq 2^{\kappa-1} C' ( 1 + \abs{\chi_0}^\kappa ) \\
    &\leq C ( 1 + \abs{\chi_0}^\kappa ) 
\end{align*}
for some $C > 0$, $\kappa \in \mathbb{N}$, and all $h \in (0,1)$.
\end{proof}

\end{lemma}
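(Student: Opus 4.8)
The key observation is that the sum runs over $T/h$ indices, so to obtain an $h$-independent bound it suffices to show that each summand $\abs{\mathbb{E}\Phi_{nh}(\chi_n^h) - \Phi_{nh}(X_{nh})}$ is of order $n h^2$; the resulting triangular double sum $\sum_n n h^2$ then telescopes to something of order $(T/h)^2 h^2 = T^2$, a constant. The mechanism producing the per-step factor $h^2$ is the \emph{weak} one-step error of the scheme, analyzed exactly as in Lemma \ref{taylorapplication} but now with the test function $\Phi_{nh}$ in place of $g$, and the main work is to keep the polynomial-growth constants uniform over all time indices.

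First I would introduce, for $0 \le k \le n$, the continuous flow $X^{kh}_{nh}$ obtained by solving $dX_t = U_t\bar{H}(X_t)\,dt$ from time $kh$ to time $nh$ with initial value $\chi_k^h$. Since flowing from $nh$ to $nh$ is the identity and the flow started at $0$ from the fixed $\chi_0$ is deterministic, one has $X^{nh}_{nh} = \chi_n^h$ and $X^0_{nh} = X_{nh}$, so that
$$\mathbb{E}\Phi_{nh}(\chi_n^h) - \Phi_{nh}(X_{nh}) = \sum_{k=0}^{n-1}\Bigl(\mathbb{E}\bigl[\Phi_{nh}(X^{(k+1)h}_{nh})\bigr] - \mathbb{E}\bigl[\Phi_{nh}(X^{kh}_{nh})\bigr]\Bigr).$$
Each increment compares, after propagation by the continuous flow up to time $nh$, a single discrete SGD step against a single continuous step of length $h$ issued from the same $\chi_k^h$. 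Taylor-expanding the flow-composed function $x \mapsto \Phi_{nh}(X^{(k+1)h}_{nh}(x))$ around $\chi_k^h$, the first-order terms cancel because the one-step conditional drift $\mathbb{E}[\Delta\chi_k^h\mid\chi_k^h] = \eta_k^h\bar{H}(\chi_k^h)$ matches the continuous drift $hU_{kh}\bar{H}$ to order $h$; what survives is the $h^2$ bracket displayed in the statement — the second-order spatial term weighted by $\bar{H}\bar{H}^T + \Sigma$, the mixed space-time term, and the pure second time-derivative term — together with a remainder of order $h^3$.

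It then remains to bound the $h^2$-coefficient uniformly. By Lemma \ref{yfunctionspace} we have $y \in G^\ell([0,T]\times\mathbb{R}^d)$, so every spatial and temporal derivative entering $\Phi$ lies in $G_\kappa$; combined with (A2) and (A3), which bound $\bar{H}$ and $\Sigma$ by linear growth, this gives $\Phi \in G([0,T]\times\mathbb{R}^d)$ with all relevant derivatives of polynomial growth, and the same holds after composition with the smooth flow $X^{kh}_{nh}$ (using Theorem \ref{sdeexpindexswap} to differentiate the flow in its initial value and to exchange expectation with differentiation). Evaluating the bracket along $X^{kh}_{nh}$ and $\chi_k^h$ and taking expectation, I would bound the resulting maximum $M := \max_{n,k}\mathbb{E}\abs{\cdots}$ by $\norm{\Phi}_{G_\kappa}\bigl(1 + \sup_{h}\norm{(\chi^h)^*}_1^\kappa\bigr)$ and then, via Lemma \ref{supbound}, by $C(1 + \abs{\chi_0}^\kappa)$, a constant independent of $h$, $n$, and $k$. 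Finally, the double sum carries $h^2 \sum_{n=0}^{T/h-1}\sum_{k=0}^{n-1} 1 \approx \tfrac{1}{2}T^2$ copies of an $M$-bounded quantity, and the leftover term $h\xi(h)$ inherited from Lemma \ref{taylorapplication} is bounded since $\xi$ is; hence the whole sum is at most a constant $C$.

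The principal obstacle is the \emph{uniformity} in the third step: one must ensure that the polynomial-growth norms of the derivatives of the flow-composed test function $\Phi_{nh}\circ X^{kh}_{nh}$, and the moment estimates used to control them, do not degrade as $n-k$ (the number of propagated flow-steps) grows up to $T/h$. This is precisely what Theorem \ref{sdeexpindexswap} and the $h$-uniform moment bound of Lemma \ref{supbound} are designed to supply; without this uniformity the factor $(T/h)^2$ could not be absorbed by the prefactor $h^2$ and the bound would blow up as $h \to 0$.
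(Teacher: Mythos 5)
Your proposal takes essentially the same route as the paper's proof: the same telescoping decomposition of $\mathbb{E}\Phi_{nh}(\chi_n^h) - \Phi_{nh}(X_{nh})$ along the interpolating flows $X^{kh}_{nh}$, the same Taylor expansion in which the first-order drift terms cancel and leave the $h^2$ bracket of second-order terms, the same uniform bound $M \le \|\Phi\|_{G_\kappa}\bigl(1+\sup_h \|(\chi^h)^*\|_1^\kappa\bigr) \le C(1+|\chi_0|^\kappa)$ via Lemma \ref{supbound}, and the same counting of roughly $T^2/(2h^2)$ summands each carrying a factor $h^2$. If anything, you spell out explicitly the flow interpolation and the uniformity-in-$(n,k)$ issue that the paper's terser argument leaves implicit.
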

\subsection{Proof of Theorem \ref{sgdodeapproxmain}}
Fix $g \in C^\infty(\mathbb{R}^d)$ and let $y_t(x) = g(X_T^t(x))$, where $X$ is the solution of the equation
$$
dX_t =U_t \bar{H}(X-t) \, dt.
$$
By Lemma \ref{taylorapplication},
$$
\mathbb{E}g(\chi_{T/h}^h) - g(X_T) = h \sum_{n=0}^{\frac{T}{h} - 1} h \mathbb{E} \Phi_{nh} (\chi_n^h) + \mathcal{O} (h^2).
$$
We have that
\begin{align*}
\sum_{n=0}^{\frac{T}{h}- 1} h \mathbb{E} \Phi_{nh} (\chi_n^h) &= \int_0^T \Phi_t (X_t) \, dt + h \sum_{n=0}^{\frac{T}{h} - 1} \Bigl[ \mathbb{E} \Phi_{nh}(\chi_n^h) - \Phi_{nh}(X_nh) \Bigr] \\
&\hspace{5mm} + \sum_{n=0}^{\frac{T}{h} - 1} h \Phi_{nh}(X_{nh}) - \int_0^T  \Phi_t (X_t) \, dt.
\end{align*}
Then using Lemma \ref{differencebound}, we have
\begin{align*}
    h \sum_{n=0}^{\frac{T}{h} - 1} \abs{\mathbb{E}\Phi_{nh}(X_{nh}) - \Phi_{nh}(X_nh)} &\leq hC, \\
    \abs{\sum_{n=0}^{\frac{T}{h} - 1}h \Phi_{nh} - \int_0^T \Phi_t (X_t) \, dt} &\leq hC' ,
\end{align*}
where $C,C' > 0$ are constants. Therefore,
$$
\mathbb{E}g(\chi_{T/h}^h) - g(X_t) =  h \int_0^T \Phi_t (X_t) \,dt + \mathcal{O}(h^2).
$$  
In order to extend the above to the main theorem, we require that various bounds still hold under a time-dependent objective function $J$. Many of the lemmas follow as above if we are able to verify that $J$ satisfies similar properties to $H$, hence the following lemma. 
\begin{lemma}\label{varyingobjectivefunction}
 Asssume (A1),(A2), and (A3). Then
\begin{enumerate}
    \item [\textnormal{(A2$^*$)}] For any $r \in \Gamma$, $n \in [0,T]$, $J^{(n)}_r (x) \in G_1(\mathbb{R}^{2d})$, and there exists a constant $C$ such that $C = \sup_{n \geq 0} \| J^{(n)}_r (x) \|_{G_1}$.
    \item [\textnormal{(A3$^*$)}]  Define 
    $$
    \bar{J} = \mathbb{E}_{\gamma}J = \sum_{i \in \Gamma} \mathbb{P}[\gamma = i] J_i ,
    $$ 
    and $E = \mathbb{E}_\gamma(J_{\gamma(0)} - \bar{J})(J_{\gamma(0)} - \bar{J})^T$.\\
    $\bar{J}$ and $\sqrt{E}$ are Lipschitz continuous and all partial derivatives up to order $2$ are bounded.
\end{enumerate}
\end{lemma}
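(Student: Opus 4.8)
The plan is to compute $J^{(n)}_{r}=\nabla j^{(n)}_{r}$ in closed form and to observe that the augmentation contributes only a deterministic (that is, $r$-independent) affine term; both (A2$^*$) and (A3$^*$) then reduce to the properties of $H$, $\bar H$ and $\Sigma$ already granted by (A2)--(A3). Writing $x=(x_{1:d},x_{d+1:2d})\in\mathbb{R}^{d}\times\mathbb{R}^{d}$, the only $r$-dependent summand of $j^{(n)}_{r}$ is $f_{r}(x_{1:d})$, whose gradient contributes $-H_{r}$ in the first $d$ coordinates and $0$ in the last $d$; every other summand is a fixed quadratic form in $x$. Hence
$$
J^{(n)}_{r}(x)=\begin{pmatrix} -H_{r}(x_{1:d}) \\ 0\end{pmatrix}+L_{n}x,
$$
where $L_{n}$ is the deterministic matrix of the quadratic correction, with diagonal blocks assembled from the entries $\mathring\zeta^{i}_{n}/\mathring\eta^{i}_{n}$ and $(1+\mathring\zeta^{i}_{n})/\mathring\eta^{i}_{n}$. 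Since the theorem holds the learning rate and momentum constant, $L_{n}\equiv L$ is independent of $n$ (more generally (A1) would keep its entries bounded in $n$).

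For (A2$^*$): the linear map $x\mapsto Lx$ lies in $G_{1}(\mathbb{R}^{2d})$ with $G_{1}$-norm at most $\|L\|$, and by (A2) the first block, viewed as a function on $\mathbb{R}^{2d}$, lies in $G_{1}(\mathbb{R}^{2d})$ with norm $\|H\|_{G_{1}}$, using $\|x_{1:d}\|\le\|x\|$. Because $G_{1}$ is closed under addition, $J^{(n)}_{r}\in G_{1}(\mathbb{R}^{2d})$ and $\|J^{(n)}_{r}\|_{G_{1}}\le\|H\|_{G_{1}}+\|L\|$; as this bound does not depend on $n$, the supremum over $n\ge0$ is the finite constant $C$.

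For (A3$^*$): averaging over $\gamma$ annihilates only the stochastic block, giving
$$
\bar J(x)=\begin{pmatrix} -\bar H(x_{1:d}) \\ 0\end{pmatrix}+Lx,\qquad J_{\gamma}(x)-\bar J(x)=\begin{pmatrix} -(H_{\gamma}-\bar H)(x_{1:d}) \\ 0\end{pmatrix},
$$
where the affine term $Lx$ cancels entirely in the fluctuation. By (A3) $\bar H$ is Lipschitz with bounded partials up to order $2$, while $Lx$ is smooth with Lipschitz constant $\|L\|$, constant first derivatives and vanishing higher derivatives; thus $\bar J$ is Lipschitz with bounded derivatives up to order $2$. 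From the fluctuation above,
$$
E=\mathbb{E}_{\gamma}(J_{\gamma}-\bar J)(J_{\gamma}-\bar J)^{T}=\begin{pmatrix} \Sigma & 0 \\ 0 & 0\end{pmatrix},
$$
with $\Sigma$ as in (A3). The principal square root of a block-diagonal positive-semidefinite matrix is block-diagonal, so $\sqrt E=\mathrm{diag}(\sqrt\Sigma,0)$, which inherits the Lipschitz continuity and bounded derivatives of $\sqrt\Sigma$ furnished by (A3).

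The only step that needs care is the cancellation of the deterministic correction in $J_{\gamma}-\bar J$: it is precisely this cancellation that collapses $E$ to the block form $\mathrm{diag}(\Sigma,0)$ and lets $\sqrt E$ inherit the regularity of $\sqrt\Sigma$ rather than demanding a fresh square-root estimate. I would also flag that the constants above scale with $\|L\|$, hence with $1/\mathring\eta$ and so implicitly with $h$; this is immaterial for (A2$^*$)--(A3$^*$) as stated, but the $h$-dependence will have to be tracked when these analogues are inserted into Theorem~\ref{sgdodeapproxmain} to keep the error genuinely $\mathcal{O}(h^{2})$.
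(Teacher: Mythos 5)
Your proof takes essentially the same route as the paper's: split $J^{(n)}_r$ into the stochastic block $(-H_r,0)$ plus a deterministic affine correction, deduce (A2$^*$) and the Lipschitz bound from (A2)--(A3) together with linearity, and observe that the correction cancels in $J_\gamma-\bar J$ so that $E$ collapses to $\mathrm{diag}(\Sigma,0)$ and $\sqrt{E}$ inherits the regularity of $\sqrt{\Sigma}$. Your closing caveat that the constants scale with $\|L\|\sim 1/\mathring{\eta}$, hence implicitly with $h$, is a point the paper's own proof suppresses (it bounds the momentum coefficient by invoking $\abs{u^i},\abs{\zeta^i}\le 1$ even though the coefficient appearing in $J$ is $\mathring{\zeta}/\mathring{\eta}$), so your version is, if anything, the more careful one.
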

\begin{proof}
\textnormal{(A2$^*$)} Let $r \in \Gamma$ and $C$ be such that 
$$
\abs{H_r(x)} \leq C (1 + \abs{x})
$$
For $x = \begin{pmatrix}
    x_2 \\
    x_1
\end{pmatrix}$ with $x_1,x_2 \in \mathbb{R}^d$, we have $\abs{x_1 - x_2} \leq 2 \abs{x}$. Furthermore, from (A1), as $\abs{u^i},\abs{\zeta^i} \leq 1$, then
$$
\abs{J_r(x)} \leq C (1 + \abs{x_1}) + 2 \abs{x_1 - x_2} 
\leq (C + 4) (1 + \abs{x}),
$$
thus $J_r(x) \in G_1(\mathbb{R}^{2d})$.\\~\\
(A3$^*$) Let $L$ be the Lipschitz constant of $\bar{H}$. Then, by the same properties used in the proof of (A2$^*$)
\begin{align*}
    \abs{\partial_\alpha J(x) - \partial_\alpha J(y)} &\leq \abs{\partial_\alpha  \begin{pmatrix}
        \bar{H}(x_2) + \eta(x_2 - x_1) \\
        x_2 - x_1 
    \end{pmatrix} - \partial_\alpha \begin{pmatrix}
        \bar{H}(y_2) + \eta(y_2 - y_1) \\
        y_2 - y_1 
    \end{pmatrix} } \\
    &\leq \abs{\partial_\alpha  \begin{pmatrix}
        \bar{H}(x_2) - \bar{H}(y_2) + \eta(x_2 - x_1 - y_2 + y_1) \\
        x_2 - x_1 - y_2 + y_1 
    \end{pmatrix} } \\
    &\leq \abs{\partial_\alpha  \begin{pmatrix}
        \bar{H}(x_2) - \bar{H}(y_2) \\
        0 
        \end{pmatrix}} + \abs{ \begin{pmatrix} \eta(x_2 - x_1 - y_2 + y_1) \\
        x_2 - x_1 - y_2 + y_1 
    \end{pmatrix} } \\
    &\leq (2L + 4)\abs{x - y}  .
\end{align*}
Now, note that 
$$
J_{\gamma}(x) - \bar{J}(x) = \begin{pmatrix}
    H_{\gamma}(x_2) - (x_2 - x_1) \\
    x_2 - x_1 \end{pmatrix} - \begin{pmatrix}
    \bar{H}(x_2) - (x_2 - x_1) \\
    x_2 - x_1
    \end{pmatrix} = \begin{pmatrix}
    H_{\gamma}(x_2) - \bar{H}(x_2) \\
    0 \end{pmatrix} .
    $$
Thus,  $E   = \Sigma(x_2)$ and the proof concludes by $(A3)$.
\end{proof}
With this result, the previous lemmas used in the proof of Theorem \ref{sgdodeapproxmain} can be adapted to the time-dependent objective function $J$.  
\subsection{Proof of Theorem \ref{sgdmomentumapproxmain}, ODE Case}
\begin{proof}
For $n \geq 2$, let $\tilde{\chi}_n$ be the process described by (\ref{modifiedlrsgd}) with learning rate and momentum parameter $\mathring{\eta}$ and $\mathring{\zeta}$, respectively. Suppose we are given initial conditions $x_1, x_0$. We rewrite this as a $2d$-dimensional process $\chi_n$,
where
\begin{align*}
\chi^h_{{n+1}_{(1)}} &= \chi^h_{n_{(1)}} + \mathring{\eta}^h_n H_{\gamma (n)} (\chi_{n_{(1)}}^h) + \mathring{\zeta}_n ( \chi^h_{n_{(1)}} - \chi^h_{n_{(2)}} ) \\
\chi^h_{{n+1}_{(2)}} &= \chi^h_{n_{(1)}}
\end{align*}
with initial condition $\chi_0 = (x_1,x_0)$. Here, a subscript $(1)$ indicates the first $d$ dimensions and $(2)$ the final $d$ dimensions. In other words,
\begin{equation}
    \chi_{n+1}^h = \chi_n^h + \eta_n^h J^{(n)}_{\gamma(n)}(\chi^h_n),
\end{equation}
where
$$
\eta^h_n J^{(n)}_{\gamma(n)}(\chi_n^h) = \begin{pmatrix}
\mathring{\eta}_n^h H_{\gamma(n)} (\tilde{\chi}_n^h) + \mathring{\zeta}_n ( \tilde{\chi}^h_n - \tilde{\chi}^h_{n-1} )   \\
 \tilde{\chi}^h_n - \tilde{\chi}^h_{n-1}
\end{pmatrix} 
$$
Then $J^{(n)}_{\gamma(n)}(\chi_n) = \nabla j^{(n)}_{\gamma(n)} (\chi_n)$, where given $x \in \mathbb{R}^{2d}$,
$$
j^{(n)}(x) = f(x_1,...,x_d) + \sum_{i=1}^d \Bigl[ (1+\mathring{\zeta}_n^i)\frac{x_i^2}{2 \mathring{\eta}_n^i} + \mathring{\zeta}_n^i \frac{x_{d+i}^2}{2\mathring{\eta}_n^i} - \frac{\mathring{\zeta}_n^i x_i}{\mathring{\eta}_n^i}  x_{d+i}\Bigr]
$$
and
$$
\eta^h_n = \begin{pmatrix}
h\mathring{u}^1_n & & & & & \\
& \ddots & & & &\\
& & h\mathring{u}^d_n & & & \\
 & & & -h\mathring{u}^1_n/\mathring{v}_n ^1& &  \\
 & & & & \ddots & \\
 & & & & & -h\mathring{u}^d_n/\mathring{v}^d_n
\end{pmatrix}.
$$
Thus, we may apply Theorem \ref{sgdodeapproxmain} with $J$ in place of $H$.
\end{proof}
\section{SDE Approximation}\label{sectionsdeproof}
The method above can be extended to a Stochastic Differential Equation (SDE) approximation by considering the equation
\begin{equation}\label{sdeapproxequation}
    d X^h_t =  U_t \bar{H} (X_t^h) \, dt + U_t \sqrt{h\Sigma(X^h_t)}  \, dW_t.
\end{equation}
As with the ODE approximation, we will first prove the form of the above theorem for standard SGD, which takes the form below.
\begin{theorem}\label{momentumsderesult}
Assume $(A1)$,$(A2)$, and $(A3)$. For all $h \in \mathcal{H}$ denote by $X^h$ the solution of the SDE
$$
dX_t^h = u_t \bar{H}(X_t^h) dt + u_t \sqrt{h \Sigma (X_t^h) } dW_t
$$
with initial condition $X_0^h = x$. Then for all $g \in G^\infty (\mathbb{R}^d)$,
$$
\mathbb{E}g(\chi_{T/h}^h) =  h  \int_0^T \varphi_t^g dt + \mathcal{O} (h^2)
$$
for all $h$ such that $\frac{T}{h} \in \mathbb{N}$. 
\end{theorem}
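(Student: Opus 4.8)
The plan is to mirror the architecture of the proof of Theorem \ref{sgdodeapproxmain} almost verbatim, with the single structural change that the backward function $y$ is now attached to the \emph{full} generator of the SDE \eqref{sdeapproxequation} rather than to a pure transport equation. As will be seen, this change is exactly what removes the residual term $\tfrac12\textnormal{tr}[\nabla^2 y\,\Sigma]$ that had to be carried through $\Phi$ in the ODE case, leaving the cleaner integrand $\varphi$. Throughout I prove the expansion for the weak error $\mathbb{E}g(\chi_{T/h}^h)-\mathbb{E}g(X_T^h)$, consistent with the ODE statement and with the main theorem.

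First I would fix $g\in G^\infty(\mathbb{R}^d)$ and set $y_t(x):=\mathbb{E}[g(X_T^{h,t}(x))]$, where $X^{h,t}(x)$ is the solution of \eqref{sdeapproxequation} started from $x$ at time $t$. Since $u_t\bar H$ and $u_t\sqrt{h\Sigma}$ are Lipschitz with bounded derivatives by (A1) and (A3), uniformly in $h$ because $\sqrt h\le 1$, Lemma \ref{feynmankacv2} identifies $y$ as the solution of the backward Kolmogorov equation
\begin{equation}\label{sdebackward}
\partial_t y_t(x)+u_t\,\nabla y_t(x)^T\bar H(x)+\tfrac12 h\,u_t^2\,\textnormal{tr}[\nabla^2 y_t(x)\Sigma(x)]=0,\qquad y_T=g,
\end{equation}
and Lemma \ref{yfunctionspace}, applied with $B=\bar H$ and $S=\sqrt{h\Sigma}$, gives $y\in G^\ell([0,T]\times\mathbb{R}^d)$ with polynomial-growth constants independent of $h$. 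Alongside these I would record the two families of moment bounds needed later: Lemma \ref{supbound} controls $\sup_h\|(\chi^h)^*\|_{p,\lfloor T/h\rfloor}$ by $C(1+|x|)$, and the standard Gronwall and Burkholder estimates for \eqref{sdeapproxequation} give $\sup_h\mathbb{E}\sup_{t\le T}|X_t^h|^p<\infty$ with the same linear dependence on $|x|$, again uniformly in $h$ since the diffusion coefficient is $O(\sqrt h)$.

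Next I would telescope $\mathbb{E}g(\chi_{T/h}^h)-\mathbb{E}g(X_T^h)=\sum_{k=0}^{T/h-1}\bigl(\mathbb{E}y_{(k+1)h}(\chi_{k+1}^h)-\mathbb{E}y_{kh}(\chi_k^h)\bigr)$ and Taylor expand each summand about $(kh,\chi_k^h)$ to third order, exactly as in Lemma \ref{taylorapplication}, with $\delta=\Delta\chi_k^h=\eta_k^h H_{\gamma(k)}(\chi_k^h)$. Inserting the conditional moments $\mathbb{E}[\Delta\chi_k^h\mid\chi_k^h]=h\,u_{kh}\bar H(\chi_k^h)$ and $\mathbb{E}[\Delta\chi_k^h(\Delta\chi_k^h)^T\mid\chi_k^h]=h^2 u_{kh}^2(\bar H\bar H^T+\Sigma)(\chi_k^h)$, the backward equation \eqref{sdebackward} now annihilates \emph{both} the $O(h)$ drift term and the $\tfrac12 h^2 u_{kh}^2\textnormal{tr}[\nabla^2 y\,\Sigma]$ fluctuation term at once. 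This is the sole place where the argument diverges from the ODE proof, and it is precisely why the $\Sigma$ contribution disappears from the integrand. What survives at each step is $h^2\,\mathbb{E}\varphi_{kh}(\chi_k^h)$ with $\varphi_t(x)=\tfrac12\textnormal{tr}[\nabla^2 y_t(x)(u_t\bar H)(u_t\bar H)^T(x)]+\partial_t\nabla y_t(x)^T(u_t\bar H)(x)+\tfrac12\partial_t^2 y_t(x)$, plus the third-order remainder $R^h(\Delta\chi_k^h)$; the moment bounds above, $|u|\le 1$, and $\mathbb{E}|\Delta\chi_k^h|^3=O(h^3)$ give $\sum_k\mathbb{E}R^h(\Delta\chi_k^h)=O(h^2)$ just as in Lemma \ref{taylorapplication}, so $\mathbb{E}g(\chi_{T/h}^h)-\mathbb{E}g(X_T^h)=h^2\sum_k\mathbb{E}\varphi_{kh}(\chi_k^h)+O(h^2)$.

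Finally I would pass from the Riemann sum $h\sum_k\mathbb{E}\varphi_{kh}(\chi_k^h)$ to $\int_0^T\mathbb{E}\varphi_t(X_t^h)\,dt$, and I expect this to be the main obstacle. In the ODE proof the flow $X_{nh}$ is deterministic, so Lemma \ref{differencebound} only compares $\mathbb{E}\Phi(\chi_n^h)$ with the pointwise value $\Phi(X_{nh})$; here $X^h$ is genuinely stochastic and the comparison must instead be between the two expectations $\mathbb{E}\varphi_{nh}(\chi_n^h)$ and $\mathbb{E}\varphi_{nh}(X_{nh}^h)$. I would re-run the difference estimate of Lemma \ref{differencebound} with the deterministic flow replaced by the SDE flow, establishing the per-step weak comparison by applying the one-step telescoping-and-Taylor expansion above to the smooth, polynomially bounded observable $\varphi_{nh}\in G^\ell$ (whose growth constants are uniform by Lemma \ref{yfunctionspace}), and then summing; the diffusion term must now be carried through the expansion, which is the genuinely new bookkeeping. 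The uniform moment bounds for both $\chi^h$ and $X^h$ give $\sum_n|\mathbb{E}\varphi_{nh}(\chi_n^h)-\mathbb{E}\varphi_{nh}(X_{nh}^h)|\le C$, while the temporal regularity of $t\mapsto\mathbb{E}\varphi_t(X_t^h)$ controls the remaining Riemann-sum error by $O(h)$. Combining yields $h^2\sum_k\mathbb{E}\varphi_{kh}(\chi_k^h)=h\int_0^T\mathbb{E}\varphi_t(X_t^h)\,dt+O(h^2)$, which is the claimed expansion.
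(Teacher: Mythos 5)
Your first two stages reproduce the paper's argument faithfully: the paper's Lemma \ref{taylorapplicationsde} is exactly your observation that the backward Kolmogorov equation \eqref{feynmankacsde} now carries the extra term $\tfrac12 h u_t^2\,\textnormal{tr}[\nabla^2 y_t\,\Sigma]$ and therefore annihilates the fluctuation contribution inside $Z_1^h$, leaving $Z_{xx}^h$ with only the $\bar H\bar H^T$ part; and the passage from the Riemann sum to the integral via an SDE version of Lemma \ref{differencebound} is also how the paper proceeds. So up to the identity $\mathbb{E}g(\chi_{T/h}^h)-\mathbb{E}g(X_T^h)=h\int_0^T\mathbb{E}\varphi_t^h(X_t^h)\,dt+\mathcal{O}(h^2)$ (the paper's \eqref{prelimboundsde}) you and the paper agree.

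The genuine gap is what comes after that identity. Your leading-order integrand $\varphi_t^h(X_t^h)$ still depends on $h$ in two places: the backward function $y^h$ is built from the flow of an SDE whose diffusion coefficient is $\mathcal{O}(\sqrt h)$, and the process $X^h$ you evaluate it along is likewise $h$-dependent. An expansion of the form $h\int_0^T\varphi_t\,dt+\mathcal{O}(h^2)$ is only a first-order error expansion if the coefficient of $h$ is an $h$-independent quantity; otherwise the ``$\mathcal{O}(h^2)$'' claim is vacuous, since arbitrary $\mathcal{O}(h)$ corrections can hide inside $h\int\varphi^h$. The paper flags this explicitly (``we require some additional results to remove the $h$-reliance'') and spends the bulk of the section on it: Lemma \ref{ding2} shows $(y^h-y^0)/h\in G^2$, which together with Theorem \ref{basesdebound} gives $\abs{\varphi_t^h(x)-\varphi_t^0(x)}\leq hC_1(x)$ with $C_1$ of polynomial growth, and Lemma \ref{itotaylorexpansion} gives $\abs{\mathbb{E}\varphi_t^0(X_t^h)-\varphi_t^0(X_t^0)}=\mathcal{O}(h)$, so that the integrand can be replaced by the $h$-free $\varphi_t^0(X_t^0)$ at total cost $\mathcal{O}(h^2)$. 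Proving Lemma \ref{ding2} in turn requires differentiating $y^h$ through the flow, i.e.\ Proposition \ref{expdifforderswap} (interchanging $\partial_{z_j}$ with $\mathbb{E}$ via the first-variation process $Y$) and a Feynman--Kac equation for the derivative $w^h$. None of this machinery appears in your proposal, and it is not a routine Gronwall estimate: you need quantitative $\mathcal{O}(h)$ closeness of $y^h$ to $y^0$ \emph{together with its derivatives up to order two}, in a polynomially weighted sense, because $\varphi$ involves $\nabla^2 y$, $\partial_t\nabla y$ and $\partial_t^2 y$. Without this step your conclusion should read $h\int_0^T\mathbb{E}\varphi_t^h(X_t^h)\,dt+\mathcal{O}(h^2)$, which is strictly weaker than the theorem as the paper intends it.
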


The proof of the Theorem will again require a combination of results established within this section, following the structure as in Figure \ref{figsde}.

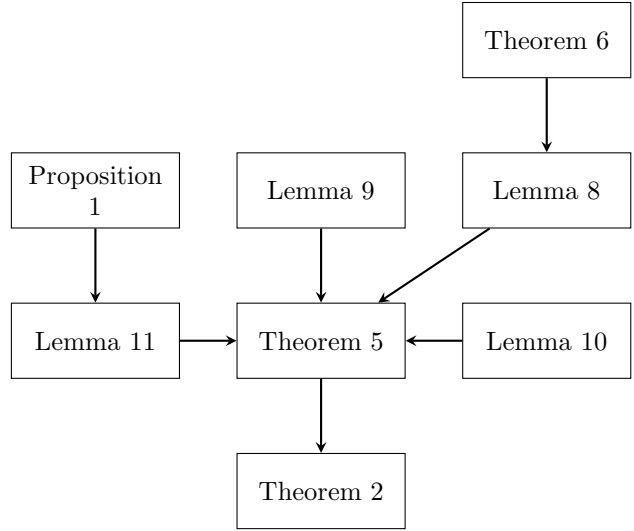
\begin{figure}[H]
\centering
\begin{tikzpicture}[node distance=2cm]

\node (lem2) [lemma2, xshift=3.5cm] { Theorem \ref{basesdebound}};
\node (lem4) [lemma2, below of=lem2, xshift=-3cm] {Lemma \ref{taylorapplicationsde} };
\node (thm1) [lemma2, below of=lem2] {Lemma \ref{sdedifferencebound}};
\node (lem5) [lemma2, below of=thm1] {Lemma \ref{itotaylorexpansion}};
\node (lem6) [lemma2, below of=lem2, xshift=-6cm]  {Proposition \ref{expdifforderswap}};
\node (lem7) [lemma2, below of=lem6] {Lemma \ref{ding2}};
\node (thm2) [lemma2, below of=lem4, xshift=0cm] {Theorem \ref{momentumsderesult}};
\node (thm3) [lemma2, below of=thm2] {Theorem \ref{sgdmomentumapproxmain}};

\draw [arrow] (lem5) -- (thm2);
\draw [arrow] (lem2) -- (thm1);
\draw [arrow] (thm1) -- (thm2);
\draw [arrow] (thm2) -- (thm3);
\draw [arrow] (lem7) -- (thm2);
\draw [arrow] (lem4) -- (thm2);
\draw [arrow] (lem6) -- (lem7);

\end{tikzpicture}
\caption{Flow of the proof to the Stochastic Differential Equation case}
\label{figsde}
\end{figure}
The SDE case of Theorem \ref{sgdmomentumapproxmain} is an extension of the SDE case for constant learning rates. Lemma \ref{taylorapplicationsde} is analogous in this case to Lemma \ref{taylorapplication}, with the key difference being the introduction of dependency on $h$. As a consequence, $y$ satisfies the time-dependent Feynman-Kac equation
\begin{equation}\label{feynmankacsde}
    \partial_t y_t(x)  + \nabla y_t (x)^T u_t \bar{J}(x) + \frac{1}{2} hu^2_t \textnormal{tr}(\nabla^2 y_t (x) \Sigma (x)) = 0 , \quad y_T (x) = g(x).
\end{equation}
Thus, we require some additional results to remove the $h$-reliance. 
\begin{theorem}\label{basesdebound}
Let
$$
b: (0,1) \times [0,T] \times \mathbb{R}^d \rightarrow \mathbb{R}^d 
$$
be a family of functions such that for every $h \in (0,1)$, $b^h \in G_1(X) \cap \text{Lip}^\ell$, and in addition let
$$
    \sigma: [0,T] \times \mathbb{R}^d \rightarrow \mathbb{R}^{d \times d} \in G_1(X) \cap \text{Lip}^\ell 
$$
and $L$ be the sum of the Lipschitz constants of $b$ and $\sigma$. Then the stochastic differential equation
$$
dX_t^h = b_t^h (X_t^h) dt + \sqrt{h} \sigma_t (X_t^h) dW_t
$$
admits a unique solution $X$, with the property furthermore there exists 
$$
\| X^* \|_{p,T} \leq C(1 + \abs{p})
$$    
for some constant $C$ depending only on $L$ and $T$.
\end{theorem}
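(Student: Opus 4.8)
The plan is to treat existence--uniqueness and the quantitative bound separately, since the first is routine while the second is where the stated form of the estimate is delicate. For existence and uniqueness, note that for each fixed $h\in(0,1)$ the coefficients $b^h$ and $\sqrt h\,\sigma$ are globally Lipschitz with constant at most $L$ (the $\sqrt h<1$ factor only improves the Lipschitz constant of $\sigma$) and, being in $G_1(\mathbb{R}^d)$, satisfy a linear growth bound $\abs{b^h(x)}+\sqrt h\,\abs{\sigma_t(x)}\le K(1+\abs{x})$. I would run the standard Picard iteration directly in the norm $\norm{\cdot^*}_{p,T}$: set $X^{(0)}_t\equiv X_0$ and $X^{(n+1)}_t=X_0+\int_0^t b^h_s(X^{(n)}_s)\,ds+\sqrt h\int_0^t\sigma_s(X^{(n)}_s)\,dW_s$, and use H\"older on the drift integral together with the Burkholder--Davis--Gundy inequality on the stochastic integral to get $\norm{(X^{(n+1)}-X^{(n)})^*}_{p,t}^p\le (C_{L,T}\,t)\,\norm{(X^{(n)}-X^{(n-1)})^*}_{p,t}^p$ on a short interval, which is a contraction; uniqueness follows from the same estimate applied to the difference of two solutions plus Gr\"onwall, and concatenating over $\lceil T/t_0\rceil$ subintervals extends both to $[0,T]$. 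This step uses only $L<\infty$ and is uniform in $h$.

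For the quantitative claim $\norm{X^*}_{p,T}\le C(1+\abs{p})$, the essential feature is that $C$ must not deteriorate as $p\to\infty$ and that the growth in $p$ must be at most linear. Since a bound of the form $\norm{Z}_p\le C(1+p)$ for all $p\ge1$ is, up to constants, equivalent to a sub-exponential tail, the plan is to prove a tail bound first and then pass to moments by the layer-cake identity. Concretely I would aim to show that there exist $a,c>0$, depending only on $L$ and $T$ and uniform in $h\in(0,1)$, with
$$
\mathbb{P}\Bigl(\sup_{s\in[0,T]}\abs{X^h_s}>u\Bigr)\le a\,e^{-cu},\qquad u\ge0,
$$
and then convert via
$$
\mathbb{E}\sup_{s\le T}\abs{X^h_s}^p=\int_0^\infty p\,u^{p-1}\,\mathbb{P}\Bigl(\sup_{s\le T}\abs{X^h_s}>u\Bigr)\,du\le a\,\Gamma(p+1)\,c^{-p},
$$
so that Stirling's estimate $\Gamma(p+1)^{1/p}\le C'p$ yields $\norm{X^{h,*}}_{p,T}\le C(1+\abs{p})$ uniformly in $h$; taking the supremum over $h$ (already built into the norm) gives the claim with $C=C(L,T)$. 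The tail bound itself I would obtain by an exponential-supermartingale (Bernstein-type) argument: applying It\^o's Lemma (Lemma \ref{itolemma}) to $e^{\lambda(1+\abs{X^h_t}^2)^{1/2}}$ and using the growth control on $b^h,\sigma$, one identifies a threshold $\lambda_0=\lambda_0(L)$ below which $e^{\lambda(1+\abs{X^h_t}^2)^{1/2}-K\lambda t}$ is a nonnegative supermartingale (the $\sqrt h$ factor again only helps, since $h<1$), and Doob's maximal inequality followed by optimization over $\lambda\le\lambda_0$ produces the displayed tail with $c$ governed only by $L$ and $T$.

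The main obstacle is precisely this last step, and it is where the hypotheses must be read sharply. A naive Gr\"onwall estimate on $\mathbb{E}(1+\abs{X_t}^2)^{p/2}$ produces constants that blow up with $p$, and for coefficients with genuine linear growth the $\sup$-moments in fact grow exponentially rather than linearly in $p$: the geometric-Brownian-motion example $dX=X\,dW$ already has $\norm{X_T}_p$ exponential in $p$ and only log-normal, not sub-exponential, tails. Consequently the stated linear-in-$p$ bound with a $p$-free constant cannot be extracted from $G_1$ growth alone, and closing the argument forces the effectively bounded regime relevant to the application --- where the diffusion coefficient and the fluctuating part of the drift are controlled uniformly so that $\lambda_0$ can be chosen independent of the solution's position --- together with a fixed initial datum absorbed into $K$. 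I expect the careful identification of that admissible $\lambda$-range, uniform in $h\in(0,1)$, to be the genuine technical hurdle, and I would flag that without such a strengthening the estimate should instead be stated with a $p$-dependent constant or a bound of the form $C(1+\abs{X_0})$ in the initial datum.
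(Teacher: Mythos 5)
The paper does not actually prove Theorem \ref{basesdebound}: its ``proof'' is a one-line citation to \cite{Kunita2004}, Theorem 3.1, the classical existence--uniqueness and $L^p$ moment-bound result for SDEs with Lipschitz, linearly growing coefficients. Your first part --- Picard iteration in the $\norm{\cdot^*}_{p,T}$ norm, H\"older on the drift integral, Burkholder--Davis--Gundy on the stochastic integral, contraction on short intervals, Gr\"onwall for uniqueness, concatenation to $[0,T]$ --- is a sound and standard reconstruction of exactly that result, and is more self-contained than what the paper offers.

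The genuine problem is your second part, and it stems from taking a typo literally. The bound ``$\norm{X^*}_{p,T}\le C(1+\abs{p})$'' is a misprint for $C(1+\abs{x})$: the estimate is meant to be linear in the \emph{initial condition} $x$, not in the exponent $p$, and $C$ is permitted to depend on $p$. You can verify this from how the theorem is invoked in the proof of Lemma \ref{sdedifferencebound}, where the conclusions drawn from it are $\norm{b(X)^*}_p\le c_1(1+\abs{x})$ and $\norm{\sigma(X^h)^*}_p\le c_2(1+\abs{x})$, with an explicitly $p$-dependent factor $\sqrt{p(p-1)/2}$ carried separately, and where the resulting constant is stated to depend ``only on $T$, $p$, and $L$.'' Nothing downstream ever requires a constant uniform in $p$. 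Your closing instinct --- that the statement should read $C(1+\abs{X_0})$ with a possibly $p$-dependent constant --- is exactly the intended statement, and your geometric-Brownian-motion observation correctly shows the literal reading is false under $G_1$ growth alone. But this means the bulk of your second step (sub-exponential tail bound, layer-cake conversion, exponential supermartingale, the search for a $p$-uniform $\lambda_0$) is devoted to proving a statement that is false as written and was never needed; it should be discarded rather than repaired. With the corrected reading, the moment bound follows from the machinery of your first part: apply BDG and Jensen to the integral equation, use the linear growth $\abs{b^h_t(x)}+\abs{\sigma_t(x)}\le K(1+\abs{x})$, and close with Gr\"onwall to obtain $\mathbb{E}\sup_{s\le T}\abs{X^h_s}^p\le C(p,L,T)(1+\abs{x}^p)$, uniformly in $h\in(0,1)$ since $\sqrt h\le 1$.
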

\begin{proof}
    See \cite{Kunita2004}, Theorem 3.1.
\end{proof}
\begin{lemma}\label{sdedifferencebound}
    Let
$$
b: (0,1) \times [0,T] \times \mathbb{R}^d \rightarrow \mathbb{R}^d 
$$
be a family of functions such that for every $h \in (0,1)$, $b^h \in G_1(X) \cap \text{Lip}^\ell$ with Lipschitz constant $L_1$, and in addition let
$$
    \sigma: [0,T] \times \mathbb{R}^d \rightarrow \mathbb{R}^{d \times d} \in G_1(X) \cap \text{Lip}^\ell 
$$
with Lipschitz constant $L_2$. Now define $X$ as the unique solution to the stochastic differential equation
\begin{equation}\label{sdestandardform}
dX_t^h = b_t^h (X_t^h) dt + \sqrt{h} \sigma_t (X_t^h) dW_t
\end{equation}
with initial condition $X_0 = x$. Then there exists a function $C \in G_1(\mathbb{R}^d)$ such that for all $h \in (0,1)$ and $n \in \{ 0,..., \lfloor T/h \rfloor \}$,
$$
\| X_{nh}^h (x) - x \| \leq hC,
$$
where $\| C \|_{G_1}$ depends only on $T$, $p$, and $L := L_1 + L_2$.
\end{lemma}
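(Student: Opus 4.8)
The plan is to argue directly from the Duhamel (integral) form of (\ref{sdestandardform}). With initial condition $X_0 = x$, for any grid index $n \in \{0,\dots,\lfloor T/h\rfloor\}$ we have
$$
X_{nh}^h(x) - x = \int_0^{nh} b_s^h(X_s^h)\,ds + \sqrt{h}\int_0^{nh}\sigma_s(X_s^h)\,dW_s,
$$
and I would estimate the two terms separately in the $L_p$ norm $\|\cdot\|_p$ (the exponent on which the announced constant is permitted to depend). Minkowski's integral inequality controls the drift, while the Burkholder--Davis--Gundy inequality controls the stochastic integral, giving
$$
\|X_{nh}^h(x)-x\|_p \le \int_0^{nh}\|b_s^h(X_s^h)\|_p\,ds + \sqrt{h}\,c_p\Big(\int_0^{nh}\|\sigma_s(X_s^h)\|_p^2\,ds\Big)^{1/2}.
$$

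The next step is to strip the trajectory-dependence out of the integrands. Because $b^h$ and $\sigma$ are Lipschitz, I would split $|b_s^h(X_s^h)| \le |b_s^h(x)| + L_1|X_s^h - x|$ and likewise for $\sigma$, turning the previous display into a Gronwall inequality for $s \mapsto \|X_s^h - x\|_p$. The forcing terms $|b_s^h(x)|$ and $|\sigma_s(x)|$ lie in $G_1(\mathbb{R}^d)$ with constants that are uniform in $h$ (this is precisely where the family hypothesis on $b$, with its single Lipschitz constant $L_1$ for all $h$, is used), and the uniform moment bound of Theorem \ref{basesdebound}, namely $\|X^{h,*}\|_{p,T} \le C_0(1+|x|)$ with $C_0 = C_0(L,T)$, keeps the remaining integrands bounded by $c(1+|x|)$ uniformly in $s$, $h$, and $n$. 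Solving the Gronwall inequality then yields a constant depending only on $L = L_1+L_2$, $T$, and $p$, with $x$-dependence of the form $c(1+|x|)$; that is, a function $C \in G_1(\mathbb{R}^d)$ of the required shape.

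The main obstacle, and the place where the real content lies, is extracting the declared factor of $h$. A crude estimate over the full window $[0,nh]$ only produces an $O(nh)+O(\sqrt{h})$ bound, so the argument must exploit the length-$h$ granularity of the time grid: over a single increment of length $h$ the drift integral contributes $O(h)$, and, crucially, the explicit $\sqrt h$ prefactor combined with a BDG estimate over a length-$h$ window contributes $\sqrt{h}\cdot\sqrt{h}=O(h)$ as well, so both parts of the motion are genuinely of order $h$. The delicate point is to propagate this per-step order-$h$ control to the displacement $X_{nh}^h(x)-x$ while keeping the constant in $G_1$ and independent of both $n$ and $h$ as $n$ approaches $\lfloor T/h\rfloor$; this is exactly why Theorem \ref{basesdebound} is indispensable, since it supplies the uniform-in-$h$ moment bounds that prevent the constant from degrading with the number of steps.
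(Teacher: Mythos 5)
Your per-step estimates are the right ones, but the final paragraph of your proposal --- the place where you yourself say ``the real content lies'' --- is a genuine gap, and it is one that cannot be filled. No propagation argument can turn per-step $O(h)$ control into an $O(h)$ bound on the cumulative displacement $X_{nh}^h(x) - x$ with $X_0 = x$: take $b \equiv 1$ and $\sigma \equiv 0$, so that $X_{nh}^h(x) - x = nh$, which is of order $T$, not $h$, when $n = \lfloor T/h \rfloor$. Summing $n$ increments of size $O(h)$ inevitably gives $O(nh)$, and the uniform moment bounds of Theorem \ref{basesdebound} cannot prevent this. So the statement you set out to prove, read literally, is false, and your proposal correctly stalls at exactly the point where any proof of the literal statement must stall.

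The resolution is that the lemma's statement carries a notational slip, and the paper's proof makes the intended meaning plain: the quantity being bounded is a one-step increment, i.e. the process is started at the point $x$ at time $nh$ and evaluated one grid step later. The paper's very first display bounds $\| X_{nh}^h(x) - x\|_p$ by integrals over $[nh,(n+1)h]$, not over $[0,nh]$. This reading is also forced by how the lemma is consumed downstream (Lemma \ref{itotaylorexpansion} imposes $X_{nh}=x$ and works on $[nh,(n+1)h]$) and by the discrete statement it mirrors (part 2 of Lemma \ref{supbound}, a one-step bound on $\Delta\chi_n^{h,n}$). Once the claim is read this way, the computation you sketched only in passing is essentially the entire proof, and it is the paper's proof: Jensen (or Minkowski) applied to the drift over a window of length $h$ gives a factor $h$ times $\|b^h(X)^*\|_p$; the explicit $\sqrt{h}$ prefactor combined with a BDG/It\^o-isometry estimate over the same window gives $\sqrt{h}\cdot\sqrt{h} = h$ times $\|\sigma(X)^*\|_p$; and Theorem \ref{basesdebound} bounds both starred moments by $c(1+|x|)$, producing $C \in G_1(\mathbb{R}^d)$ with $\|C\|_{G_1}$ depending only on $T$, $p$, and $L$. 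Your Lipschitz-splitting and Gronwall machinery is then unnecessary: Theorem \ref{basesdebound} already controls the composed moments directly, with no need to strip the trajectory dependence by hand.
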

\begin{proof}
By integrating the SDE (\ref{sdestandardform}), we have
\begin{equation*}
    \| X_{nh}^h(x) - x\|_p \leq \biggl\| \int_{nh}^{(n+1)h} b_s^h(X_s) d s \biggr\|_p + \sqrt{h} \biggl\| \int_{nh}^{(n+1)h} \sigma(X^h_s)  dW_s \biggr\|_p .
\end{equation*}
We may bound the first term by
\begin{align*}
    \biggl\| \int_{nh}^{(n+1)h} b_s^h(X_s) d s \biggr\|_p &= \biggl( \mathbb{E} \biggl( \int_{nh}^{(n+1)h} \abs{b_t^h (X_t)} dt \biggr)^p \biggr)^{1/p} \\
    &\leq h^{1 - \frac{1}{p}} \biggl( \int_{nh}^{(n+1)h} \mathbb{E}\abs{b_t^h (X_t)}^p dt \biggr)^{1/p} &&\text{by Jensen's inequality}\\
    &\leq h^{1 - \frac{1}{p}} \biggl( \int_{nh}^{(n+1)h} \mathbb{E} \sup_{t,h} \abs{b_t^h (X_t)}^p dt \biggr)^{1/p} \\
    &\leq h \| b(X)^* \|_p \\
    &\leq h c_1 (1 + \abs{x}) &&\text{by Theorem \ref{basesdebound}}
\end{align*}
where $c_1$ depends only on $T$ and $L$, and the second by
\begin{align*}
    \sqrt{h} \biggl\| \int_{nh}^{(n+1)h} \sigma(X^h_s)  dW_s \biggr\|_p 
&= \sqrt{h}  \biggl( \mathbb{E} \biggl(
\int_{nh}^{(n+1)h} \sigma_t (X^h_t) dW_t \biggr)^p \biggr)^{1/p} \\
    &\leq \sqrt{h} \biggl( \biggl(\frac{p(p-1)}{2}\biggr)^{p/2} h^{\frac{p-2}{2}}   \mathbb{E} \biggl[ 
\int_{nh}^{(n+1)h} \sigma_t (X^h_t)^p dt   \biggr] \biggr)^{1/p} &&\text{by Ito's isometry} \\
&\leq h^{1 - \frac{1}{p}} \sqrt{\frac{p(p-1)}{2}} \biggl( \mathbb{E} \biggl[ 
\int_{nh}^{(n+1)h} \sigma_t (X^h_t)^p dt   \biggr] \biggr)^{1/p} \\
&\leq h \sqrt{\frac{p(p-1)}{2}} \| \sigma (X^h)^* \|_p \\
&\leq h \sqrt{\frac{p(p-1)}{2}} c_2 (1 + \abs{x}) &&\text{by Theorem \ref{basesdebound}}
\end{align*}
for some constant $c_2$.
Thus,
\begin{align}
    \| X_{nh}^h(x) - x\|_p &\leq h C, 
\end{align}
where 
$$
\| C \|_{G_1} = c_1 + c_2\sqrt{\frac{p(p-1)}{2}}
$$
depends on $T,L,$ and $p$.
\end{proof}
\begin{lemma}\label{taylorapplicationsde}
Let $\xi : \mathcal{H} \rightarrow \mathbb{R}$ be the function such that $\forall h \in \mathcal{H}$,
$$
\mathbb{E} g (\chi^h_{T/h}) - \mathbb{E} g(X_T^h) = h^2 \sum_{k=0}^{\frac{T}{h} - 1} \mathbb{E} \tilde{\Phi}_{k}^h(\chi^h_k ) + h^2 \xi(h),
$$ 
where $\tilde{\Phi}^h(x) := \tilde{\varphi}^h (x).$
Then $\xi$ is bounded.
\end{lemma}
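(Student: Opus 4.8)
The plan is to mirror the proof of Lemma \ref{taylorapplication}, replacing the ODE flow by the weak solution of the SDE (\ref{sdeapproxequation}) and carefully tracking the extra second-order term that its diffusion coefficient contributes. Write $y^h_t(x) = \mathbb{E}[g(X^{h,t}_T(x))]$, so that $y^h$ solves the time-dependent Feynman--Kac equation (\ref{feynmankacsde}), which for the base process reads $\partial_t y^h_t + \nabla y^h_t{}^T u_t\bar H + \tfrac12 h u_t^2\,\tr(\nabla^2 y^h_t\Sigma) = 0$. Since $\chi^h_0 = X^h_0 = x$, telescoping gives
$$
\mathbb{E}g(\chi^h_{T/h}) - \mathbb{E}g(X^h_T) = \sum_{k=0}^{T/h-1}\bigl[\,\mathbb{E}y^h_{(k+1)h}(\chi^h_{k+1}) - \mathbb{E}y^h_{kh}(\chi^h_k)\,\bigr],
$$
and it remains to Taylor expand each summand in both the time increment $h$ and the space increment $\delta = \Delta\chi^h_k = h u_{kh}H_{\gamma(k)}(\chi^h_k)$, exactly as was done in Lemma \ref{taylorapplication}.

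The expansion produces a first-order term $Z_1^h = h\,\mathbb{E}[\partial_t y^h_{kh} + \nabla y^h_{kh}{}^T u_{kh}\bar H]$, a collection of second-order terms, and a third-order remainder $\mathbb{E}R^h(\delta)$. The essential difference from the ODE case is that $Z_1^h$ no longer vanishes: substituting the Feynman--Kac equation gives $Z_1^h = -\tfrac12 h^2 u_{kh}^2\,\mathbb{E}\,\tr(\nabla^2 y^h_{kh}\Sigma(\chi^h_k))$. On the other hand, conditioning on $\chi^h_k$ and using $\mathbb{E}[\delta\delta^T\mid\chi^h_k] = h^2 u_{kh}^2(\bar H\bar H^T + \Sigma)(\chi^h_k)$, the quadratic term equals $\tfrac12 h^2 u_{kh}^2\,\mathbb{E}\,\tr(\nabla^2 y^h_{kh}(\bar H\bar H^T + \Sigma))$. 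Adding the two, the $\Sigma$-contributions cancel and leave precisely $\tfrac12 h^2 u_{kh}^2\,\mathbb{E}\,\tr(\nabla^2 y^h_{kh}\bar H\bar H^T)$; together with the mixed term $h^2 u_{kh}\,\mathbb{E}\,\partial_t\nabla y^h_{kh}{}^T\bar H$ and the term $\tfrac12 h^2\,\mathbb{E}\,\partial_t^2 y^h_{kh}$, this assembles into exactly $h^2\,\mathbb{E}\tilde\Phi^h_k(\chi^h_k)$. This cancellation is the whole point of choosing the diffusion coefficient $u_t\sqrt{h\Sigma}$ in (\ref{sdeapproxequation}); it is what makes the SDE error expansion match the claimed $\tilde\varphi^h$, and it replaces the simpler mechanism (the vanishing of $Z_1$) used in the ODE proof.

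Consequently $h^2\xi(h) = \sum_{k=0}^{T/h-1}\mathbb{E}R^h(\Delta\chi^h_k)$, and the remaining task is to show this sum is $\mathcal{O}(h^2)$ uniformly in $h$. Each remainder term is a finite combination of $\partial_t^k\partial_\beta y^h_{kh+\theta h}(\chi^h_k + \theta\delta)\,h^k\delta^\beta$ with $k + |\beta| = 3$, so, since $\delta = \mathcal{O}(h)$ entrywise, every term carries a factor $h^3$. To convert this into a genuine $\mathcal{O}(h^3)$ bound on $\mathbb{E}R^h$ I would bound $\partial_t^k\partial_\beta y^h \in G_\kappa$ and then apply H\"older together with $\mathbb{E}|\delta|^{|\beta|} \le h^{|\beta|}\|H\|_{G_1}^{|\beta|}(1+\mathbb{E}|\chi^h_k|^{|\beta|})$ and the uniform moment bound $\sup_h\|(\chi^h)^*\|_{p,\lfloor T/h\rfloor} \le C(1+|x|)$ from Lemma \ref{supbound}. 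Summing $T/h$ terms each of size $Ch^3$ yields $\sum_k\mathbb{E}R^h = \mathcal{O}(h^2)$, so $|\xi(h)| \le h^{-2}(T/h)(Ch^3) = CT$ for all $h \in \mathcal{H}$, which is the assertion.

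The main obstacle is the uniformity in $h$ of the polynomial-growth constants for the derivatives of $y^h$. Unlike the ODE case, where $y$ was $h$-independent, here $y^h$ depends on $h$ through the diffusion term, so one must guarantee that the constants $\|\partial_t^k\partial_\beta y^h\|_{G_\kappa}$ do not blow up as $h\to 0$. This is exactly the content provided by the $\sqrt h$-uniform moment estimate of Theorem \ref{basesdebound} for $X^h$ and its spatial derivatives, which I would feed through the Fa\`a di Bruno/chain-rule argument of Lemma \ref{yfunctionspace}, with the interchange of differentiation and expectation justified by Proposition \ref{expdifforderswap}, to obtain $G_\kappa$-bounds on $\partial_t^k\partial_\beta y^h$ that are uniform over $h \in (0,1)$. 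Granting that uniformity, the boundedness of $\xi$ follows from the counting argument above.
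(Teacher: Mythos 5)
Your proposal is correct and follows essentially the same route as the paper's proof: both rest on the Taylor expansion of Lemma \ref{taylorapplication} together with the time-dependent Feynman--Kac equation (\ref{feynmankacsde}), the only difference being bookkeeping --- the paper absorbs the $\tfrac12 h u_{kh}^2\,\mathrm{tr}(\nabla^2 y^h\Sigma)$ term into $Z_1^h$ so that it vanishes outright, whereas you keep the ODE-case $Z_1^h$ and cancel the $\Sigma$-contributions explicitly against the quadratic term. Your added attention to the $h$-uniformity of the $G_\kappa$-bounds on $\partial_t^k\partial_\beta y^h$ is a point the paper passes over silently, and is a welcome strengthening rather than a departure.
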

\begin{proof}
    As in Lemma \ref{taylorapplication}, Taylor expand $Y := \tilde{y}$ to yield
$$
\mathbb{E}Y_{k+1}^h (\chi_k^h) - \mathbb{e}Y_k^h (\chi_k^h) = h Z_1^h + h^2( Z_{xx}^h + Z_{tx}^h + Z_{tt}^h ) + \mathbb{E} R^h(\Delta \chi_k^h) ,
$$
where here
$$
Z_1^h := \mathbb{E} ( \partial_t Y_k^h (\chi_k^h) + h^{-1}\nabla Y_k^h (\chi_k^h)^T \Delta \chi_k^h + \frac{1}{2}hu_{kh}^2 \textnormal{tr}(\nabla^2 Y_k^h (\chi_k^h)\Sigma(\chi_k^h)))
$$ 
and 
$$
Z_{xx}^h := \frac{1}{2} u_{kh}^2 \mathbb{E} \textnormal{tr} ( \nabla^2 Y_k^h (\chi_k^h) \bar{H}(\chi_k^h)\bar{H}(\chi_k^h)^T).
$$
By the Feynman-Kac equation (\ref{feynmankacsde}), we have $Z_1 = 0$, and the remaining terms proceed as in the proof of Lemma \ref{taylorapplication}.
\end{proof}

\begin{proposition}\label{expdifforderswap}
Let $W_t$ be a vector $(W_t^{(1)},W_t^{(2)},...W_t^{(d)})$, of Wiener processes. Define $X(z)$ as the solution of the SDE
\begin{equation}\label{SDEgeneralform}
dX_t = \mu(t,W_t) dt +\sigma(t,X_t) dW_t
\end{equation}
with initial condition $X_0 = z$, where $z \in \mathbb{R}^d$ . Define $Y$ as the solution of
$$
dY_t = \nabla\mu (t,X_t) Y_t dt + \nabla \sigma (t,X_t) Y_t dW_t
$$
with initial condition $Y_0 = e_j$. Then
$$
\mathbb{E}[ \nabla  f(X_t(z))^T Y(z) ] = \frac{\partial}{\partial z_j} \mathbb{E} [ f(X_t(z)) ] .
$$
\end{proposition}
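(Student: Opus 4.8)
The plan is to recognize $Y$ as the derivative of the stochastic flow $z\mapsto X_t(z)$ in the direction $z_j$, and then to move the differentiation through the expectation by dominated convergence, using the moment bounds of Theorem \ref{sdeexpindexswap}. (I read the drift in (\ref{SDEgeneralform}) as $\mu(t,X_t)$, so that the coefficient $\nabla\mu(t,X_t)$ appearing in the equation for $Y$ is meaningful.)

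First I would identify $Y_t$ with $\partial_{z_j} X_t$. Applying Theorem \ref{sdeexpindexswap} with the first-order multi-index $\alpha = e_j$, the map $z\mapsto X_t(z)$ is continuously differentiable and $\partial_{z_j}X_t$ solves
$$
d(\partial_{z_j}X_t) = \nabla\mu(t,X_t)\,\partial_{z_j}X_t\,dt + \nabla\sigma(t,X_t)\,\partial_{z_j}X_t\,dW_t,
$$
with initial value $\partial_{z_j}X_0 = \partial_{z_j}z = e_j$. This is precisely the linear SDE defining $Y$, with the same initial condition; since $\nabla\mu$ and $\nabla\sigma$ are bounded, the equation has a pathwise-unique solution, so $Y_t = \partial_{z_j}X_t(z)$ almost surely.

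Next, because $f$ is smooth and $z\mapsto X_t(z)$ is $C^1$, the chain rule gives, pathwise,
$$
\frac{\partial}{\partial z_j}f(X_t(z)) = \nabla f(X_t(z))^T\,\partial_{z_j}X_t(z) = \nabla f(X_t(z))^T Y_t(z),
$$
so the claim reduces to the interchange
$$
\frac{\partial}{\partial z_j}\mathbb{E}[f(X_t(z))] = \mathbb{E}\!\left[\frac{\partial}{\partial z_j}f(X_t(z))\right].
$$

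The main obstacle is this last interchange; everything else is bookkeeping. I would justify it by dominated convergence applied to the difference quotients $h^{-1}\bigl(f(X_t(z+he_j))-f(X_t(z))\bigr)$. These converge almost surely to $\nabla f(X_t)^T Y_t$ as $h\to0$, and by the mean value theorem are bounded by $\sup_{\abs{\theta}\le1}\abs{\nabla f(X_t(z+\theta e_j))}\cdot\sup_{\abs{\theta}\le1}\abs{\partial_{z_j}X_t(z+\theta e_j)}$. Since $\nabla f$ has polynomial growth and both $X_t$ and its derivative process have finite $L^p$-moments of every order (the bound $\|\psi_\alpha^*\|_p\in G_\kappa$ from Theorem \ref{sdeexpindexswap} yields $\|Y^*\|_p<\infty$), H\"older's inequality produces an $L^1$-dominating function uniform for small $h$, and dominated convergence applies. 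Chaining the three displays then gives $\frac{\partial}{\partial z_j}\mathbb{E}[f(X_t(z))] = \mathbb{E}[\nabla f(X_t(z))^T Y_t(z)]$, as required. Alternatively, the closing assertion $\partial_\alpha\mathbb{E}[X_t]=\mathbb{E}[\partial_\alpha X_t]$ of Theorem \ref{sdeexpindexswap} can be invoked directly for the composed process $f\circ X$, once one verifies it satisfies the hypotheses of that theorem.
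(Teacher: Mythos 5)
Your proof is correct, but it follows a genuinely different route from the paper's. The paper proves the identity by direct It\^o calculus: it applies Lemma \ref{itolemma} to $\partial_{z_j} f(t,X_t)$ to get an integral expansion of $\frac{\partial}{\partial z_j}\mathbb{E}[f(X_t)]$, then separately expands $\mathbb{E}[\nabla f(X_t)^T Y_t]$ using the integration-by-parts formula (Corollary \ref{itoibp}) together with the defining SDE for $Y$, and matches the resulting drift and quadratic-variation terms one by one. You instead identify $Y$ as the first variation $\partial_{z_j}X_t$ of the stochastic flow via Theorem \ref{sdeexpindexswap} and pathwise uniqueness of the linear SDE, apply the chain rule pathwise, and reduce everything to a single differentiation-under-the-expectation step justified by dominated convergence on difference quotients. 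Your route is shorter and concentrates the analytic content exactly where it belongs: the paper's computation in fact pushes $\partial_{z_j}$ inside the expectation of the Lebesgue-integral terms without comment, which is essentially the interchange being proven, whereas you address it head on. The paper's computation, in exchange, displays explicitly how the cross-variation between $Y$ and $\nabla f(X)$ recombines, which is the mechanism your argument hides inside the flow-derivative identification. The one point you should firm up is the domination: bounding $h^{-1}\bigl(f(X_t(z+he_j))-f(X_t(z))\bigr)$ uniformly in $h$ requires moment estimates on $\sup_{\abs{\theta}\le 1}\abs{\partial_{z_j}X_t(z+\theta e_j)}$ that are locally uniform in the initial point (plus measurability of that supremum, i.e.\ continuity of the derivative flow in $z$); this is supplied by Kunita's flow estimates --- the same source the paper cites for Theorem \ref{sdeexpindexswap} --- but it is not literally contained in the statement of that theorem as quoted, so it deserves an explicit citation. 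Your closing alternative (applying the final assertion of Theorem \ref{sdeexpindexswap} to $f\circ X$) is looser, since that assertion is stated for $X_t$ itself rather than for compositions, though the paper itself uses the same extension in Lemma \ref{yfunctionspace}.
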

\begin{proof}
Fix some $j \in \{ 1,2,...,d\}$.
By Lemma \ref{itolemma},
\begin{align*}
\frac{\partial}{\partial z_j} f(t,X_t) &= \frac{\partial}{\partial z_j} f(0,X_0) + \int_0^t \frac{\partial}{\partial t} \frac{\partial}{\partial z_j} f(s,X_s) ds + \sum_{i=1}^d  \int_0^t\frac{\partial}{\partial x_i} \frac{\partial}{\partial z_j} f(s,X_s) \mu(s,X_s)ds \\
&\hspace{5mm} + \sum_{i=1}^d \int_0^t \frac{\partial}{\partial x_i} \frac{\partial}{\partial z_j} f(s,X_s) \sigma(s,X_s) dW_s \\
&\hspace{5mm} + \frac{1}{2} \sum_{i=1}^d \sum_{k=1}^d  \int_0^t \
\frac{\partial ^2}{\partial  x_i \partial x_k} \frac{\partial}{\partial z_j} f(s,X_s) d\biggl \langle  \int_0^t \sigma^{(i)}(s,X_s) d W_s , \int_0^t \sigma^{(k)}(s,X_s) d W_s \biggr\rangle_s 
\end{align*}
Thus,
\begin{align*}
    \frac{\partial}{\partial z_j} \mathbb{E} [ f(X_t) ]  &= \frac{\partial}{\partial z_j} \mathbb{E} \biggl[ f(0,X_0) + \int_0^t \frac{\partial}{\partial t} f(s,X_s) ds + \sum_{i=1}^d  \int_0^t\frac{\partial}{\partial x_i} f(s,X_s) \mu(s,X_s)ds \\
&\hspace{5mm} + \sum_{i=1}^d \int_0^t \frac{\partial}{\partial x_i} f(s,X_s) \sigma(s,X_s) dW_s \\
&\hspace{5mm} + \frac{1}{2} \sum_{i=1}^d \sum_{k=1}^d  \int_0^t \
\frac{\partial ^2}{\partial  x_i \partial x_k}f(s,X_s) d\biggl \langle  \int_0^t \sigma^{(i)}(s,X_s) dW_s , \int_0^t \sigma^{(k)}(s,X_s) dW_s \biggr \rangle_s \bigg] \\
&= \frac{\partial}{\partial z_j} f(0,X_0)  + \frac{\partial}{\partial z_j} \mathbb{E} \biggl[ \int_0^t \frac{\partial}{\partial t} f(s,X_s) ds + \sum_{i=1}^d  \int_0^t\frac{\partial}{\partial x_i} f(s,X_s) \mu(s,X_s)ds \\
&\hspace{5mm} + \frac{1}{2} \sum_{i=1}^d \sum_{k=1}^d  \int_0^t \
\frac{\partial ^2}{\partial  x_i \partial x_k}f(s,X_s) d\biggl \langle  \int_0^t \sigma^{(i)}(s,X_s) dW_s , \int_0^t \sigma^{(k)}(s,X_s) dW_s \biggr \rangle_s \biggr] \\
&= \frac{\partial}{\partial z_j} f(0,X_0)  + \frac{\partial}{\partial z_j} \mathbb{E} \biggl[ \int_0^t \frac{\partial}{\partial t} f(s,X_s) ds + \sum_{i=1}^d  \int_0^t\frac{\partial}{\partial x_i} f(s,X_s) \mu(s,X_s)ds \\
&\hspace{5mm} + \frac{1}{2} \sum_{i=1}^d \sum_{k=1}^d  \int_0^t \
\frac{\partial ^2}{\partial  x_i \partial x_k}f(s,X_s) \sigma^{(i)} (s,X_s)  \sigma^{(k)}(s,X_s) ds \biggr]
\end{align*}
Likewise, by Corollary \ref{itoibp},
\begin{align*}
    \mathbb{E}[ \nabla f(s,X_s)^T Y_t ] &= \mathbb{E} \biggl[ \int_0^t \nabla f(s,X_t) dY_s +  \nabla f(0,X_0) Y_0 + \int_0^t Y_s d \nabla f(s,X_s)  \\
    &\hspace{5mm}  + \biggl \langle \int_0^t \nabla \sigma (s,X_s) Y_s d W_s , \sum_{i=1}^d \int_0^t \frac{\partial}{\partial x_i} \nabla f(s,X_s) \sigma(s,X_s)  dW_s \biggr \rangle \biggr] \\
    &= \mathbb{E} \biggl[ \int_0^t \nabla f(s,X_t) dY_s + \nabla f(s,X_0) Y_0 + \int_0^t Y_s d \nabla f(s,X_s)  \\
    &\hspace{5mm}  +  \sum_{i=1}^d \biggl \langle \int_0^t \nabla \sigma (s,X_s) Y_s d W_s , \int_0^t \frac{\partial}{\partial x_i} \nabla f(s,X_s) \sigma(s,X_s)  dW_s \biggr \rangle \biggr] \\
    &= \mathbb{E} \biggl[ \int_0^t \nabla f(s,X_t) dY_s + \nabla  f(0,X_0) Y_0 + \int_0^t Y_s d \nabla f(s,X_s)  \\
    &\hspace{5mm}  +  \sum_{i=1}^d \int_0^t  \nabla \sigma (s,X_s) Y_s \frac{\partial}{\partial x_i} \nabla f(s,X_s) \sigma(s,X_s) ds \\
    &= \nabla  f(0,X_0) Y_0 + \mathbb{E} \biggl [ \int_0^t \nabla f(X_s) dY_s + \int_0^t  Y_s d \biggl( \frac{\partial}{\partial z_j} f(0,X_0) + \int_0^t \frac{\partial}{\partial t} \frac{\partial}{\partial z_j} f(s,X_s) ds  \\
&\hspace{5mm} + \sum_{i=1}^d  \int_0^t\frac{\partial}{\partial x_i} \frac{\partial}{\partial z_j} f(s,X_s) \mu(s,X_s)ds + \sum_{i=1}^d \int_0^t \frac{\partial}{\partial x_i} \frac{\partial}{\partial z_j} f(s,X_s) \sigma(s,X_s) dW_s \\
&\hspace{5mm} + \frac{1}{2} \sum_{i=1}^d \sum_{k=1}^d  \int_0^t \
\frac{\partial ^2}{\partial  x_i \partial x_k} \frac{\partial}{\partial z_j} f(s,X_s) d\biggl \langle  \int_0^t \sigma^{(i)}(s,X_s) ds , \int_0^t \sigma^{(k)}(s,X_s) ds \biggr\rangle_s \biggr) \\
&\hspace{5mm} + \sum_{i=1}^d \int_0^t  \nabla \sigma (s,X_s) Y_s \frac{\partial}{\partial x_i} \nabla f(s,X_s) \sigma(s,X_s) ds \biggr] \\
&= \nabla  f(0,X_0) Y_0 + \mathbb{E} \biggl [ \int_0^t \nabla f(s,X_s) \nabla \mu_t Y_t ds  + \int_0^t  Y_s d \biggl( \frac{\partial}{\partial z_j} f(0,X_0)   \\
&\hspace{5mm} + \int_0^t \frac{\partial}{\partial t} \frac{\partial}{\partial z_j} f(s,X_s) ds + \sum_{i=1}^d  \int_0^t\frac{\partial}{\partial x_i} \frac{\partial}{\partial z_j} f(s,X_s) \mu(s,X_s)ds \\
&\hspace{5mm} + \frac{1}{2} \sum_{i=1}^d \sum_{k=1}^d  \int_0^t \
\frac{\partial ^2}{\partial  x_i \partial x_k} \frac{\partial}{\partial z_j} f(s,X_s) d\biggl \langle  \int_0^t \sigma^{(i)}(s,X_s) ds , \int_0^t \sigma^{(k)}(s,X_s) ds \biggr\rangle_s \biggr) \\
&\hspace{5mm} + \sum_{i=1}^d \int_0^t  \nabla \sigma (s,X_s) Y_s \frac{\partial}{\partial x_i} \nabla f(s,X_s) \sigma(s,X_s) ds \biggr]
\end{align*}
\begin{align*}
    &= \frac{\partial}{\partial z_j }  f(0,X_0) + \mathbb{E} \biggl [ \int_0^t \nabla f(s,X_s) \nabla \mu_t Y_t ds  + \int_0^t  Y_s \frac{\partial}{\partial t} \frac{\partial}{\partial z_j} f(s,X_s) ds  \\
&\hspace{5mm}   + \sum_{i=1}^d  \int_0^t Y_s \frac{\partial}{\partial x_i} \frac{\partial}{\partial z_j} f(s,X_s) \mu(s,X_s)ds \\
&\hspace{5mm} + \frac{1}{2} \sum_{i=1}^d \sum_{k=1}^d  \int_0^t Y_s 
\frac{\partial ^2}{\partial  x_i \partial x_k} \frac{\partial}{\partial z_j} f(s,X_s) \sigma^{(i)}(s,X_s) \sigma^{(k)}(s,X_s) ds \\
&\hspace{5mm} + \sum_{i=1}^d \int_0^t  \nabla \sigma (s,X_s) Y_s \frac{\partial}{\partial x_i} \nabla f(s,X_s) \sigma(s,X_s) ds \biggr] \\
&= \frac{\partial}{\partial z_j }  f(0,X_0) + \frac{\partial}{\partial z_j} \mathbb{E} \biggl[ \int_0^t \frac{\partial}{\partial t} f(s,X_s) ds + \sum_{i=1}^d  \int_0^t\frac{\partial}{\partial x_i} f(s,X_s) \mu(s,X_s)ds \\
&\hspace{5mm} + \frac{1}{2} \sum_{i=1}^d \sum_{k=1}^d  \int_0^t 
\frac{\partial ^2}{\partial  x_i \partial x_k}f(s,X_s) \sigma^{(i)} (s,X_s)  \sigma^{(k)}(s,X_s) ds \biggr] \\
&= \frac{\partial}{\partial z_j} \mathbb{E} [ f(z,X_t) ] .
\end{align*}
\end{proof}
\begin{lemma}\label{itotaylorexpansion}
    Let $b^0,b^1,\sigma \in G_1([0,\infty) \times \mathbb{R}^d] \cap G^\infty ( [0,\infty) \times \mathbb{R}^d ) ) $, such that $b^0,b^1\in \mathbb{R}^d$ and $\sigma \in \mathbb{R}^{d \times d}$. Let $h \in (0,1)$, $n \in \mathbb{N}$ and consider the SDE
    $$
    dX_t = (b^0_t + hb^1_t)(X_t^h)dt + \sqrt{h}\sigma_t (X_t^h) dW_t, \quad X_{nh} = x,
    $$
    with $t \in [nh,(n+1)h]$. Then there exists a function $C \in G(\mathbb{R}^d)$ such that
    \begin{align}
        \mathbb{E} \Delta \tilde{X}_n^{h,n} &= hb_{n,h}^0 + \frac{1}{2}h^2 (2b_{n,h}^1 + (\nabla b^0b^0)_{nh} + \dot{b}_{nh}^0) + h^3 C ,\notag \\
        \mathbb{E}(\Delta \tilde{X}_n^{h,n})(\Delta \tilde{X}_n^{h,n})^T &= h^2 ((b^0)(b^0)^T + \sigma \sigma^T)_{nh} + h^3 C.
    \end{align}
for all $h \in (0,1)$.
\end{lemma}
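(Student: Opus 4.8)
The plan is to perform a stochastic Taylor (Itô--Taylor) expansion of the single-step increment $\Delta\tilde X_n^{h,n} = X_{(n+1)h} - X_{nh}$ about the left endpoint $(nh,x)$ and to collect terms by powers of $h$, using the length-$h$ time interval together with the $\sqrt h$ scaling of the diffusion to assign orders. The guiding heuristic is that a single time integral over $[nh,(n+1)h]$ carries a factor $h$, an iterated double time integral carries $h^2/2$, and a stochastic integral carries $\sqrt h$ in $L^2$.

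First I would write the increment in integral form,
$$
\Delta\tilde X_n^{h,n} = \int_{nh}^{(n+1)h}(b^0_s + hb^1_s)(X_s)\,ds + \sqrt h\int_{nh}^{(n+1)h}\sigma_s(X_s)\,dW_s .
$$
For the first moment the stochastic integral vanishes under expectation, so I would apply Itô's Lemma (Lemma \ref{itolemma}) to the vector field $s\mapsto b^0_s(X_s)$, obtaining
$$
b^0_s(X_s) = b^0_{nh}(x) + \int_{nh}^s\Big[\dot b^0_u + \nabla b^0_u(b^0_u+hb^1_u) + \tfrac12 h\,\textnormal{tr}(\nabla^2 b^0_u\,\sigma_u\sigma_u^T)\Big](X_u)\,du + \sqrt h\int_{nh}^s\nabla b^0_u(X_u)\sigma_u(X_u)\,dW_u .
$$
Taking expectations kills the inner martingale, and integrating the surviving drift over $s\in[nh,(n+1)h]$ produces the factor $\int_{nh}^{(n+1)h}\int_{nh}^s du\,ds = h^2/2$, so the terms $\dot b^0 + \nabla b^0 b^0$ appear at order $h^2$ while the $hb^1$ and $\tfrac12 h\,\textnormal{tr}(\nabla^2 b^0\sigma\sigma^T)$ contributions are pushed to order $h^3$. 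A separate one-step expansion gives $\mathbb{E}\int hb^1_s(X_s)\,ds = h^2 b^1_{nh} + O(h^3)$. Assembling these yields $hb^0_{nh} + \tfrac12 h^2\big(2b^1_{nh} + (\nabla b^0 b^0)_{nh} + \dot b^0_{nh}\big)$ plus an $h^3$ remainder.

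For the second moment I would expand the outer product into drift--drift, diffusion--diffusion, and cross contributions. Writing $A := \int(b^0_s+hb^1_s)(X_s)\,ds$ and $M := \int\sigma_s(X_s)\,dW_s$, the estimate of Lemma \ref{sdedifferencebound} gives $A = hb^0_{nh}(x) + R_A$ with $\|R_A\|_2 = O(h^2)$, while Itô's isometry gives $\|M\|_2 = O(\sqrt h)$; hence $\mathbb{E}[AA^T] = h^2 b^0_{nh}(b^0_{nh})^T + O(h^3)$. The diffusion--diffusion term equals $h\,\mathbb{E}[MM^T] = h\,\mathbb{E}\int\sigma_s\sigma_s^T(X_s)\,ds = h^2\sigma_{nh}\sigma_{nh}^T + O(h^3)$ by Itô's isometry and one expansion of $\sigma\sigma^T$. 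For the cross terms $\sqrt h\,\mathbb{E}[AM^T]$, the deterministic leading part $hb^0_{nh}(x)$ of $A$ is uncorrelated with the mean-zero $M$, so Cauchy--Schwarz against the random remainder gives $|\mathbb{E}[AM^T]| = |\mathbb{E}[R_A M^T]| \le \|R_A\|_2\|M\|_2 = O(h^{5/2})$, whence $\sqrt h\,\mathbb{E}[AM^T] = O(h^3)$. Summing the three contributions yields $h^2\big((b^0)(b^0)^T + \sigma\sigma^T\big)_{nh} + O(h^3)$.

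The hard part will not be the bookkeeping of orders but verifying that every remainder has the claimed polynomial-growth form $C\in G(\mathbb{R}^d)$, uniformly in $h$ and $n$. I would secure this by bounding each iterated Itô--Taylor remainder in $L^p$ using that $b^0,b^1,\sigma\in G_1\cap G^\infty$ have at most linear growth with bounded derivatives, and then invoking the a priori moment estimate $\|X^*\|_{p,T}\le C(1+\abs{x})$ of Theorem \ref{basesdebound} to convert growth in $X_s$ into growth in the initial point $x$. The combinatorial constants generated by repeated differentiation are harmless, since only finitely many expansion terms survive before the $O(h^3)$ cutoff.
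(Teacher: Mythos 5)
Your proposal is correct, and it follows the standard one-step It\^o--Taylor route: expand the drift integrand via It\^o's formula to extract the $h^2/2$ terms $\dot b^0+\nabla b^0 b^0$, use the zero mean of the stochastic integral for the first moment, and split the outer product into drift--drift, diffusion--diffusion, and cross terms controlled by It\^o's isometry, Cauchy--Schwarz, and the one-step increment bound. The paper itself gives no proof of this lemma --- it defers entirely to Lemma 5.1 of Ankirchner and Perko --- and the cited result is established by essentially the same expansion you describe, so your argument is a faithful self-contained version of the intended proof; your final paragraph correctly identifies that the only delicate point is making every remainder a uniform element of $G(\mathbb{R}^d)$ via the a priori moment bound of Theorem \ref{basesdebound}.
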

\begin{proof}
    See Lemma 5.1, \cite{ankirchner2021stochasticapproximation}
\end{proof}
\begin{lemma}\label{ding2}
Let $y^h_t(x) = \mathbb{E}g(X_T^{h,t}(x))$ and
$$
d_t^h(x) := \frac{y_t^h(x) -y^0_t(x)}{h}.
$$
Then $d \in G^2 ( [0,T] \times \mathbb{R}^d ) $.
\end{lemma}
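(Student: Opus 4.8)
The plan is to remove the apparent $1/h$ singularity in $d^h$ explicitly by deriving a Feynman--Kac representation for $y^h - y^0$ in which the factor of $h$ has already cancelled. Both $y^h$ and $y^0$ solve terminal-value problems: $y^h$ satisfies the time-dependent Feynman--Kac equation associated with the SDE (\ref{sdeapproxequation}),
$$
\partial_t y^h_t(x) + \nabla y^h_t(x)^T u_t \bar H(x) + \tfrac12 h u_t^2\,\textnormal{tr}\!\big(\nabla^2 y^h_t(x)\,\Sigma(x)\big) = 0, \qquad y^h_T = g,
$$
while $y^0$ solves the corresponding first-order (ODE) equation $\partial_t y^0_t + \nabla y^0_t{}^T u_t\bar H = 0$, $y^0_T = g$. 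Writing $\mathcal{L}^h := u_t\bar H\cdot\nabla + \tfrac12 h u_t^2\,\textnormal{tr}(\Sigma\nabla^2)$ and subtracting, one finds that $h\,d^h = y^h - y^0$ satisfies $\partial_t(h d^h) + \mathcal{L}^h(h d^h) = -\tfrac12 h u_t^2\,\textnormal{tr}(\Sigma\nabla^2 y^0)$; the crucial point is that the source term involves only the $h$-independent function $y^0$. Dividing by $h$ gives a clean terminal-value problem
$$
\partial_t d^h_t(x) + \mathcal{L}^h d^h_t(x) = -\tfrac12 u_t^2\,\textnormal{tr}\!\big(\Sigma(x)\nabla^2 y^0_t(x)\big), \qquad d^h_T = 0,
$$
with no remaining $1/h$. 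Since $u_t\bar H$ and $u_t\sqrt{h\Sigma}$ are Lipschitz by (A1) and (A3), the argument of Lemma \ref{feynmankacv2}, applied via It\^o's lemma (Lemma \ref{itolemma}) to the time-dependent generator with source, yields
$$
d^h_t(x) = \mathbb{E}\Big[\int_t^T \tfrac12 u_s^2\,\textnormal{tr}\!\big(\Sigma(X^{h,t}_s)\,\nabla^2 y^0_s(X^{h,t}_s)\big)\,ds\Big],
$$
where $X^{h,t}$ is the diffusion with generator $\mathcal{L}^h$ started at $(t,x)$.

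Next I would argue that the integrand lies in $G^\infty$ in the space variable with growth constants independent of $h$. Set $\psi_s(z) := \tfrac12 u_s^2\,\textnormal{tr}(\Sigma(z)\nabla^2 y^0_s(z))$. Since $g \in G^\infty(\mathbb{R}^d)$, Lemma \ref{yfunctionspace} applied to the ODE limit gives $y^0 \in G^\infty([0,T]\times\mathbb{R}^d)$, so $\nabla^2 y^0_s$ is smooth with all derivatives of polynomial growth; combined with (A3) and $\abs{u_s}\le 1$, this shows $\psi$ is $C^2$ in $z$ with all derivatives in $G_\kappa$ for some $\kappa$.

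The core of the argument is then to differentiate the representation twice under the expectation and bound the result uniformly in $h$. Using Theorem \ref{sdeexpindexswap} to interchange $\partial_\alpha$ with the expectation and the time integral, and expanding $\partial_\alpha\big(\psi_s(X^{h,t}_s)\big)$ by the higher-order chain rule exactly as in the proof of Lemma \ref{yfunctionspace}, each term becomes a product of a derivative $\partial_\beta\psi_s$ evaluated at $X^{h,t}_s$ with flow derivatives $\partial_\gamma X^{h,t}_s$. I would bound $\partial_\beta\psi_s(X^{h,t}_s)$ via its polynomial growth together with the moment bound $\|(X^{h,t})^*\|_{p,T}\le C(1+\abs{x})$ from Theorem \ref{basesdebound}, and bound the flow-derivative moments $\|(\partial_\gamma X^{h,t})^*\|_p$ using Theorem \ref{sdeexpindexswap} (or Proposition \ref{expdifforderswap} for the first order); a H\"older inequality then combines the factors into a bound of the form $C(1+\abs{x}^\kappa)$. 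The time regularity is cheapest to read off directly from the PDE: $\partial_t d^h = -\mathcal{L}^h d^h - \tfrac12 u_t^2\,\textnormal{tr}(\Sigma\nabla^2 y^0)$ already lies in $G_\kappa$ once the spatial bounds are in hand, and one further differentiation controls the second-order time derivative, placing $d^h$ in $G^2([0,T]\times\mathbb{R}^d)$.

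The main obstacle is ensuring that every bound is uniform in $h \in (0,1)$. This hinges on the observation that the diffusion coefficient of $X^{h,t}$ is $u_s\sqrt{h\,\Sigma}$ with $\sqrt h \le 1$, so its $G_1$- and $\textnormal{Lip}^\ell$-norms are dominated by those of $u_s\sqrt{\Sigma}$, independently of $h$; consequently both Theorem \ref{basesdebound} and the flow-derivative estimates of Theorem \ref{sdeexpindexswap} hold with constants depending only on $T$, $p$, and the (fixed) coefficient norms. Granting this uniformity, the representation above produces polynomial-growth bounds for $d^h$ and its derivatives up to order two that do not blow up as $h \to 0$, which is precisely what is required to conclude $d \in G^2([0,T]\times\mathbb{R}^d)$.
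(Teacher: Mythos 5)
Your argument is correct, but it takes a genuinely different route from the paper. You subtract the two Feynman--Kac terminal-value problems to obtain an exact PDE for $d^h$ whose source term $-\tfrac12 u_t^2\,\textnormal{tr}(\Sigma\nabla^2 y^0)$ is $h$-free, and then read off the Duhamel representation $d^h_t(x)=\mathbb{E}\bigl[\int_t^T\tfrac12 u_s^2\,\textnormal{tr}(\Sigma(X^{h,t}_s)\nabla^2y^0_s(X^{h,t}_s))\,ds\bigr]$, from which all bounds follow by differentiating under the expectation --- essentially the classical Talay--Tubaro device. The paper instead never forms this PDE: it first gets $\abs{y^h-y^0}\le Ch$ from the discrete telescoping comparison of Lemma \ref{itotaylorexpansion} (via Lemma 4.4 of the Ankirchner--Perko reference), then introduces the first-variation process $Y_r$ and the function $w^h_t(x,z)=\mathbb{E}[\nabla g(X^{h,t}_T)^T\partial_jY^{h,t}_T]$ on the doubled state space, identifies $w^h(x,e_j)=\partial_jy^h(x)$ via Proposition \ref{expdifforderswap}, derives a Feynman--Kac equation for $w^h$ with the extended diffusion matrix $S(x,z)$, and repeats the telescoping comparison at that level to control $\frac1h\abs{\partial_jy^h-\partial_jy^0}$, finally transferring to $\partial_t$ through the Feynman--Kac equation for $y$. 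Your route buys a single closed-form representation from which every derivative order is extracted uniformly in $h$ (the key uniformity point --- that the diffusion coefficient $u\sqrt{h\Sigma}$ has $h$-independent Lipschitz and growth norms --- is exactly right), at the price of having to justify the sourced Feynman--Kac formula (harmless here: for fixed $h>0$ both $y^h$ and $y^0$ are already smooth by Lemma \ref{yfunctionspace}, so It\^o's formula applies and only the uniformity is at stake) and of needing $y^0\in G^4$, which Lemma \ref{yfunctionspace} supplies since $g\in G^\infty$. The paper's route stays within its probabilistic toolkit but must rerun the comparison once per derivative order, and in fact only carries it out explicitly to first order; the one place your sketch likewise needs more bookkeeping is $\partial_t^2 d$, since eliminating time derivatives through the PDE pushes you to fourth-order spatial derivatives of $d$, i.e.\ two further differentiations of the representation --- available under the standing assumptions, but worth stating.
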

\begin{proof}
    Using Lemma \ref{itotaylorexpansion}, we have via \cite{ankirchner2021stochasticapproximation}, Lemma 4.4, that for some $C\in G(\mathbb{R}^d)$, all $h \in \mathcal{H}$ , $t \in [0,T]$,
    $$
    \abs{y_t^h- y_t^0} \leq Ch  .
    $$
Now let $Y_r$ satisfy
$$
dY_r = u_r \nabla \bar{H}(X_r^{h,t}(x))Y_r \, dr + u_r \sqrt{h} \nabla \sqrt{\Sigma (X_r^{h,t}(x))} Y_r dW_r 
$$
with initial condition $Y_t = z$, where $z \in \mathbb{R}^d$, and define
$$
w_t^h:= \mathbb{E}[ \nabla g(X_T^{h,t} (x))^T \partial_j Y_T^{h,t} ] \text{.}
$$
Note that, by Proposition \ref{expdifforderswap}, $w^h(x,e_j) = \partial_j y^h (x)$. By Lemma \ref{feynmankacv2}, we have
\begin{align*}
    -\partial_t w_y^h(x,z) - u_t\nabla_x w_t^h(x,z) \bar{H}(x) &= \nabla_y w_t^h (x,z) y  \partial_j \bar{H}(x) + \frac{1}{2} h u_t^2 \textnormal{tr} ( 
 \nabla_{x,y}^2w_t^h (x,z) S(x,z) ) ,
\end{align*}
where 
$$
S(x,z) = \begin{pmatrix}
  \Sigma(x) & \sqrt{\Sigma(x)}( \nabla \sqrt{\Sigma(x)}  z )^T\\ 
  \nabla\sqrt{\Sigma(x)}  z
 \sqrt{\Sigma(x)}^T & ( \nabla  \sqrt{\Sigma(x)}  z)
 (\nabla \sqrt{\Sigma(x)} z)^T\\
\end{pmatrix}
$$
Following a Taylor expansion as in, we have that
\begin{align*}
x \mapsto \frac{1}{h} [ \mathbb{E} w^0_{t + (n+1)h}( X^h_{t + (n+1)h} (x) , \partial_j X^h_{t + (n+1)h}(x,1) ) \\
-  \mathbb{E} w^0_{t + nh}( X^h_{t + nh} (x) , \partial_j X^h_{t + nh}(x,1) ) ] \in G(\mathbb{R}^d)
\end{align*}
$$
\frac{1}{h} \abs{\partial_j y_t^h - \partial_j y_t^0} \in G(\mathbb{R}^d),
$$
and conclude via the Feynman-Kac equation for $y$ that
$$
\frac{1}{h} \abs{\partial_t y_t^h - \partial_t y_t^0} \in G(\mathbb{R}^d).
$$
\end{proof}

\subsection{Proof of Theorem \ref{momentumsderesult}}
As with the ODE case, we begin by fixing $g \in G^\infty (\mathbb{R}^d)$ and letting $y_t^h (x) = \mathbb{E}g(X_T^{h,t}(x))$. By Lemma \ref{taylorapplicationsde}, we have
$$
\mathbb{E}g(\chi_{T/h}^h) - \mathbb{E}g(X_T^h) = h^2 \sum_{n=0}^{\frac{T}{h} - 1} \mathbb{E} \tilde{\Phi}_{n}^h (\chi_n^h) + \mathcal{O} (h^2).
$$
Furthermore, there exists a constant $C > 0$ such that
$$
\sum_{n=0}^{\frac{T}{h} - 1} \abs{\mathbb{E}\Phi_{nh}(\chi_n^h) - \Phi_{nh}(X_nh)} \leq C
$$
for all  $h \in \mathcal{H}$, thus we arrive at the bound analogous to the ODE case,
\begin{equation}\label{prelimboundsde}
\mathbb{E}g(\chi_{T/h}^h) - \mathbb{E}g(X_T^h) = h^2 \int_0^T \mathbb{E} \varphi_{n}^h (\chi_n^h) + \mathcal{O} (h^2).
\end{equation}
Now, by Lemma \ref{ding2} we have
$$
\abs{\mathbb{E}\varphi^h_t (X_t^h) -  \mathbb{E}\varphi_t^0 (X_t^0)} \in \mathcal{O}(h) ,
$$
which in conjunction with Theorem \ref{basesdebound} gives us
$$
\abs{\varphi_t^h (x) - \varphi_t^0 (x)} \leq h C_1 (x) 
$$
for some $\kappa \in \mathbb{N}$, $C_1 \in G_\kappa (\mathbb{R}^d)$. Then, by Lemma \ref{itotaylorexpansion}, we have
$$
\abs{\mathbb{E}\varphi_t^0 (X_t^h) - \varphi_t^0 (X_t^0)} \in \mathcal{O}(h) .
$$
The above results allow us to conclude
$$ \abs{\mathbb{E}\varphi_t^h (X_t^h) - \varphi_t^0 (X_t^h)} \in \mathcal{O}(h),$$ therefore we can extend (\ref{prelimboundsde}) to the desired equality
$$
\mathbb{E}g(\chi_{T/h}^h) - \mathbb{E} g(X_T^h) = h \int_0^T \varphi_t^0 (X_t^0) dt + \mathcal{O}(h^2).
$$
\subsection{Proof of Theorem \ref{sgdmomentumapproxmain}, SDE Case}
For $n \geq 2$, let $\tilde{\chi}_n$ be the process described by (\ref{modifiedlrsgd}) with constant learning rate and momentum parameter $\mathring{\eta}$ and $\mathring{\zeta}$, respectively. Suppose we are given initial conditions $x_1, x_0$. We rewrite this as a $2d$-dimensional process $\chi_n$,
where
\begin{equation}
    \chi_{n+1}^h = \chi_n^h + \eta_n^h J_{\gamma(n)}(\chi^h_n),
\end{equation}
with
$J_{\gamma(n)}(\chi_n) = \nabla j_{\gamma(n)} (\chi_n)$, where given $x \in \mathbb{R}^{2d}$,
$$
j(x) = f(x_1,...,x_d) + \sum_{i=1}^d \Bigl[ \eta^{-1}(1+\zeta)\frac{x_i^2}{2} + \zeta \eta^{-1}\frac{x_{d+i}^2}{2} - \eta^{-1}\zeta x_i  x_{d+i}\Bigr]
$$
and
$$
\eta^h = \begin{pmatrix}
\mathring{\eta} & & & & & \\
& \ddots & & & &\\
& & \mathring{\eta} & & & \\
 & & & -\mathring{\eta}\mathring{\zeta}^{-1} & &  \\
 & & & & \ddots & \\
 & & & & & -\mathring{\eta}\mathring{\zeta}^{-1}
\end{pmatrix}.
$$
Then, apply Theorem \ref{momentumsderesult}.
\section{Second Order SDE Approximation}\label{section2ndorder}
The techniques in the previous section may also be extended to higher order results, the statements of which may take the form of the following theorem.
\begin{theorem}
    Assume (A1),(A2), and (A3).  For all $h \in (0,1)$ let $X^h$ be the solution of
    $$
    d X_t^h = \biggl( u_t \bar{J}(X_t^h) - \frac{1}{2} h (y_t^2 \nabla \bar{J}\bar{J} + \dot{u}_t \bar{J})(X_t^h) \biggr) dt + u_t \sqrt{h \Sigma (X_t^h)} dW_t 
    $$
with initial condition $X_0^h = (x_1,x_0)$. Then for all $g \in G^\infty (\mathbb{R}^d)$ and $T > 0$,
$$
\max_{n \in \{ 0,...,\lfloor T/h \rfloor \}} \abs{\mathbb{E}g(\chi_n^h) - \mathbb{E} g(X_{nh}^h)} \in \mathcal{O}(h^2) 
$$
as $h \rightarrow 0$.
\end{theorem}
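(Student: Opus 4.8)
The plan is to run the Feynman--Kac/telescoping scheme of Theorem \ref{momentumsderesult}, but carried one Taylor order further and exploiting the $O(h)$ drift correction to annihilate the term that, in the first--order results, produced the leading $h\int_0^T\varphi\,dt$ contribution. Fix $g\in G^\infty$ and a target index $n_0\le\lfloor T/h\rfloor$; set the horizon $T_0=n_0h$ and define $y_t^h(x):=\mathbb{E}[g(X_{T_0}^{h,t}(x))]$. By the arguments behind Lemma \ref{yfunctionspace}, Theorem \ref{sdeexpindexswap}, and Lemma \ref{ding2}, $y^h\in G^\infty([0,T_0]\times\mathbb{R}^{2d})$ with derivatives of at most polynomial growth uniformly in $h$, and $y^h$ solves the backward equation $\partial_t y^h+\mathcal{L}_t^h y^h=0$, $y_{T_0}^h=g$, with $\mathcal{L}_t^h=b_t^h\cdot\nabla+\tfrac12 hu_t^2\operatorname{tr}(\Sigma\nabla^2\,\cdot\,)$ and $b_t^h=u_t\bar J-\tfrac12 h(u_t^2\nabla\bar J\,\bar J+\dot u_t\bar J)$. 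Since $y^h(t,X_t^h)$ is a martingale and $y_{T_0}^h=g$, the exact one-step relation $\mathbb{E}[y_{(k+1)h}^h(X_{(k+1)h}^h)\mid X_{kh}^h=z]=y_{kh}^h(z)$ holds, and telescoping along the chain gives $\mathbb{E}g(\chi_{n_0}^h)-\mathbb{E}g(X_{T_0}^h)=\sum_{k=0}^{n_0-1}\mathbb{E}\big[y_{(k+1)h}^h(\chi_{k+1}^h)-y_{kh}^h(\chi_k^h)\big]$, which reduces the global error to a sum of one-step local weak errors.

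Next I would estimate a single summand by conditioning on $\chi_k^h=z$ and writing it as the difference between expanding $y_{(k+1)h}^h(z+\Delta)$ under the discrete increment $\Delta\chi_k^h=h\,u_{kh}J_{\gamma(k)}(z)$ and under one exact SDE increment $\Delta X$ started from $z$; by the backward property the latter expansion reproduces $y_{kh}^h(z)$. Taylor expanding $y_{(k+1)h}^h(z+\cdot)$ in both cases, the coefficients are the conditional moments of the two increments, so the local error is governed by how well these moments agree. The discrete increment has conditional mean $hu_{kh}\bar J(z)$ exactly, conditional second moment $h^2u_{kh}^2(\bar J\bar J^T+\Sigma)(z)$ (using $E=\Sigma$ from Lemma \ref{varyingobjectivefunction}), and third and higher moments of order $h^3$ with polynomial-growth prefactors by Lemma \ref{supbound}. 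The continuous increment, by Lemma \ref{itotaylorexpansion} applied with $b^0=u_t\bar J$, $b^1=-\tfrac12(u_t^2\nabla\bar J\,\bar J+\dot u_t\bar J)$, and $\sigma=u_t\sqrt\Sigma$, has mean $hb^0+\tfrac12h^2(2b^1+\nabla b^0 b^0+\dot b^0)+O(h^3)$ and second moment $h^2(b^0(b^0)^T+\sigma\sigma^T)+O(h^3)$.

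The crux, and the step I expect to be the main obstacle, is verifying that these moments match to order $h^2$ with $O(h^3)$ remainder. The second moments agree because $\sigma\sigma^T=u_t^2\Sigma$ and $E=\Sigma$. The first moments agree precisely because the drift correction $b^1=-\tfrac12(u_t^2\nabla\bar J\,\bar J+\dot u_t\bar J)$ is chosen to make the It\^o--Taylor coefficient $2b^1+\nabla b^0 b^0+\dot b^0$ vanish for $b^0=u_t\bar J$: indeed $\nabla b^0 b^0=u_t^2\nabla\bar J\,\bar J$ and $\dot b^0=\dot u_t\bar J$, so the second-order term in the continuous mean cancels and leaves $hu_t\bar J+O(h^3)$, exactly the discrete mean. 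Higher moments are $O(h^3)$ on both sides. The delicate points are tracking the $2d$-dimensional diagonal structure of $U_t$ and the block form of $\bar J$ on $\mathbb{R}^{2d}$ without sign or index errors, keeping the Taylor remainders controlled by the uniform polynomial-growth bounds on the derivatives of $y^h$ and by the moment bound $\sup_h\|(\chi^h)^*\|_{p,\lfloor T/h\rfloor}\le C(1+|x|)$ of Lemma \ref{supbound}, and confirming that no hidden $h^2$ term survives once the matched moments are contracted against the third-order derivatives of $y^h$.

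Finally, since each of the $n_0\le\lfloor T/h\rfloor$ summands is $O(h^3)$ with a constant controlled by the $\|\cdot\|_{G_\kappa}$-type bounds on the derivatives of $y^h$ and by the uniform moment bound above, the full sum is $O(h^3)\cdot O(h^{-1})=O(h^2)$. All constants depend only on $g$, $T$, and the Lipschitz and growth data of $\bar J$ and $\sqrt\Sigma$ supplied by (A3$^{*}$), and are independent of $n_0$; taking the maximum over $n_0\in\{0,\dots,\lfloor T/h\rfloor\}$ therefore preserves the $O(h^2)$ bound, which is the assertion.
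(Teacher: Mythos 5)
Your proposal takes essentially the same route as the paper: the one-step moment-matching comparison you describe is precisely Lemma \ref{sdefunctiondiffbound} (applied with $\ell=2$ and $f=y^h_{(k+1)h}$), the It\^o--Taylor moment computation and the cancellation $2b^1+\nabla b^0 b^0+\dot b^0=0$ for $b^0=u_t\bar J$ is exactly what Lemma \ref{itotaylorexpansion} supplies, and the telescoping plus the $\mathcal{O}(h^3)\cdot\mathcal{O}(h^{-1})=\mathcal{O}(h^2)$ summation is the portion the paper defers to \cite{ankirchner2021stochasticapproximation}. Your reading of the drift correction as annihilating the second-order It\^o--Taylor coefficient (with $u_t^2$ in place of the paper's typographical $y_t^2$) and your use of Lemma \ref{supbound} to control the polynomial-growth constants uniformly in $n_0$ agree with the intended argument.
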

We begin by giving a proof of the following lemma.
\begin{lemma}\label{sdefunctiondiffbound}
Let $b,\sigma$, and $X$ be defined as in Lemma \ref{sdedifferencebound}. Now let $\ell \in \mathbb{N}$ and $k \in \mathbb{N}_0$. Suppose we are given a function
$$
f: [0,T] \times (0,1) \times \mathbb{N} \times \mathbb{R}^d  \rightarrow \mathbb{R}
$$
with $f_k^{i,h}(x) \leq C_f(1 + \abs{x}^{k+1})$ for all $x \in \mathbb{R}^d$ and that there exists a function $M \in G_\kappa(\mathbb{R}^d)$ for some $\kappa \in \mathbb{N}$ such that
\begin{align}
    \abs{\mathbb{E}(\Delta \chi_k^{h,k})^\alpha - \mathbb{E}(\Delta \tilde{E}_k^{h,k})^\alpha} &\leq h^{\ell + 1}M \quad \abs{\alpha} \leq \ell \label{taylorboundassumption1} \\
    \| \Delta \chi_k^{h,k} \|_p^{\ell + 1} , \| \Delta \tilde{X}_k^{h,k}\|_p^{\ell + 1}  &\leq h^{\ell + 1} M , \quad p \in  \{ 2, 2 \ell + 2 \} \label{taylorboundassumption2}
\end{align}
for all $h \in (0,1)$ and $k \in \{ 0,..., \lfloor T/h \rfloor \}$.
Then there exists a function $C\in G_\kappa(\mathbb{R}^d)$ such that for all $h \in (0,1)$, $t \in [0,T]$, $k \in \{ 0,..., \lfloor T/h \rfloor \}$,
$$
\abs{ \mathbb{E} f_k^{t,h}(\chi_{k+1}^{h,k}) - \mathbb{E}f_k^{t,h}(\tilde{X}^{h,k}_{k+1}) }  \leq h^{\ell + 1}C.
$$
\end{lemma}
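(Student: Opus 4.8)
The plan is to Taylor expand $f_k^{t,h}$ (which I abbreviate to $f$) to order $\ell$ about the common starting point $x$, apply the expansion to the two one-step increments $\Delta\chi_k^{h,k}=\chi_{k+1}^{h,k}-x$ and $\Delta\tilde X_k^{h,k}=\tilde X_{k+1}^{h,k}-x$, take expectations, and subtract. In integral (Lagrange) form the expansion splits into the terms of order $\le\ell$, which produce a weighted combination of the moment differences $\mathbb{E}(\Delta\chi_k^{h,k})^\alpha-\mathbb{E}(\Delta\tilde X_k^{h,k})^\alpha$ over $|\alpha|\le\ell$, and an order-$(\ell+1)$ remainder. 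Hypothesis (\ref{taylorboundassumption1}) is precisely tailored to the first group and hypothesis (\ref{taylorboundassumption2}) to the remainder, so the argument reduces to two estimates. Here I use that $f$ is $C^{\ell+1}$ with polynomially bounded derivatives, which holds for the functions $f$ arising in the applications.

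For the low-order terms I would bound
$$\Bigl|\sum_{|\alpha|\le\ell}\tfrac{1}{\alpha!}\partial_\alpha f(x)\bigl(\mathbb{E}(\Delta\chi_k^{h,k})^\alpha-\mathbb{E}(\Delta\tilde X_k^{h,k})^\alpha\bigr)\Bigr|\le\sum_{|\alpha|\le\ell}\tfrac{1}{\alpha!}|\partial_\alpha f(x)|\,h^{\ell+1}M(x)$$
by (\ref{taylorboundassumption1}). Since $\partial_\alpha f$ has polynomial growth and $M\in G_\kappa$, each summand lies in some $G_{\kappa'}$, so this group is at most $h^{\ell+1}C_1(x)$ with $C_1\in G_{\kappa'}$.

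The remainder is the main obstacle. A representative term for the discrete process is $\mathbb{E}\bigl[\int_0^1(1-s)^\ell\,\partial_\alpha f(x+s\Delta\chi_k^{h,k})(\Delta\chi_k^{h,k})^\alpha\,ds\bigr]$ with $|\alpha|=\ell+1$. I would apply Cauchy--Schwarz inside the expectation to separate $\|\partial_\alpha f(x+s\Delta\chi_k^{h,k})\|_2$ from $\|(\Delta\chi_k^{h,k})^\alpha\|_2$. Because $|(\Delta\chi_k^{h,k})^\alpha|\le|\Delta\chi_k^{h,k}|^{\ell+1}$, the second factor obeys $\|(\Delta\chi_k^{h,k})^\alpha\|_2\le\|\Delta\chi_k^{h,k}\|_{2\ell+2}^{\ell+1}$, and (\ref{taylorboundassumption2}) at $p=2\ell+2$ supplies the decisive factor $h^{\ell+1}M(x)$. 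The first factor is the delicate part: using polynomial growth $|\partial_\alpha f(y)|\le C(1+|y|^m)$ together with the uniform-in-$h$ moment bounds $\|\Delta\chi_k^{h,k}\|_p\le C(1+|x|)$ from Lemma \ref{supbound} and $\|\Delta\tilde X_k^{h,k}\|_p\le C(1+|x|)$ from Theorem \ref{basesdebound}, valid for every $p$, one checks that $\|\partial_\alpha f(x+s\Delta\chi_k^{h,k})\|_2$ is bounded by a polynomial in $x$ uniformly over $h\in(0,1)$ and $s\in[0,1]$. The identical estimate holds for $\tilde X$, so the remainder is at most $h^{\ell+1}C_2(x)$ with $C_2\in G_{\kappa''}$.

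Taking $C:=C_1+C_2$ and enlarging $\kappa$ if needed gives $C\in G_\kappa$ and the stated bound, uniformly in $h$, $t$, and $k$. The key point to get right is that the full power $h^{\ell+1}$ in the remainder must be extracted by pairing the $(\ell+1)$-fold product $(\Delta\chi_k^{h,k})^\alpha$ against the $L_{2\ell+2}$ bound in (\ref{taylorboundassumption2}) rather than the weaker $L_2$ bound, while the uniform moment bounds of Lemma \ref{supbound} and Theorem \ref{basesdebound} are what keep the derivative factor polynomially bounded independently of $h$.
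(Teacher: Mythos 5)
Your proposal is correct and follows essentially the same route as the paper's proof: Taylor expansion about $x$ to order $\ell$, with the low-order terms controlled by hypothesis (\ref{taylorboundassumption1}) together with polynomial growth of $\partial_\alpha f$, and the order-$(\ell+1)$ remainder controlled by pairing $(\Delta\chi_k^{h,k})^\alpha$ against the $L_{2\ell+2}$ bound in (\ref{taylorboundassumption2}) after a Cauchy--Schwarz separation. The only cosmetic differences are your use of the integral form of the remainder instead of the Lagrange form, and your appeal to Lemma \ref{supbound} and Theorem \ref{basesdebound} for the moment bound on the derivative factor where the paper extracts the same polynomial bound directly from hypothesis (\ref{taylorboundassumption2}) at $p=2$.
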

\begin{proof}
Firstly, note that
\begin{align*}
     f_k^{i,h}(\chi_{k+1}^{h,k}) - f_k^{i,h}(\tilde{X}^{h,k}_{k+1}) &= f_k^{i,h}(\chi_{k+1}^{h,k}) - f_k - ( f_k^{i,h}(\tilde{X}^{h,k}_{k+1}) - f_k ) .
\end{align*}
Then for any $x \in \mathbb{R}^d$, we have by Taylor's theorem, for every $h \in (0,1)$ and $k \in \{ 0,..., \lfloor T/h \rfloor \} $, there exist $\theta_{\Delta \chi_{k}^{h,k}},\theta_{\Delta \tilde{X}^{h,k}_{k}} \in (0,1)$ such that
\begin{align*}
     f_k^{i,h}(\chi_{k+1}^{h,k}) - f_k^{i,h}(\tilde{X}^{h,k}_{k+1}) &= \sum_{0 < \abs{\alpha} \leq \ell} \frac{1}{\alpha !} \partial_\alpha f_k \cdot ((\Delta\chi_k^{h,k})^\alpha - (\Delta \tilde{X}_k^{k,h})^\alpha) \\
     &\hspace{5mm} + \sum_{\abs{\beta} = \ell+ 1} \frac{1}{\beta !} \partial_\beta f_k ( x + \theta_{\Delta \chi}\Delta \chi ) \Delta \chi(x)^\beta \\
     &\hspace{5mm} + \sum_{\abs{\beta} = \ell+ 1} \frac{1}{\beta !} \partial_\beta f_k ( x + \theta_{\Delta \tilde{X}} \Delta \tilde{X} ) \Delta \tilde{X}(x)^\beta
\end{align*}
Since $f \in G_{\ell + 1}(\mathbb{R}^d)$, there exist constants $c_1 > 0$, $\lambda \in \mathbb{N}$ such that 
\begin{align*}
    \abs{\mathbb{E}[\partial_\beta f(x + \theta_{\Delta \tilde{X}^h}\Delta \tilde{X}^h(x))\Delta \tilde{X}^h(x)^\beta]} &\leq \| \partial_\beta f \|_{G_\lambda} \Bigl(1 + \abs{ x + \Delta \tilde{X}(x) }^\lambda \Bigr) \| \Delta \tilde{X}(x) \|^{\ell + 1}_{2\ell + 2} \\
    &\leq \| \partial_\beta f \|_{G_\lambda} \Bigl(1 + 2^{\lambda - 1}\abs{x}^\lambda \\
    &\hspace{5mm} + 2^{\lambda - 1} \| \Delta \tilde{X}(x) \|^\lambda_2 \Bigr)  \| \Delta \tilde{X}(x)\|_{2\ell + 2}^{\ell + 1}. \quad (\dagger)
\end{align*}
Furthermore, we have by (\ref{taylorboundassumption2})
\begin{align*}
(\dagger)   &\leq  c_1(1 + \abs{x}^\lambda + M )h^{\ell + 1} M .
\end{align*}
Similarly, for $\Delta \chi$, we have a constant $c_2 > 0$ for which
\begin{align*}
    \abs{\mathbb{E}[\partial_\beta f(x + \theta_{\Delta \chi^h}\Delta \chi^h(x))\Delta \chi^h(x)^\beta]} &\leq  c_2(1 + \abs{x}^\lambda + M )h^{\ell + 1} M
\end{align*}
Therefore, for some constant $c > 0$,
\begin{align*}
    \abs{ \mathbb{E} f_k^{i,h}(\chi_{k+1}^{h,k}) - \mathbb{E}f_k^{i,h}(\tilde{X}^{h,k}_{k+1}) } &\leq c \sum_{0 < \abs{\alpha} \leq \ell} \| \partial_\alpha f \|_{G_\kappa} (1 + \abs{x}^\kappa)h^{\ell+1}M &&\text{by (\ref{taylorboundassumption1})} \\
    &\hspace{5mm} + M \sum_{\abs{\beta} = \ell + 1} \| \partial_\beta f \|_{G_\kappa} ( 1 + \abs{x}^\kappa + M )h^{\ell+1}M . \\
    &\leq h^{\ell + 1} C,
\end{align*}
where $C \in G_\kappa(\mathbb{R}^d)$.
\end{proof}
\noindent The remainder of the second-order approximation proof follows largely the same as in \cite{ankirchner2021stochasticapproximation}.
\section{Conclusion}\label{sectionconclusion}
While benefits of momentum-based SGD methods have been observed in several other works, there is a lack of mathematical proof regarding its effectiveness and optimal scenarios. With the approximation results given in this paper, the dynamics of these methods can be better understood through stochastic calculus. The generalization of learning rate schedules opens up a wide variety of applications and extension to other methods such as Nesterov Acceleration.  
\bibliographystyle{plainnat}

\begin{thebibliography}{8}
\providecommand{\natexlab}[1]{#1}
\providecommand{\url}[1]{\texttt{#1}}
\expandafter\ifx\csname urlstyle\endcsname\relax
  \providecommand{\doi}[1]{doi: #1}\else
  \providecommand{\doi}{doi: \begingroup \urlstyle{rm}\Url}\fi

\bibitem[Ankirchner and Perko(2021)]{ankirchner2021stochasticapproximation}
Stefan Ankirchner and Stefan Perko.
\newblock Approximating stochastic gradient descent with diffusions: error expansions and impact of learning rate schedules, 2021.
\newblock URL \url{https://hal.science/hal-03262396v2/file/DiffApproxSGD_v2.pdf}.

\bibitem[Bottou(1991)]{Bottou1991StochasticGL}
L{\'e}on Bottou.
\newblock Stochastic gradient learning in neural networks.
\newblock 1991.
\newblock URL \url{https://api.semanticscholar.org/CorpusID:12410481}.

\bibitem[Garrigos and Gower(2024)]{garrigos2024handbookconvergencetheoremsstochastic}
Guillaume Garrigos and Robert~M. Gower.
\newblock Handbook of convergence theorems for (stochastic) gradient methods, 2024.
\newblock URL \url{https://arxiv.org/abs/2301.11235}.

\bibitem[Karatzas and Shreve(1991)]{karatzas1991brownian}
I.~Karatzas and S.~Shreve.
\newblock \emph{Brownian Motion and Stochastic Calculus}.
\newblock Graduate Texts in Mathematics (113) (Book 113). Springer New York, 1991.
\newblock ISBN 9780387976556.
\newblock URL \url{https://books.google.com/books?id=ATNy_Zg3PSsC}.

\bibitem[Kunita(2004)]{Kunita2004}
Hiroshi Kunita.
\newblock \emph{Stochastic Differential Equations Based on L{\'e}vy Processes and Stochastic Flows of Diffeomorphisms}, pages 305--373.
\newblock Birkh{\"a}user Boston, Boston, MA, 2004.
\newblock ISBN 978-1-4612-2054-1.
\newblock \doi{10.1007/978-1-4612-2054-1_6}.
\newblock URL \url{https://doi.org/10.1007/978-1-4612-2054-1_6}.

\bibitem[Li et~al.(2017)Li, Tai, and E]{li2017stochasticmodifiedequationsadaptive}
Qianxiao Li, Cheng Tai, and Weinan E.
\newblock Stochastic modified equations and adaptive stochastic gradient algorithms, 2017.
\newblock URL \url{https://arxiv.org/abs/1511.06251}.

\bibitem[Liu et~al.(2024)Liu, Chen, and Zhang]{liu2024odemethodstochasticapproximation}
Shuze Liu, Shuhang Chen, and Shangtong Zhang.
\newblock The ode method for stochastic approximation and reinforcement learning with markovian noise, 2024.
\newblock URL \url{https://arxiv.org/abs/2401.07844}.

\bibitem[Liu et~al.(2020)Liu, Gao, and Yin]{liu2020improvedanalysisstochasticgradient}
Yanli Liu, Yuan Gao, and Wotao Yin.
\newblock An improved analysis of stochastic gradient descent with momentum, 2020.
\newblock URL \url{https://dl.acm.org/doi/pdf/10.5555/3495724.3497257}.

\end{thebibliography}

\end{document}